\def\@begintheorem#1#2{\intheoremtrue
  \normalfont\rmfamily
  \trivlist
    \pagebreak[3]\item[\hskip \labelsep{\normalfont\itshape #1\ #2}]\samepage \item[] \samepage }
\def\@opargbegintheorem#1#2#3{\intheoremtrue
  \normalfont\rmfamily
  \trivlist
    \pagebreak[3]\item[\hskip \labelsep{\normalfont\itshape #1\ #2\ \ifrembrks #3\/\global\rembrksfalse\else {\upshape(}#3\/{\upshape)}\fi}]\samepage \item[] \samepage }
\def\@endtheorem{\endtrivlist\intheoremfalse}
\newcommand{\head}[1]{\ensuremath{\mathit{h}(#1)}} \newcommand{\Head}[1]{\ensuremath{\mathit{H}(#1)}}
\newcommand{\body}[1]{\ensuremath{\mathit{B}(#1)}}
\newcommand{\poslits}[1]{\ensuremath{{#1}^+}}
\newcommand{\neglits}[1]{\ensuremath{{#1}^-}}
\newcommand{\pbody}[1]{\poslits{\body{#1}}}
\newcommand{\nbody}[1]{\neglits{\body{#1}}}
  \providecommand{\logfont}{\textrm}
\newcommand{\HT}{\ensuremath{\logfont{HT}}}
\newcommand{\HTC}{\ensuremath{\logfont{HT}_{\!c}}}
 \newcommand{\true}{\ensuremath{\boldsymbol{T}}}
\newcommand{\tuple}[1]{\ensuremath{\langle #1 \rangle}}
 \providecommand{\sysfont}{\textit}
\newcommand{\clingcon}{\sysfont{clingcon}}
\newcommand{\clingo}{\sysfont{clingo}}
\newcommand{\clingodl}{\clingoM{dl}}
\newcommand{\clingolp}{\clingoM{lp}}
\newcommand{\clingoM}[1]{\clingo{\small\textnormal{[}\textsc{#1}\textnormal{]}}}
\providecommand{\C}{C}
 \renewcommand{\sysfont}{\texttt}
\newtheorem{proposition}{Proposition}
\newtheorem{theorem}{Theorem}
\newtheorem{corollary}{Corollary}
\newtheorem{lemma}{Lemma}
\newtheorem{example}{Example}
\newtheorem{definition}{Definition}
\newcommand{\den}[1]{\llbracket \, #1 \, \rrbracket}
\newcommand{\comp}[1]{\wideparen{#1}} \newcommand{\denAT}[1]{\llbracket \, #1 \, \rrbracket_\AT}
\newcommand{\tprogram}{$\mathcal{T}$\nobreakdash-program}
\newcommand{\tsolution}{$\langle\AT,\TheoryAtomsES\rangle$\nobreakdash-solution}
\newcommand{\Atoms}{\ensuremath{\mathcal{A}}}
\newcommand{\TheoryAtoms}{\ensuremath{\mathcal{T}}}
\newcommand{\TheoryAtomsDN}{\ensuremath{\mathcal{F}}}
\newcommand{\TheoryAtomsES}{\ensuremath{\mathcal{E}}}
\newcommand{\htcsemanticsP}{\ensuremath{\uptau}}
\newcommand{\htcsemanticsA}{\ensuremath{\uptau}}
\newcommand{\htctsols}{\ensuremath{\Phi}}
\newcommand{\X}{\ensuremath{\mathcal{X}}}
\newcommand{\V}{\ensuremath{\mathcal{V}}}
\newcommand{\D}{\ensuremath{\mathcal{D}}}
\newcommand{\Du}{\ensuremath{\mathcal{D}_{\undefined}}}
\renewcommand{\C}{\ensuremath{\mathcal{C}}}
\newcommand{\XAT}{\ensuremath{\X_\AT}}
\newcommand{\DAT}{\ensuremath{\D_\AT}}
\newcommand{\VAT}{\ensuremath{\V_\AT}}
\newcommand{\VARSAT}[1]{\ensuremath{\varsATfunction_\AT(#1)}}
\newcommand{\AT}{\ensuremath{\mathfrak{T}}}
\newcommand{\LC}{\ensuremath{\mathfrak{L}}}
 \newcommand{\DIE}{\ensuremath{\mathfrak{DE}}}
\newcommand{\vars}[1]{\ensuremath{\mathit{vars}(#1)}}
\newcommand{\varsTfunction}[1]{\ensuremath{\mathit{vars}_{#1}}}
\newcommand{\varsT}[2]{\ensuremath{\varsTfunction{#1}(#2)}}
\newcommand{\varsATfunction}{\mathit{vars}} \newcommand{\varsAT}[1]{\mathit{vars}(#1)} 
\newcommand{\undefined}{\ensuremath{\mathbf{u}}} \newcommand{\df}[1]{\mathit{def}(#1)}
\newcommand{\eqdef}{\mathrel{\vbox{\offinterlineskip\ialign{\hfil##\hfil\cr $\scriptscriptstyle\mathrm{def}$\cr \noalign{\kern1pt}$=$\cr \noalign{\kern-0.1pt}}}}}
\newcommand{\restr}[2]{{#1|}_{\hspace{-1pt}#2}}
\newcommand{\code}[1]{\ensuremath{\mathtt{#1}}} \newcommand{\entails}{\mathbin{|\kern -.42em\approx}}
\newcommand{\nentails}{\mathbin{|\kern -.47em\approx\kern-.9em{/}}}
\newcommand{\Answer}[1]{\ensuremath{\mathit{ans}(#1)}}
\newcommand{\StableP}[2]{\ensuremath{\mathit{stb}_{#1}(#2)}}
\newcommand{\Stable}[1]{\StableP{P}{#1}}
\def\df{\mathit{def}}
\newcommand\Dom[2]{#2:#1}
\newcommand\SEM[1]{\sigma(\AT,#1)}
\def\Sat{\mathit{Sat}}
\def\Leq{\mbox{\tt <=}}
\def\Geq{\mbox{\tt >=}}
\def\Neq{\mbox{\tt !=}}
\def\Grt{\mbox{\tt >}}
\def\Low{\mbox{\tt <}}
\def\Equ{\mbox{\tt =}}
\newcommand{\Sum}[1]{\code{\&sum\{} #1 \code{\}}}
\newcommand{\prop}[1]{{#1}}
\def\anyAt{b}
\def\vregAt{a}
\def\regAt{\code{\vregAt}}
\def\thAt{\code{s}}
\def\Comp{\mathit{Comp}}
\def\dom{\mathit{dom}}
\newcommand{\uvaluation}{valuation}
\newcommand{\uvaluations}{valuations}
\newcommand{\Vu}{\ensuremath{\mathcal{V}}}
\newcommand{\DOM}[1]{\ensuremath{\delta(#1)}} \newcommand{\htctheory}{\HTC\nobreakdash-theory}
\renewcommand{\true}{\ensuremath{\mathbf{t}}}
\newcommand{\IN}{\mathbin{\hat{\in}}}
\newcommand{\map}[2]{#1 \mapsto #2}
 \newcommand{\comment}[1]{} 
\begin{document}

\title[Strong Equivalence in ASP with Constraints]{Strong Equivalence in Answer Set Programming with Constraints}

\begin{authgrp}
\author{\sn{Pedro} \gn{Cabalar}}
\affiliation{University of Corunna, Spain}
\author{\sn{Jorge} \gn{Fandinno}}
\affiliation{University of Nebraska Omaha, USA}
\author{\sn{Torsten} \gn{Schaub}}
\affiliation{University of Potsdam, Germany}\affiliation{Potassco Solutions, Germany}
\author{\sn{Philipp} \gn{Wanko}}
\affiliation{Potassco Solutions, Germany}
\end{authgrp}

\maketitle

\begin{abstract}
  We investigates the concept of strong equivalence within the extended framework of Answer Set Programming with constraints.
Two groups of rules are considered strongly equivalent if, informally speaking, they have the same meaning in any context.
We demonstrate that, under certain assumptions, strong equivalence between rule sets in this extended setting can be precisely characterized by
  their equivalence in the logic of Here-and-There with constraints.
Furthermore, we present a translation from the language of several \clingo-based answer set solvers that handle constraints
  into the language of Here-and-There with constraints.
  This translation enables us to leverage the logic of Here-and-There to reason about strong equivalence within the context of these solvers.
We also explore the computational complexity of determining strong equivalence in this context.
\end{abstract}
 \section{Introduction}\label{sec:introduction}

Many real-world applications have a heterogeneous nature
that can only be effectively handled by combining different types of constraints.
This is commonly addressed by hybrid solving technology,
most successfully in the area of Satisfiability modulo Theories (SMT;~\citealp{niolti06a})
with neighboring areas such as Answer Set Programming (ASP;~\citealp{lifschitz08b}) following suit.
However, this increased expressiveness introduces a critical challenge:
How can we determine whether modifications to a heterogeneous specification preserve its semantic meaning,
in view of the intricate interplay of different constraint types?
This is even more severe in ASP since its nonmonotonic nature does not preserve equivalence
when substituting a part of a specification by an equivalent one.
For this, one generally needs a stronger concept of equivalence that guarantees that, informally speaking,
two expressions have the same meaning in any context.
Formally, this is referred to as \emph{strong equivalence}~\citep{lipeva01a}.
Such properties are important because they can help us simplify parts of a logic program in a modular way,
without examining its other parts.

This paper delves into the question of strong equivalence within the extended framework of Constraint Answer Set Programming (CASP;~\citealp{lierler23a}),
where the integration of linear arithmetic constraints over integers introduces new difficulties.
To address this challenge, we harness the logic of Here-and-There with constraints~(\HTC; \citealp{cakaossc16a,cafascwa20a,cafascwa20b}),
a powerful formalism that allows us to precisely characterize extensions of ASP with external theories over arbitrary domains.
By employing \HTC, we aim to develop a formal method for analyzing strong equivalence in CASP,
contributing to a deeper understanding of program optimization in this increasingly important paradigm.

As an example,
consider the following very simple ASP program with constraints in the language of the hybrid solver~\clingcon~\citep{bakaossc16a}:
\begin{align*}
    &\texttt{:- not a, \&sum\{s\} >= 120.}
    \\
    &\texttt{a :- \&sum\{s\} > 100.}
\end{align*}
The Boolean atom~$\texttt{a}$ is true when an alarm is set in the car and
$\texttt{s}$ is an integer variable representing the speed of the car.
The first rule tells us that if the alarm is not set, we cannot choose speeds exceeding 120~km/h.
The second one states that if the speed is greater than 100~km/h, then the alarm must be set.
We are interested in answering the question of whether the first rule is redundant in the presence of the second rule.
In terms of strong equivalence, if the pair of rules together are strongly equivalent to the second rule alone.
As we discuss in Section~\ref{sec:strongeq}, this is indeed the case.

Our main result is a characterization of strong equivalence in the context of ASP with constraints,
given in Section~\ref{sec:stronge.lp}.
The proof of this result is based on two intermediate results that are of independent interest:
A characterization of strong equivalence in the context of \HTC\ developed in Section~\ref{sec:strongeq} and
an answer set preserving transformation of logic programs with constraints into~\HTC\nobreakdash-theories
given in Section~\ref{sec:lpabstract}.
We also study the computational complexity of deciding strong equivalence in this context in Section~\ref{sec:complexity}.

 \section{The Logic of Here-and-there with Constraints}\label{sec:htc}
\providecommand{\C}{\TheoryAtoms}
The logic of \emph{Here-and-There with constraints} (\HTC) along with its equilibrium models
provides logical foundations for constraint satisfaction problems (CSPs) in the setting of ASP.
Its approach follows the one of the traditional Boolean
Logic of Here-and-There~\citep{heyting30a} and its nonmonotonic extension of Equilibrium Logic~\citep{pearce96a}.
In \HTC, a CSP is expressed as a triple $\tuple{\X,\D,\C}$, also called \emph{signature},
where
\X\ is a set of \emph{variables} over some non-empty \emph{domain} \D\
and \C\ is a set of \emph{constraint atoms}.
A constraint atom provides an abstract way to relate values of variables and constants
according to the atom's semantics
(just as in lazy SMT~\citep{niolti06a}).
Most useful constraint atoms have a structured syntax, but in the general case, we may simply consider them as strings.
For instance, linear equations are expressions of the form ``$x+y=4$'',
where $x$ and $y$ are variables from~\X\ and $4$ is a constant representing some element from \D.

Variables can be assigned some value from $\D$ or left \emph{undefined}.
For the latter, we use the special symbol $\undefined \notin \D$ and the extended domain $\Du \eqdef \D \cup \{\undefined\}$.
The set $\mathit{vars}(c) \subseteq \X$
collects all variables occurring in constraint atom $c$.
We assume that every constraint atom~$c$ satisfies~$\vars{c} \neq \emptyset$
(otherwise, it is just a truth constant).

A \emph{\uvaluation} $v$ over $\X,\D$ is a function $v:\X\rightarrow\Du$.
We let~$\Vu^{\X,\D}$, or simply $\Vu$ when clear from the context, stand for the set of all \uvaluations\ over~$\X,\D$.
Moreover, \uvaluation\ $v|_X: X\rightarrow\Du $ is the projection of $v$ on $X \subseteq \X$.
Accordingly, for any set of valuations $V\subseteq\Vu$,
we define their restriction to $X$ as $\restr{V}{X} \eqdef \{\restr{v}{X} \mid v \in V\}$.
A valuation $v$ can be viewed as a set of pairs of the form
\(
\{ \map{x}{v(x)} \mid x \in \X, v(x)\in\D\}
\),
which drops any pairs $\map{x}{\undefined}$ for $x\in\X$.
This allows us to use standard set inclusion for comparison.
In view of this, $v\subseteq v'$ stands for
\begin{align*}
  \{ \map{x}{v(x)} \mid x \in \X, v (x)\in\D\}
  \ \subseteq \
  \{ \map{x}{v'(x)} \mid x \in \X, v'(x)\in\D\}.
\end{align*}
This is equivalent to: $v(x) \in \D$ implies $v'(x)=v(x)$ for all $x \in \X$.
We define the \emph{domain} of a valuation $v$ as the set of variables that have a defined value,
namely,
$$\dom(v) \eqdef \{x \in \X \mid v(x)\neq \undefined\}.$$
We also allow for applying a \uvaluation~$v$ to fixed domain values,
and so extend their type to
\(
v:\X\cup \Du\rightarrow\Du
\)
by fixing $v(d) = d$ for any $d \in \D_\undefined$.

The semantics of constraint atoms is defined in \HTC\ via \emph{denotations},
which are functions of form
\(
\den{\cdot}:\C\rightarrow 2^{\Vu}
\),
mapping each constraint atom to a set of \uvaluations.
Following~\citep{cafascwa20a}, we assume in the sequel that they satisfy the following properties
for all
$c\in\C$,
$x\in\X$, and
${v,v' \in \Vu}$:
\begin{enumerate}
\item $v \in \den{c}$ and $v \subseteq v'$ imply $v' \in \den{c}$,
  \label{den:prt:0}
\item $v \in \den{c}$ implies $v \in \den{c[x/v(x)]}$,
  \label{den:prt:1}
\item if $v(x)=v'(x)$ for all $x \in \mathit{vars}(c)$ then $v \in \den{c}$ iff $v' \in \den{c}$.
  \label{den:prt:2}
\end{enumerate}
where $c[s/s']$ is the syntactic replacement in $c$ of subexpression~$s$ by~$s'$.
We also assume that~$c[x/d] \in \C$ for any constraint atom~$c[x] \in \C$, variable~$x \in \X$ and~$d \in \Du$.
That is, replacing a variable by any element of the extended domain results in a syntactically valid constraint atom.
Intuitively,
Condition~\ref{den:prt:0} makes constraint atoms behave monotonically with respect to definedness.
Condition~\ref{den:prt:1} stipulates that denotations respect the role of variables as placeholders for values, that is,
replacing variables by their assigned values does not change how an expression is evaluated.
Condition~\ref{den:prt:2} asserts that the denotation of $c$ is fixed by combinations of values for $\vars{c}$;
other variables may freely vary.

A formula $\varphi$ over signature $\tuple{\X,\D,\C}$ is defined as
\begin{align*}
  \varphi::= \bot \mid c\mid \varphi \land \varphi \mid  \varphi \lor \varphi \mid  \varphi \rightarrow \varphi \quad\text{ where }c\in\C.
\end{align*}
We define $\top$ as $\bot \rightarrow \bot$ and $\neg\varphi$ as $\varphi \rightarrow \bot$ for any formula~$\varphi$.
We let $\vars{\varphi}$ stand for the set of variables in \X\ occurring in all constraint atoms in formula~$\varphi$.
An \emph{\htctheory} is a set of formulas.

In \HTC,
an \emph{interpretation} over $\X,\D$ is a pair $\langle h,t \rangle$
of \uvaluations\ over $\X,\D$ such that $h\subseteq t$.
The interpretation is \emph{total} if $h=t$.
Given a denotation $\den{\cdot}$,
  an interpretation $\langle h,t \rangle$ \emph{satisfies} a formula~$\varphi$,
  written $\langle h,t \rangle \models \varphi$,
  if \begin{enumerate}
\item $\langle h,t \rangle \models c \text{ if } h\in \den{c}$ \label{item:htc:atom}
  \item $\langle h,t\rangle \models \varphi \land \psi \text{ if }  \langle h,t\rangle \models \varphi \text{ and }  \langle h,t\rangle \models \psi$
  \item $\langle h,t\rangle \models \varphi \lor \psi \text{ if }  \langle h,t\rangle \models \varphi \text{ or }  \langle h,t\rangle \models \psi$
  \item $\langle h,t\rangle \models \varphi \rightarrow \psi
    \text{ if }\langle w,t\rangle \not\models \varphi \text{ or }\langle w,t\rangle \models \psi
    \text{ for }w\in\{h,t\}$
  \end{enumerate}
We say that an interpretation~$\tuple{h,t}$ is a model of a theory~$\Gamma$,
written $\tuple{h,t} \models \Gamma$,
when $\tuple{h,t} \models \varphi$ for every $\varphi \in \Gamma$.
We write~$\Gamma \models \Gamma'$ when every model of~$\Gamma$ is also a model of~$\Gamma'$.
We write $\Gamma \equiv \Gamma'$ if $\Gamma$ and $\Gamma'$ have the same models.
We omit braces whenever a theory is a singleton.

\begin{proposition}[\mbox{\citealp[Proposition 3]{cakaossc16a}}]\label{prop:htc.basic}
For any formula $\varphi$, we have
\begin{enumerate}
\item $\tuple{h,t} \models \varphi$ implies $\tuple{t,t} \models \varphi$,
\item $\tuple{h,t} \models \neg \varphi$ iff $\tuple{t,t} \not\models \varphi$, and
\item any tautology in \HT\ is also a tautology in \HTC.
\end{enumerate}
\end{proposition}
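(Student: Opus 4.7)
The plan is to prove item~(1) by structural induction on $\varphi$, derive item~(2) immediately from~(1) and the definition of negation, and establish item~(3) by a straightforward embedding of \HT\ into \HTC\ that treats each constraint atom as a propositional atom.

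For item~(1), I proceed by induction on $\varphi$. The case $\varphi = \bot$ is vacuous since no interpretation satisfies $\bot$. The only nontrivial base case is $\varphi = c$ for a constraint atom $c \in \C$: if $\tuple{h,t} \models c$ then $h \in \den{c}$, and since $h \subseteq t$ by definition of an interpretation, Condition~\ref{den:prt:0} of denotations yields $t \in \den{c}$, so $\tuple{t,t} \models c$. The cases $\varphi = \psi_1 \land \psi_2$ and $\varphi = \psi_1 \lor \psi_2$ follow immediately from the induction hypothesis. For $\varphi = \psi_1 \rightarrow \psi_2$, the semantic clause already quantifies over $w \in \{h,t\}$, so specializing to $w = t$ gives exactly the requirement for $\tuple{t,t}$ without invoking the induction hypothesis.

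For item~(2), unfolding $\neg\varphi$ as $\varphi \rightarrow \bot$, the definition of implication says that $\tuple{h,t} \models \neg\varphi$ iff $\tuple{w,t} \not\models \varphi$ for every $w \in \{h,t\}$. The forward direction yields $\tuple{t,t} \not\models \varphi$ by taking $w = t$. Conversely, if $\tuple{t,t} \not\models \varphi$, then the contrapositive of item~(1) gives $\tuple{h,t} \not\models \varphi$ as well, so both required conditions hold.

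For item~(3), given any \HTC\nobreakdash-interpretation $\tuple{h,t}$ over signature $\tuple{\X,\D,\C}$, I would define an associated \HT\nobreakdash-interpretation $\tuple{H,T}$ over the propositional signature $\C$ by $H \eqdef \{c \in \C \mid h \in \den{c}\}$ and $T \eqdef \{c \in \C \mid t \in \den{c}\}$. Condition~\ref{den:prt:0} of denotations together with $h \subseteq t$ guarantees $H \subseteq T$, so this is a valid \HT\nobreakdash-interpretation. A routine induction on $\varphi$ then shows that $\tuple{h,t} \models \varphi$ in \HTC\ iff $\tuple{H,T} \models \varphi$ in \HT, since the two satisfaction relations coincide on atoms by construction and the clauses for the connectives are identical. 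Consequently, any \HT\nobreakdash-tautology is satisfied by every \HTC\nobreakdash-interpretation, hence is an \HTC\nobreakdash-tautology. The only subtle point is verifying this atomic correspondence and inheriting $H \subseteq T$ from $h \subseteq t$; after that, the rest is a purely syntactic induction that will likely be stated without proof.
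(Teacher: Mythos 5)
Your proof is correct. The paper does not prove this proposition itself---it imports it by citation from the earlier work of Cabalar et al.\ (2016)---but your argument is the standard one: persistence by structural induction with Condition~1 on denotations handling the atomic case, item~(2) as a direct corollary of item~(1) and the implication clause, and item~(3) via the atom-level collapse to \HT. The only point worth making explicit in item~(3) is that the induction claim must be quantified over all interpretations simultaneously, so that in the implication case you can invoke the hypothesis both at $\tuple{h,t}$ (mapped to $\tuple{H,T}$) and at $\tuple{t,t}$ (mapped to $\tuple{T,T}$); since you define $H$ and $T$ uniformly from $h$ and $t$, this goes through without further work.
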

The first item reflects the well known Persistence property in constructive logics.
The second one tells us that negation is only evaluated in the there world.
The third item allows us to derive the strong equivalence of two expressions in \HTC\ from their equivalence in \HT\
when treating constraint atoms monolithically, though more equivalences can usually be derived using~\HTC.

\HTC\ makes few assumptions about the syntactic form or the semantics of constraint atoms.
In the current paper, however, we introduce a specific kind of constraint atom that is useful later on for some of the formal results.
Given a subset $\D'\subseteq\D$, we define the associated\footnote{Technically, $\D'$ must not be thought as an element of the atom syntax but as part of the atom name,
  i.e.\ we have a different atom for each $\D'\subseteq\D$).}
constraint atom ${\Dom{\D'}{x}}$ with denotation:
\begin{eqnarray*}
\den{\Dom{\D'}{x}} \ \eqdef \ \{ v \in \Vu^{\X,\D} \mid v(x) \in \D'\}
\end{eqnarray*}
and~$\vars{\Dom{\D'}{x}}=\{x\}$,
that is, ${\Dom{\D'}{x}}$ asserts that $x$ has some value in subdomain~$\D'$.
Note that ${v(x) \in \D'}$ implies that $x$ is defined in $v$, since ${\undefined \not\in \D \supseteq \D'}$.
We use the abbreviation $\df(x)$ to stand for $\Dom{\D}{x}$, so that this atom holds iff $x$ has some value,
i.e., it is not undefined, or $v(x) \neq \undefined$.

The nonmonotonic extension of \HTC\ is defined in terms of equilibrium models,
being minimal models in \HTC\ in the following sense.
\begin{definition}[Equilibrium/Stable model]\label{def:eqmodel}
A (total) interpretation $\langle t,t\rangle$ is an \emph{equilibrium model} of a theory~$\Gamma$,
if $\langle t,t\rangle \models \Gamma$ and there is no $h\subset t$
such that $\langle h,t\rangle \models \Gamma$.

If~$\tuple{t,t}$ is an equilibrium model of~$\Gamma$,
then we say that~$t$ is a \emph{stable model} of~$\Gamma$.
\end{definition}

As detailed by~\cite{cakaossc16a},
the original logic of Here-and-There can be obtained as a special case of \HTC\ with a signature $\tuple{\X,\D,\C}$,
where $\X=\Atoms$ represents logical propositions, the domain $\D=\{\mathbf{t}\}$ contains a unique value (standing for ``true'') and the set of constraint atoms $\C=\Atoms$ coincides with the set of propositions.
In this way, each logical proposition $\regAt$ becomes both a constraint atom $\regAt \in \C$ and a homonymous variable $\vregAt \in \X$ so that $\vars{\regAt}=\{\vregAt\}$ (we use different fonts for the same name to stress the different role).
The denotation of a regular atom is fixed to
\begin{gather*}
  \den{\regAt} \eqdef \den{\Dom{\{\mathbf{t}\}}{\vregAt}} = \{v \in \Vu^{\X,\D} \mid v(\vregAt)=\mathbf{t} \}
\end{gather*}
Moreover, we can establish a one-to-one correspondence between any propositional interpretation,
represented as a set $X$ of regular atoms, and the valuation $v$ that assigns~$\mathbf{t}$ to all members of~$X$,
i.e., $X=\{\regAt \mid v(\vregAt)=\mathbf{t} \}$.
Note that a false atom, viz.\ $\regAt \not\in X$, is actually undefined in the valuation, viz.\ $v(\vregAt)=\undefined$,
since having no value is the default situation.\footnote{Allowing the assignment of a second truth value $\mathbf{f} \in \D$ for variable $\vregAt$
  would rather correspond to the explicit negation $-\regAt$ of the regular atom.}
Once this correspondence is established,
it is easy to see that the definition of equilibrium and stable models in Definition~\ref{def:eqmodel}
collapses to their standard definition for propositional theories (and also, logic programs in ASP).

Treating logical propositions as constraint atoms allows us not only to capture standard ASP
but also to combine regular atoms with other constraint atoms in a homogeneous way,
even when we deal with a larger domain $\D \supset \{\mathbf{t}\}$ such as, for instance, $\D = \mathbb{Z} \cup \{\mathbf{t}\}$.
In this setting, we may observe that while $\den{\regAt} \subseteq \den{\df(\vregAt)}$
(i.e., if the atom has value~$\mathbf{t}$, it is obviously defined),
the opposite $\den{\df(\vregAt)} \subseteq \den{\regAt}$ does not necessarily hold.
For instance, we may have now some valuation $v(\vregAt)=7$, and so $v \in \den{\df(\vregAt)}$, but $v \not\in \den{\regAt}$.
For this reason, we assume the inclusion of the following axiom
\begin{eqnarray}
\df(\vregAt) \to \regAt \label{f:booldomain}
\end{eqnarray}
to enforce $\den{\df(\vregAt)}=\den{\regAt}$ for each regular atom $\regAt$.
This forces regular atoms to be either true or undefined.
It does not constrain non\nobreakdash-regular atoms from taking any value in the domain.

Finally, one more possibility we may consider in \HTC\ is treating all constraint atoms as regular atoms,
so that we do not inspect their meaning in terms of an external theory but only consider their truth as propositions.
\begin{definition}[Regular stable model]\label{def:regularsm}
Let $\Gamma$ be an \HTC\ theory over signature $\tuple{\X,\D,\C}$.
A set of atoms $X \subseteq \C$ is a \emph{regular stable model} of $\Gamma$
if the valuation $t=\{\map{\regAt}{\mathbf{t}} \mid \regAt \in X\}$ is a stable model of $\Gamma$ over signature $\tuple{\C,\{\mathbf{t}\},\C}$
while fixing the denotation $\den{\regAt} \eqdef \den{\Dom{\{\mathbf{t}\}}{\vregAt}}$ for every $\regAt \in \C$.
\end{definition}
In other words, regular stable models are the result of considering an \HTC-theory $\Gamma$ as a propositional \HT-theory
where constraint atoms are treated as logical propositions.
This definition is useful in defining the semantics of logic programs with constraints according to \clingo\ (see Section~\ref{sec:clingo5}).

 \section{Strong Equivalence in \HTC}\label{sec:strongeq}

One of the most important applications of \HT\ is its use in ASP for equivalent transformations among different programs or fragments of programs.
In general, if we want to safely replace program $P$ by $Q$, it does not suffice to check that their sets of stable models coincide, because the semantics of ASP programs cannot be figured out by looking at single rules in isolation.
We can easily extrapolate this concept to \HTC\ as follows:
\begin{definition}[Strong equivalence]
\HTC-Theories $\Gamma$ and $\Gamma'$ are \emph{strongly equivalent} when $\Gamma \cup \Delta$ and $\Gamma' \cup \Delta$ have the same stable models for any arbitrary \HTC-theory~$\Delta$.
\end{definition}
Theory $\Delta$ is sometimes called a \emph{context}, so that strong equivalence guarantees that $\Gamma$ can be replaced by $\Gamma'$ in any context (and vice versa).
An important property of \HT\ proved by~\citeN{lipeva01a} is that two regular programs $P$ and $Q$ are strongly equivalent
if and only if they are equivalent in \HT.

Accordingly, we may wonder whether a similar result holds for \HTC.
The following theorem is an immediate result:
the equivalence  $\Gamma \equiv \Gamma'$ in \HTC\ implies that \HTC-theories $\Gamma$ and $\Gamma'$ are strongly equivalent.
\begin{theorem}\label{th:strongeq-suf}
  If \HTC-theories $\Gamma$ and $\Gamma'$ are equivalent in \HTC, that is, $\Gamma \equiv \Gamma'$,
  then $\Gamma$ and $\Gamma'$ are strongly equivalent.
\end{theorem}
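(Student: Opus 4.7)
The plan is to unfold the definitions of strong equivalence and stable model and verify that the hypothesis $\Gamma \equiv \Gamma'$ allows us to swap $\Gamma$ and $\Gamma'$ inside any context uniformly.

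First I would fix an arbitrary \HTC\nobreakdash-theory $\Delta$ (the context) and an arbitrary valuation $t$, and aim to show that $t$ is a stable model of $\Gamma \cup \Delta$ if and only if $t$ is a stable model of $\Gamma' \cup \Delta$. By Definition~\ref{def:eqmodel}, this amounts to proving the conjunction of two conditions on both sides: (i) $\langle t,t\rangle \models \Gamma \cup \Delta$, and (ii) there is no $h \subsetneq t$ with $\langle h,t\rangle \models \Gamma \cup \Delta$, and analogously for $\Gamma'$.

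Next I would use the fact that satisfaction distributes over unions of theories, since by definition $\tuple{h,t} \models \Gamma \cup \Delta$ precisely when $\tuple{h,t} \models \Gamma$ and $\tuple{h,t} \models \Delta$. This lets me split each of the conditions (i) and (ii) into a part concerning $\Gamma$ (or $\Gamma'$) and a part concerning $\Delta$. The $\Delta$\nobreakdash-part is obviously the same on both sides.

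Finally I would invoke the hypothesis $\Gamma \equiv \Gamma'$, which by the definition of $\equiv$ tells us that for \emph{every} interpretation $\tuple{w,t}$ we have $\tuple{w,t} \models \Gamma$ iff $\tuple{w,t} \models \Gamma'$. Applying this both at $w = t$ (for condition~(i)) and at each $w = h$ with $h \subseteq t$ (for condition~(ii)) yields that the two conjunctions are equivalent. Hence $t$ is stable for $\Gamma \cup \Delta$ iff $t$ is stable for $\Gamma' \cup \Delta$, and since $\Delta$ was arbitrary, $\Gamma$ and $\Gamma'$ are strongly equivalent.

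I do not anticipate a real obstacle in this direction: the argument is a direct bookkeeping of definitions, and nothing specific to constraint atoms or their denotations is needed. The nontrivial direction would be the converse (not claimed here), which would require constructing a separating context whenever $\Gamma \not\equiv \Gamma'$ in \HTC, and that is where the subtleties of \HTC\ with partial valuations would come into play.
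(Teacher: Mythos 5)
Your proof is correct and follows essentially the same route as the paper's: both observe that $\Gamma\equiv\Gamma'$ forces $\Gamma\cup\Delta$ and $\Gamma'\cup\Delta$ to have the same \HTC-models, and that equilibrium (hence stable) models are determined entirely by the set of \HTC-models. You merely unfold the definitional bookkeeping that the paper compresses into the remark that ``equilibrium models are a selection among \HTC\ models.''
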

\begin{proof}
Assume~$\Gamma \equiv \Gamma'$ and take any arbitrary theory~$\Delta$.
Then,
$\Gamma$ and~$\Gamma'$ have the same models and, as a result, $\Gamma \cup \Delta$ and~$\Gamma' \cup \Delta$ have also the same models.
Since equilibrium models are a selection among \HTC\ models, their equilibrium and stable models also coincide.\end{proof}

For illustration,
let us use Theorem~\ref{th:strongeq-suf} to prove the strong equivalence of two simple \HTC\nobreakdash-theories.
Consider the following two formulas,
similar to the ones from the introductory section:
\begin{align}
    &\bot\leftarrow \neg \regAt \wedge s \geq 120
    \label{eq:ex.htc.contraint}
    \\
    &\regAt \leftarrow s > 100
    \label{eq:ex.htc.rule}
\end{align}
In fact, we show in Section~\ref{sec:htcsemantics} that
these two formulas are an essential part of the \HTC-based translation of the logic program mentioned in the introduction.
Let us show that~$\Gamma = \{ \eqref{eq:ex.htc.contraint}, \eqref{eq:ex.htc.rule}\}$ is strongly equivalent to~$\Gamma' = \{ \eqref{eq:ex.htc.rule}\}$.
By Theorem~\ref{th:strongeq-suf}, it suffices to show that~$\Gamma \equiv \Gamma'$ and, since~$\Gamma' \subseteq \Gamma$, it suffices to show that~$\eqref{eq:ex.htc.rule} \to \eqref{eq:ex.htc.contraint}$ is an \HTC\nobreakdash-tautology.
The following property is useful to prove this result.

Let $\varphi[c/\psi]$ be the uniform replacement of a constraint atom $c$ occurring in formula $\varphi$ by a formula $\psi$.
We show below that \HTC\ satisfies the rule of uniform substitution.
\begin{proposition}\label{prop:uniform-subs}
If $\varphi[c]$ is an \HTC\ tautology then $\varphi[c/\psi]$ is an \HTC-tautology.
\end{proposition}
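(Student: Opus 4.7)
The plan is to argue by contradiction using two ingredients: a local replacement lemma and the fabrication of a fresh constraint atom that simulates $\psi$ at the two relevant worlds. The lemma states: for any constraint atom $\gamma$, any formula $\beta$, and any interpretation $\tuple{h,t}$ under some denotation, if $\tuple{w,t}\models \gamma\leftrightarrow\beta$ for both $w\in\{h,t\}$, then for every formula $F$, $\tuple{w,t}\models F$ iff $\tuple{w,t}\models F[\gamma/\beta]$ for both $w\in\{h,t\}$. This is proved by a routine structural induction on $F$; the only nontrivial case is $\to$, and it is precisely why the hypothesis is required at \emph{both} worlds, since the \HTC\ clause for implication inspects $\tuple{w',t}$ for $w'\in\{w,t\}\subseteq\{h,t\}$.

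Suppose, for contradiction, that $\tuple{h,t}\not\models\varphi[c/\psi]$ under some signature $\tuple{\X,\D,\C}$ and denotation $\den{\cdot}$. I would extend the signature by a fresh atom $c^*$ that does not occur in $\C$ or in $\psi$, set $\vars{c^*}\eqdef\dom(t)\cup\vars{\psi}$, and define $\den{c^*}\eqdef\emptyset$ if $\tuple{t,t}\not\models\psi$; $\den{c^*}\eqdef\{v\in\Vu:v\supseteq h\}$ if $\tuple{h,t}\models\psi$; and $\den{c^*}\eqdef\{v\in\Vu:v\supseteq t\}$ otherwise (when $\tuple{t,t}\models\psi$ but $\tuple{h,t}\not\models\psi$). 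Each of these three sets is upward closed, satisfying denotation condition~(1); membership is determined by values of variables in $\dom(h)\cup\dom(t)\subseteq\vars{c^*}$, giving condition~(3); and since $c^*$ has no internal syntax, $c^*[x/d]=c^*$, making condition~(2) vacuous. A direct case check shows $h\in\den{c^*}$ iff $\tuple{h,t}\models\psi$ and $t\in\den{c^*}$ iff $\tuple{t,t}\models\psi$ (in the third case using that $h\subsetneq t$, so $h\not\supseteq t$), whence $c^*\leftrightarrow\psi$ holds at both $\tuple{h,t}$ and $\tuple{t,t}$.

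Applying the lemma with $\gamma=c^*$, $\beta=\psi$, and $F=\varphi[c/c^*]$, and using that $c^*$ does not appear in $\psi$ (so $F[\gamma/\beta]=\varphi[c/\psi]$), I obtain $\tuple{h,t}\models\varphi[c/c^*]$ iff $\tuple{h,t}\models\varphi[c/\psi]$. But $\varphi[c/c^*]$ is just the tautology $\varphi[c]$ with the atom $c$ uniformly renamed to $c^*$, hence also an \HTC\nobreakdash-tautology, so it holds at $\tuple{h,t}$; this forces $\tuple{h,t}\models\varphi[c/\psi]$, contradicting the choice of counterexample. The main obstacle is the construction of $\den{c^*}$: the naive attempt $\{v:\tuple{v,t}\models\psi\}$ fails both upward closure and variable locality, so one must settle for the coarser three-case denotation above, which only needs to agree with $\psi$ at the two worlds $h$ and $t$ that the semantics of $\to$ ever inspects.
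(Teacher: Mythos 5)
Your proof is correct and follows essentially the same route as the paper's own argument, which is only a sketch: the truth of $\varphi$ at $\tuple{h,t}$ depends solely on the satisfaction pattern of $c$ at the worlds $h$ and $t$, and the pattern exhibited by $\psi$ (necessarily one of the persistence-compatible combinations) is realizable by a constraint atom. What you add is the rigor the sketch omits, namely the explicit construction of a fresh atom $c^*$ with an admissible denotation (checked against the three denotation conditions) that reproduces $\psi$'s pattern, together with the replacement lemma justifying the substitution.
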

\begin{proof}
As a proof sketch, simply observe that if the original formula $\varphi[c]$ is a tautology, it is satisfied for any possible combination of satisfactions for $c$ in $h$ and in $t$.
Each time we replace $c$ by some formula $\psi$ in a uniform way for some interpretation $\tuple{h,t}$, this corresponds to one of these possible truth combinations for atom $c$, and so $\varphi[c/\psi]$ is also satisfied.
\end{proof}

Let is now resume our example.
The \HT-tautology
\begin{gather*}
(\alpha \to \beta) \to ((\beta \to \gamma) \to \neg(\neg \gamma \wedge \alpha))
\end{gather*}
is also an \HTC-tautology by Proposition~\ref{prop:htc.basic}.
By applying Proposition~\ref{prop:uniform-subs} to this \HTC\nobreakdash-tautology with substitution~${\alpha \mapsto s\geq 120}$, ${\beta \mapsto s > 100}$, and ${\gamma\mapsto\regAt}$, we obtain the following \HTC\nobreakdash-tautology:
\begin{align*}
(s\geq 120 \to s>100)
\to(\eqref{eq:ex.htc.rule} \to \eqref{eq:ex.htc.contraint})
\end{align*}
To show that~${\eqref{eq:ex.htc.rule} \to \eqref{eq:ex.htc.contraint}}$ is an \HTC\nobreakdash-tautology and, thus, $\Gamma$ and~$\Gamma'$ are strongly equivalent, it suffices to show that the antecedent of this implication is an \HTC\nobreakdash-tautology.
This immediately follows once we assume the usual semantics of linear inequalities and, thus, that our denotation satisfies
\begin{align*}
  \den{s \geq 120} \subseteq \den{s>100} .
\end{align*}
As expected,
the result may not hold if we assume that these two constraint atoms have a different meaning than the one they have for linear inequalities.

This illustrates how we can use Theorem~\ref{th:strongeq-suf} to prove the strong equivalence of two \HTC-theories.
To prove that \HTC\nobreakdash-equivalence is also a necessary condition for strong equivalence,
we depend on the form of the context theory $\Delta$.
For instance, in the case in which $\Gamma$ and $\Gamma'$ are regular ASP programs, it is well-known that if we restrict the form of $\Delta$ to sets of facts, we obtain a weaker concept called \emph{uniform equivalence}, that may hold even if $\Gamma$ and $\Gamma'$ do not have the same \HT\ models.
In the case of \HTC, the variability in the possible context theory $\Delta$ is much higher since the syntax and semantics of constraint atoms are very general, and few assumptions can be made on them.
Yet, if our theory accepts at least a constraint atom $\df(x)$ in \C\ for each $x \in \X$,
then we can use this construct (in the context theory $\Delta$) to prove the other direction of the strong equivalence characterization.
\begin{theorem}\label{th:strongeq-nec}
  If \HTC-theories $\Gamma$ and $\Gamma'$ are strongly equivalent,
  then they are equivalent in \HTC, that is, $\Gamma \equiv \Gamma'$.
\end{theorem}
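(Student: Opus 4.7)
The plan is to prove the contrapositive: if the \HTC\nobreakdash-theories $\Gamma$ and $\Gamma'$ are not equivalent in \HTC, then they are not strongly equivalent. Thus I assume there exists an interpretation $\tuple{h,t}$ satisfying one theory but not the other; without loss of generality, $\tuple{h,t} \models \Gamma$ and $\tuple{h,t} \not\models \Gamma'$. By persistence (Proposition~\ref{prop:htc.basic}), we also have $\tuple{t,t} \models \Gamma$. The goal is to exhibit a context $\Delta$, built from $\df(x)$ atoms, such that $\Gamma \cup \Delta$ and $\Gamma' \cup \Delta$ admit different stable models.

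The workhorse of the construction is the family of contexts $\Delta_V = \{\df(x) \mid x \in V\}$ for $V \subseteq \X$. Its crucial property is that for any $v' \subseteq t$, $\tuple{v',t} \models \Delta_V$ iff $V \subseteq \dom(v')$; combined with $v' \subseteq t$, this forces $v'$ to coincide with $t$ on every variable in $V$. Thus, fixing the there-world~$t$, the admissible here-worlds are exactly those $v'$ with $V \subseteq \dom(v') \subseteq \dom(t)$.

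I then split on whether $\tuple{t,t}$ alone distinguishes the theories. If $\tuple{t,t} \not\models \Gamma'$, take $V = \dom(t)$: this forces $v' = t$ as the unique admissible here-world, so $t$ is trivially a stable model of $\Gamma \cup \Delta_V$ (using $\tuple{t,t} \models \Gamma$) but is not even an \HTC\nobreakdash-model of $\Gamma' \cup \Delta_V$, establishing the desired difference. Otherwise $\tuple{t,t} \models \Gamma'$ and necessarily $h \subsetneq t$; I take $V = \dom(h)$ after choosing the separator $\tuple{h,t}$ with $h$ maximal among all separators with there-world~$t$ (invoking Zorn's Lemma if $\X$ is infinite). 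The witness $\tuple{h,t} \models \Gamma \cup \Delta_V$ rules out $t$ as a stable model of $\Gamma \cup \Delta_V$, and I aim to conclude that $t$ is still a stable model of $\Gamma' \cup \Delta_V$ by showing no intermediate $h'$ with $h \subsetneq h' \subsetneq t$ satisfies $\Gamma'$.

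The hard part will be this last step in the second case. Maximality of $h$ excludes larger separators in the same direction (those satisfying $\Gamma$ but not $\Gamma'$), but does not directly rule out an $h'$ with $h \subsetneq h' \subsetneq t$ satisfying $\Gamma'$ while failing $\Gamma$; such an $h'$ would destroy stability of $\Gamma' \cup \Delta_V$ at $t$. Closing this gap will likely require iterating the construction with the roles of $\Gamma$ and $\Gamma'$ swapped on the new separator $\tuple{h',t}$, combined with a well-foundedness argument on the sub-valuation order, or strengthening $\Delta$ via implications such as $\df(y) \to \df(x)$ or the finer atoms $\Dom{\D'}{x}$ when available in $\C$.
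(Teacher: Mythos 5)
Your Case~1 matches the paper's proof exactly, and your diagnosis of the difficulty in Case~2 is accurate---but the proposal leaves that case genuinely open, and the repair strategies you sketch do not all work. Maximality of the separator $\tuple{h,t}$ (with or without Zorn) only excludes \emph{larger separators}, i.e.\ here-worlds $h'\supsetneq h$ with $\tuple{h',t}\models\Gamma$ and $\tuple{h',t}\not\models\Gamma'$; it says nothing about an intermediate $v$ with $h\subsetneq v\subsetneq t$ and $\tuple{v,t}\models\Gamma'$, which is exactly what would destroy stability of $t$ for $\Gamma'\cup\Delta_V$. Iterating the construction with the roles of $\Gamma$ and $\Gamma'$ swapped does not obviously terminate either, since the sub-valuation order between $h$ and $t$ need not be well founded when $\X$ is infinite.

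The paper closes the gap with the device you mention only in passing at the very end: in Case~2 it sets $\Delta=\Delta_1\cup\Delta_2$, where $\Delta_1=\{\df(x)\mid x\in\X,\ h(x)\neq\undefined\}$ is your $\Delta_{\dom(h)}$, and $\Delta_2$ contains the implication $\df(x)\leftarrow\df(y)$ for \emph{every ordered pair} of variables $x,y$ drawn from $\{z\in\X\mid h(z)=\undefined,\ t(z)\neq\undefined\}$. The interpretation $\tuple{h,t}$ still satisfies $\Delta_2$ (each antecedent $\df(y)$ fails at $h$ since $h(y)=\undefined$, and each consequent $\df(x)$ holds at $t$ since $t(x)\neq\undefined$), so $\tuple{h,t}\models\Gamma\cup\Delta$ and $t$ is not stable for $\Gamma\cup\Delta$. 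On the other side, any $v$ with $h\subseteq v\subset t$ and $\tuple{v,t}\models\Delta$ that defines even one variable $y$ outside $\dom(h)$ must, via the all-pairs implications in $\Delta_2$, define every such variable, forcing $v=t$; and $v=h$ is excluded because $\tuple{h,t}\not\models\Gamma'$. Hence no proper $v$ survives and $t$ is an equilibrium model of $\Gamma'\cup\Delta$, with no maximality or well-foundedness argument required. You should replace the maximal-separator idea by this implication scheme to complete the proof.
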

\begin{proof}
We proceed by contraposition, that is, proving that $\Gamma \not\equiv \Gamma'$ implies that $\Gamma$ and $\Gamma'$ are not strongly equivalent.
To this aim, we build some theory $\Delta$ such that $\Gamma \cup \Delta$ and $\Gamma' \cup \Delta$ have different stable models.
Without loss of generality, suppose there exists some model $\tuple{h,t} \models \Gamma$ but $\tuple{h,t} \not\models \Gamma'$.
Note that, by persistence, $\tuple{t,t} \models \Gamma$, but we do not know whether $\tuple{t,t} \models \Gamma'$ or not, so we separate the proof in two cases.

\noindent{\em Case 1}. Suppose first that $\tuple{t,t} \not\models \Gamma'$.
Let us build the theory
\begin{eqnarray*}
\Delta=\{\df(x) \mid x \in \X, t(x)\neq \undefined\}
\end{eqnarray*}
that exclusively consists of constraint atoms.
We can easily see that $\tuple{t,t} \models \Delta$ because $\Delta$ collects precisely those $\df(x)$ for which $t(x) \neq\undefined$ and so, $t \in \den{\df(x)}$ trivially.
As a result, $\tuple{t,t} \models \Gamma \cup \Delta$ and, to prove that $\tuple{t,t}$ is in equilibrium, suppose we have a smaller $v \subset t$ satisfying $\Gamma \cup \Delta$.
Then there is some variable $x \in \X$ for which $v(x)=\undefined$ whereas ${t(x) \neq \undefined}$.
The former implies $\tuple{v,t} \not\models \df(x)$ while the latter implies $\df(x) \in \Delta$, so we conclude $\tuple{v,t} \not\models \Delta$ reaching a contradiction.

\noindent {\em Case 2}.
Suppose that $\tuple{t,t} \models \Gamma'$.
Take the theory $\Delta = \Delta_1 \cup \Delta_2$ with:
\begin{eqnarray*}
\Delta_1 & = & \{\df(x) \mid x \in \X, \, h(x) \neq \undefined \}
\end{eqnarray*}
and $\Delta_2$ consisting of all rules $\df(x) \leftarrow \df(y)$ for all pair variables $x,y$ in the set:
\[
\{z \in \X \mid h(z)= \undefined, t(z)\neq \undefined \}
\]
We prove first that $\tuple{t,t}$ is not an equilibrium model for $\Gamma \cup \Delta$ because $\tuple{h,t}$ is, indeed, a model of this theory.
To show this, it suffices to see that $\tuple{h,t} \models \Delta$.
First, it follows that~$\tuple{h,t} \models \Delta_1$ because~$\Delta_1$ only contains facts of the form~$\df(x)$ per each variable $x$ satisfying ${h(x) \neq \undefined}$.
Second, ${\tuple{h,t} \models \Delta_2}$ also follows because for all the implications of the form of~$\df(x) \leftarrow \df(y)$ in $\Delta_2$, both $\tuple{h,t} \not\models \df(y)$---because $h(y)=\undefined$---and $\tuple{t,t} \models \df(x)$---because $t(x) \neq \undefined$.
It remains to be proven that $\tuple{t,t}$ is an equilibrium model of $\Gamma' \cup \Delta$.
We can see that $\tuple{t,t} \models \Delta$ follows by persistence because~$\tuple{h,t} \models \Delta$, and thus, $\tuple{t,t} \models \Gamma' \cup \Delta$.
Suppose, by the sake of contradiction, that there existed some $v \subset t$ such that $\tuple{v,t} \models \Gamma' \cup \Delta$.
Since $\tuple{v,t} \models \Delta_1$, any variable $x$ defined in $h$ must be defined in $v$ as well, but as $v \subset t$, $v(x)=t(x)=h(x)$ and so, we conclude $h \subseteq v$.
However, we also know $\tuple{h,t} \not\models \Gamma' \subseteq \Gamma' \cup \Delta$, and so $v \neq h$, that is, $h \subset v \subset t$.
Consider any variable $y$ defined in $v$ but not in $h$, that is, $h(y)=\undefined$ and $\undefined \neq v(y) = t(y)$ by persistence.
Now, $y$ is undefined in $h$ and defined in $t$, and suppose we take any other variable $x$ in the same situation.
Then, we have an implication $\df(x) \leftarrow \df(y)$ in $\Delta_2$ that must be satisfied by $\tuple{v,t}$ whereas, on the other hand, $\tuple{v,t} \models \df(y)$ because $v(y) \neq \undefined$.
As a result, $\tuple{v,t} \models \df(x)$ for all variables $x$ undefined in $h$ but not in $t$.
But as $\tuple{v,t} \models \Delta_1$ the same happens for variables defined in $h$.
As a result, $\tuple{v,t} \models \df(x)$ iff $\tuple{t,t} \models \df(x)$ and this implies $v(x)=t(x)$ for all variables, namely, $v=t$ reaching a contradiction.
Therefore, $\tuple{t,t}$ is an equilibrium model of $\Gamma' \cup \Delta$.
\end{proof}
This proof is analogous to the original one for \HT\ by~\citeN{lipeva01a}, using here constraint atoms $\df(x)$ to play
the role of regular atoms in the original proof.
 
\section{Logic Programs with Abstract Theories as \HTC-Theories}\label{sec:lpabstract}

This section details the development of a transformation from logic programs with constraints into \HTC-theories.
We begin by reviewing the language of logic programs with constraints employed by the ASP solver
\clingo~\sysfont{5} (\citealt{gekakaosscwa16a}; see also Section~\ref{sec:clingo5}).
Subsequently, we introduce a novel, simplified semantic definition for this language,
demonstrating its equivalence to the original definition under widely accepted assumptions
that are inherent to most hybrid solvers (Section~\ref{sec:simpler}).
This streamlined definition
not only enhances the clarity of the technical development presented in the subsequent sections
but also holds the potential to facilitate future advancements in hybrid solver design.
Building upon this foundation,
we revisit the definition of answer sets for logic programs with constraints,
as recently formalized by~\cite{cafascwa23a}.
Finally,
we present the central contribution of this work:
a transformation of logic programs with constraints into \HTC-theories, and
formally prove the preservation of answer sets under this transformation
(Section~\ref{sec:htcsemantics}).
\subsection{Theory solving in {\rm\clingo}}\label{sec:clingo5}

The main distinctive feature of \clingo~\sysfont{5} is the introduction of theory atoms in its syntax.
We review its most recent semantic characterization based on the concept of \emph{abstract theories}~\citep{cafascwa23a}.
We consider an alphabet consisting of two disjoint sets, namely,
a set~\Atoms\ of \emph{propositional} (or \emph{regular}) \emph{atoms}
and a set \TheoryAtoms\ of \emph{theory atoms}, whose truth is governed by some external theory.
We use letters $\regAt$, $\thAt$, and $\anyAt$ and\comment{T: fonts\dots? Cf last section\par $a$, $t$, $\mathit{at}$\dots?}
variants of them for atoms in $\Atoms$, $\TheoryAtoms,$ and $\Atoms \cup \TheoryAtoms$, respectively.
In \clingo~\texttt{5}, theory atoms are expressions preceded by `\code{\&}', but their internal syntax is not predetermined: it can be defined by the user to build new extensions.
As an example, the system \clingcon\ extends the input language of \clingo\ with linear equations, represented as theory atoms of the form
\begin{align}\label{clingcon:linear:constraint}
  \Sum{\mathit{k_1*x_1};\dots;\mathit{k_n*x_n}} \prec k_0
\end{align}
where each $x_i$ is an integer variable and each $k_i\in\mathbb{Z}$ an integer constant for $0\leq i\leq n$,
whereas $\prec$ is a comparison symbol such as \Leq, \Equ, \Neq, \Low, \Grt, \Geq.
Several theory atoms may represent the same theory entity.
For instance, $\Sum{x} \Grt 0$  and $\code{\&sum\{x\}}\Geq 1$ actually represent the same condition (as linear equations).

A \emph{literal} is any atom $b \in \Atoms \cup \TheoryAtoms$ or its default negation $\neg b$.

A \tprogram\ over $\langle\Atoms,\TheoryAtoms\rangle$ is a set of rules of the form
\begin{align}
    \anyAt_0 \leftarrow \anyAt_1,\dots,\anyAt_n,\neg \anyAt_{n+1},\dots,\neg \anyAt_m \label{theory:rule}
\end{align}
where $\anyAt_i\in\mathcal{A}\cup \TheoryAtoms$ for $1 \leq i \leq m$ and~$\anyAt_0\in \mathcal{A}\cup \TheoryAtoms \cup \{ \bot \}$ with $\bot \not\in \Atoms \cup \TheoryAtoms$ denoting the falsum constant.
We sometimes identify \eqref{theory:rule} with the formula
$$\anyAt_1 \wedge \dots \wedge \anyAt_n \wedge \neg \anyAt_{n+1} \wedge \dots \wedge \neg \anyAt_m \to \anyAt_0.$$
We let notations ${\head{r}\eqdef b_0}$, ${\pbody{r}\eqdef \{\anyAt_1,\dots,\anyAt_n\}}$ and
${\nbody{r}\eqdef\{\anyAt_{n+1},\dots,\anyAt_m\}}$ stand for the \emph{head}, the \emph{positive} and the \emph{negative body atoms} of a rule~$r$ as in~\eqref{theory:rule}.
The set of \emph{body atoms} of $r$ is just ${\body{r} \eqdef \pbody{r} \cup \nbody{r}}$.
Finally, the sets of \emph{body} and \emph{head atoms} of a program $P$ are defined as
${\body{P}\eqdef \bigcup_{r \in P} \body{r}}$ and ${\Head{P}\eqdef \{\head{r}\mid r\in P\}\setminus \{\bot\}}$, respectively.

The semantics of~\tprogram{s} in \clingo~\citep{gekakaosscwa16a}, relies on a two\nobreakdash-step process:
(1) generate regular stable models (as in Definition~\ref{def:regularsm}) and
(2) select the ones passing a theory certification.
We present next this semantics following the formalization recently introduced by~\citeauthor{cafascwa23a} in~\citeyearpar{cafascwa23a}.

An \emph{abstract theory}~\AT\ is a triple
\(
\langle \TheoryAtoms, \Sat, \comp{\cdot} \,\rangle
\)
where~\TheoryAtoms\ is the set of \emph{theory atoms} constituting the language of the abstract theory,
$\Sat \subseteq 2^\TheoryAtoms$ is the set of \AT-\emph{satisfiable} sets of theory atoms, and
$\comp{\cdot}:\TheoryAtoms \rightarrow\TheoryAtoms$ is a function mapping theory atoms to their \emph{complement}
such that $\comp{\comp{\thAt}}=\thAt$ for any $\thAt \in\TheoryAtoms$.
We define $\comp{S} = \{\comp{\thAt}\mid\thAt\in S\}$ for any set~$S\subseteq\TheoryAtoms$.
\comment{T: What about using $T$ rather than $S$\dots?}

We partition the set of theory atoms into two disjoint sets, namely, a set~$\TheoryAtomsES$ of \emph{external} theory atoms and a set~$\TheoryAtomsDN$ of \emph{founded} theory atoms.
Intuitively, the truth of each external atom in \TheoryAtomsES\ requires no justification.
Founded atoms on the other hand must be derived by the \tprogram.
We assume that founded atoms do not occur in the body of rules.
We refer to the work by~\cite{jakaosscscwa17a} for a justification for this assumption.

Given any set $S$ of theory atoms, we define its \emph{(complemented) completion} with respect to external atoms~\TheoryAtomsES, denoted by~$\Comp_{\TheoryAtomsES}(S)$, as:
\begin{eqnarray*}
\Comp_{\TheoryAtomsES}(S)
  \eqdef S \ \cup \ (\comp{\TheoryAtomsES\setminus S})
\end{eqnarray*}
In other words, we add the complement atom~$\comp{\thAt}$ for every external atom~$\thAt$ that does not occur explicitly in $S$.
\begin{definition}[Solution; \citealp{cafascwa23a}]
    Given a theory~$\AT=\langle \TheoryAtoms, \Sat, \comp{\cdot} \,\rangle$ and
    a set~${\TheoryAtomsES \subseteq \TheoryAtoms}$ of external theory atoms,
    a set~${S \subseteq \TheoryAtoms}$ of theory atoms  is a \emph{$\langle\AT,\TheoryAtomsES\rangle$-solution},
    if $S\in\Sat$ and $\Comp_{\TheoryAtomsES}(S)\in\Sat$.
\end{definition}
That is, $S$ is a \emph{$\langle\AT,\TheoryAtomsES\rangle$-solution} whenever
both~$S$ and~$\Comp_{\TheoryAtomsES}(S)$ are \AT-satisfiable.
\begin{definition}[Theory stable model\/\footnotemark; \citealp{cafascwa23a}]\label{def:stable.model}\nobreak Given a theory~$\AT=\langle \TheoryAtoms, \Sat, \comp{\cdot} \,\rangle$ and a set~${\TheoryAtomsES \subseteq \TheoryAtoms}$ of external theory atoms,
a set $X\subseteq \Atoms\cup\TheoryAtoms$ of atoms is a $\tuple{\AT,\TheoryAtomsES}$\nobreakdash-\emph{stable model} of a \tprogram\ $P$,
if there is some \tsolution\ $S$ such that $X$ is a regular stable model
of the program
\begin{align}\label{eq:transformation}
  P
  \cup
  \{ {\thAt\leftarrow      } \mid \thAt\in (S\cap\TheoryAtomsES) \}
  \cup
  \{ {\bot \leftarrow \thAt} \mid \thAt\in ((\TheoryAtoms\cap\Head{P}) \setminus S) \}
  \ .
\end{align}
\end{definition}
\footnotetext{The only minor difference to the original definition of a $\tuple{\AT,\TheoryAtomsES}$\nobreakdash-stable model
  by~\cite{cafascwa23a} is the characterization of regular stable models.
  While~\cite{cafascwa23a} use the program reduct~\citep{gellif88b} we resort in Definition~\ref{def:regularsm} to \HT\ instead.}

As an example of an abstract theory,
consider the case of \emph{linear equations} $\LC$ which can be used to capture \clingcon-programs.
This theory is formally defined as $\LC=\langle\TheoryAtoms,\Sat,\comp{\cdot}\,\rangle$,
where
\begin{itemize}
\item \TheoryAtoms\ is the set of all expressions of form \eqref{clingcon:linear:constraint},
\item $\Sat$ is the set of all subsets $S\subseteq\TheoryAtoms$ of expressions of form~\eqref{clingcon:linear:constraint}
  for which there exists an assignment of integer values to their variables that
  satisfies all linear equations in $S$ according to their usual meaning, and
\item the complement function $\comp{\code{\&sum\{\cdot\}} \prec c} $ is defined as
$\code{\&sum\{\cdot\}}\mathrel{\comp{\prec}}c$
  with~$\comp{\prec}$ defined according to the following table:
\begin{center}
    \renewcommand{\arraystretch}{1.4}
    \begin{tabular}[1cm]{
       |  @{\hskip5pt}c@{\hskip10pt}
       || @{\hskip5pt}c@{\hskip10pt}
       |  @{\hskip5pt}c@{\hskip10pt}
       |  @{\hskip5pt}c@{\hskip10pt}
       |  @{\hskip5pt}c@{\hskip10pt}
       |  @{\hskip5pt}c@{\hskip10pt}
       |  @{\hskip5pt}c@{\hskip10pt}
       |}
      \cline{1-7}
       $\prec$
      &
       \Leq
      &
       \Equ
      &
       \Neq
      &
       \Low
      &
       \Grt
      &
       \Geq
      \\
      \cline{1-7}
       $\comp{\prec}$
      &
       \Grt
      &
       \Neq
      &
       \Equ
      &
       \Geq
      &
       \Leq
      &
       \Low
      \\
\cline{1-7}
      \end{tabular}
    \end{center}
    \medskip
\end{itemize}
Using the theory atoms of theory \LC,
we can write the running example from the introduction as the \tprogram:
\comment{T: replaced $\wedge$ by ``,'', and $\geq$ by ``$>=$''
J: As a T-program the syntax seems correct, doesn't it?}
\begin{align}
  \bot & \leftarrow  \neg \mathtt{a},\, \Sum{s}\!>=\!120 \label{f:ex2.1}\\
  \mathtt{a} & \leftarrow  \Sum{s}\!>\!100 \label{f:ex2.2}
\end{align}
 \subsection{A new and simpler definition of theory stable model}\label{sec:simpler}

In this section,
we introduce a new, simplified version of the definition of $\tuple{\AT,\TheoryAtomsES}$\nobreakdash-\emph{stable model} that,
under certain conditions introduced below, is equivalent to Definition~\ref{def:stable.model}.
\begin{definition}[Theory stable model simplified]\label{def:stable.model.simple}
    Given a theory~$\AT=\langle \TheoryAtoms, \Sat, \comp{\cdot} \,\rangle$ and
    a set~${\TheoryAtomsES \subseteq \TheoryAtoms}$ of external theory atoms,
    a set $X\subseteq \Atoms\cup\TheoryAtoms$ of atoms is a $\langle\AT,\TheoryAtomsES\rangle$\nobreakdash-\emph{stable model} of a \tprogram\ $P$,
    if $(X \cap \TheoryAtoms) \in \Sat$ and $X$ is a regular stable model of the theory
    \begin{align}
        P
        \cup
        \{ \ {\thAt \vee \comp{\thAt} } \, \mid \ \thAt\in \TheoryAtomsES \ \}
        \label{eq:transformationB.simple}
        \ .
    \end{align}
\end{definition}
This definition drops the existential quantifier used in Definition~\ref{def:stable.model} for identifying a~\tsolution~$S$.

To state the conditions under which Definitions~\ref{def:stable.model} and~\ref{def:stable.model.simple} are equivalent,
we need the following concepts~\citep{cafascwa23a}.
An abstract theory $\AT = \langle \TheoryAtoms, \Sat, \comp{\cdot} \,\rangle$
is \emph{consistent} if none of its satisfiable sets contains complementary theory atoms, that is, there is no $S \in \Sat$ such that $\thAt\in S$ and $\comp{\thAt}\in S$ for some atom~$\thAt\in\TheoryAtoms$.
A set~$S$ of theory atoms is \emph{closed} if~$\thAt \in S$ implies~$\comp{\thAt} \in S$.
A set~$S$ of theory atoms is called $\TheoryAtomsES$\nobreakdash-\emph{complete},
if for all $\thAt \in \TheoryAtomsES$, either $\thAt\in S$ or $\comp{\thAt}\in S$.
A theory~$\AT=\langle \TheoryAtoms, \Sat, \comp{\cdot} \,\rangle$  is \emph{monotonic}\comment{T: ?}
if~$S \subseteq S'$ and~$S' \in \Sat$ implies~$S \in \Sat$.
Programs over a consistent theory with a closed set of external theory atoms have the following interesting properties:
\begin{proposition}[\mbox{\citealp[Proposition~2]{cafascwa23a}}]\label{prop:consistentclosed}
  For a consistent abstract theory~$\AT = \langle \TheoryAtoms, \Sat, \comp{\cdot} \,\rangle$ and
  a closed set $\TheoryAtomsES\subseteq \TheoryAtoms$ of external theory atoms,
all \tsolution s $S\subseteq\TheoryAtoms$ are \TheoryAtomsES-complete.
\end{proposition}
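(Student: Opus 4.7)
The plan is to fix an arbitrary external atom $\thAt \in \TheoryAtomsES$ and show that at least one of $\thAt, \comp{\thAt}$ lies in $S$, by ruling out the case in which both are absent.

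First, I would record two immediate consequences of the hypotheses that will be used implicitly throughout. Since $\TheoryAtomsES$ is closed and $\comp{\comp{\thAt}}=\thAt$, whenever $\thAt \in \TheoryAtomsES$ we also have $\comp{\thAt} \in \TheoryAtomsES$. Moreover, by the definition of the completion, $\Comp_{\TheoryAtomsES}(S) = S \cup \comp{\TheoryAtomsES \setminus S}$, so a theory atom $\anyAt$ belongs to $\Comp_{\TheoryAtomsES}(S)$ iff either $\anyAt \in S$ or $\comp{\anyAt} \in \TheoryAtomsES \setminus S$.

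Next, I would argue by contradiction: suppose that $\thAt \notin S$ and $\comp{\thAt} \notin S$. Then both $\thAt$ and $\comp{\thAt}$ lie in $\TheoryAtomsES \setminus S$ (using closedness for the second). Applying the complement, both $\comp{\thAt}$ and $\comp{\comp{\thAt}} = \thAt$ lie in $\comp{\TheoryAtomsES \setminus S}$, and hence in $\Comp_{\TheoryAtomsES}(S)$. Thus the completion contains the complementary pair $\{\thAt, \comp{\thAt}\}$.

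Finally, since $S$ is an $\langle \AT, \TheoryAtomsES \rangle$-solution, we have $\Comp_{\TheoryAtomsES}(S) \in \Sat$; but consistency of $\AT$ forbids any $\Sat$-set from containing a complementary pair, yielding the desired contradiction. Therefore $\thAt \in S$ or $\comp{\thAt} \in S$, proving $\TheoryAtomsES$-completeness of $S$. The only subtlety — and what I would take as the main obstacle — is to use closedness of $\TheoryAtomsES$ at the right moment, because otherwise one cannot even form $\comp{\thAt}$ as an element of $\TheoryAtomsES \setminus S$; the rest is purely a matter of unfolding the definition of $\Comp_{\TheoryAtomsES}$ and invoking consistency of $\AT$.
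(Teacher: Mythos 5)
Your argument is correct and is essentially the canonical proof of this fact: assuming $\thAt\notin S$ and $\comp{\thAt}\notin S$, closedness of $\TheoryAtomsES$ puts both $\thAt$ and $\comp{\thAt}$ in $\TheoryAtomsES\setminus S$, so $\Comp_{\TheoryAtomsES}(S)$ contains the complementary pair $\{\thAt,\comp{\thAt}\}$, contradicting $\Comp_{\TheoryAtomsES}(S)\in\Sat$ together with consistency of $\AT$. The paper does not prove this proposition itself but imports it by citation, and your derivation matches the intended argument, including the correct placement of the closedness hypothesis.
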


\begin{theorem}\label{thm:stable.model.simple}
  Given a theory~$\AT=\langle \TheoryAtoms, \Sat, \comp{\cdot} \,\rangle$ and
  a set~${\TheoryAtomsES \subseteq \TheoryAtoms}$ of external theory atoms,
  if $\AT$ is consistent and monotonic, and $\TheoryAtomsES$ is closed,
  then Definitions~\ref{def:stable.model} and~\ref{def:stable.model.simple} are equivalent.
\end{theorem}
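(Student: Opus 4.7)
The plan is to prove both directions by converting the witness of one definition into that of the other, in each case separately checking the ``model'' and ``minimality'' parts of the regular stable model property for the theory in~\eqref{eq:transformation} versus~\eqref{eq:transformationB.simple}.

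For the $(\Rightarrow)$ direction, assume $X$ is a stable model per Definition~\ref{def:stable.model} with witness \tsolution\ $S$. I would first invoke Proposition~\ref{prop:consistentclosed} to obtain that $S$ is $\TheoryAtomsES$-complete; combined with closedness of $\TheoryAtomsES$, this immediately yields $\tuple{X,X} \models \thAt \vee \comp{\thAt}$ for every $\thAt\in\TheoryAtomsES$. To get $(X\cap\TheoryAtoms)\in\Sat$, I would argue $X\cap\TheoryAtoms\subseteq S$ and invoke monotonicity of $\AT$: a founded atom in $X$ must be a head of $P$ (otherwise unsupported) and must lie in $S$ (otherwise blocked by the constraint $\bot \leftarrow \thAt$ in~\eqref{eq:transformation}), while an external atom in $X$ but not in $S$ is either blocked by the same constraint or, if outside $\Head{P}$, makes $Y=X\setminus\{\thAt\}$ a counterwitness to minimality of $X$. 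For minimality of $X$ with respect to~\eqref{eq:transformationB.simple}, I would suppose $Y\subsetneq X$ with $\tuple{Y,X}\models$~\eqref{eq:transformationB.simple} and show $\tuple{Y,X}$ also models~\eqref{eq:transformation}: the only subtle step concerns a fact $\thAt\leftarrow$ with $\thAt\in S\cap\TheoryAtomsES$, since the disjunction could in principle be witnessed in $Y$ by $\comp{\thAt}$, but then $\comp{\thAt}\in X\cap\TheoryAtoms\subseteq S$ together with $\thAt\in S$ would contradict the consistency of $\AT$.

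For the $(\Leftarrow)$ direction, set $S := X\cap\TheoryAtoms$. The disjunctions of~\eqref{eq:transformationB.simple} together with closedness of $\TheoryAtomsES$ make $S$ $\TheoryAtomsES$-complete, which forces $\Comp_{\TheoryAtomsES}(S)\subseteq S$, so $S\in\Sat$ alone witnesses that $S$ is a \tsolution. Checking $\tuple{X,X}\models$~\eqref{eq:transformation} is routine because $S=X\cap\TheoryAtoms$ exactly records which theory atoms are in $X$. For minimality, given $Y\subsetneq X$ with $\tuple{Y,X}\models$~\eqref{eq:transformation}, for each $\thAt\in\TheoryAtomsES$ either $\thAt\in S$ (so $\thAt\in Y$ via the fact $\thAt\leftarrow$), or $\comp{\thAt}\in S$ by completeness (so $\comp{\thAt}\in Y$ via its fact, using closedness to know $\comp{\thAt}\in\TheoryAtomsES$); either way the disjunction is satisfied in $\tuple{Y,X}$, contradicting stability of $X$ for~\eqref{eq:transformationB.simple}.

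I expect the two minimality arguments to be the main obstacle: in the forward direction, consistency of $\AT$ is essential to block the complement from silently witnessing a disjunction, and in the backward direction, closedness plus completeness is essential to ensure every disjunction is backed by a matching fact in~\eqref{eq:transformation}. The auxiliary subset claim $X\cap\TheoryAtoms\subseteq S$ in the forward direction is the second most delicate point, especially for external atoms outside $\Head{P}$, where the $Y=X\setminus\{\thAt\}$ argument sketched above is needed to rule out unsupported theory atoms in a stable model.
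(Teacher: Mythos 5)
Your proposal is correct, and it rests on the same pillars as the paper's argument: Proposition~\ref{prop:consistentclosed} to obtain $\TheoryAtomsES$-completeness of the witness~$S$, the containment $X\cap\TheoryAtoms\subseteq S$ established by a supportedness argument on the regular stable model of~\eqref{eq:transformation}, monotonicity to conclude $(X\cap\TheoryAtoms)\in\Sat$, and consistency to rule out $\comp{\thAt}$ silently witnessing a disjunction when the fact $\thAt\leftarrow$ must fire. Where you differ is in the packaging. The paper routes both directions through an intermediate program (program~\eqref{eq:transformation2} in the appendix, which is \eqref{eq:transformationB.simple} augmented with the integrity constraints $\bot\leftarrow\thAt,\,\comp{\thAt}$) and proves a chain of lemmas showing that the facts for $S\cap\TheoryAtomsES$ can be traded for the disjunctions plus these constraints, that $X\cap\TheoryAtomsES=S\cap\TheoryAtomsES$, and that the constraints can finally be dropped because $(X\cap\TheoryAtoms)\in\Sat$ and $\AT$ is consistent. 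You instead convert a minimality counterwitness $\tuple{Y,X}$ for one program directly into one for the other, separately in each direction. Your version is shorter and makes it more transparent exactly where each hypothesis enters (consistency in the forward minimality step, closedness plus completeness in the backward one); the paper's version isolates reusable intermediate facts that it cites again later. Two small points you leave implicit but which are not gaps: in the forward minimality step one must also check that $\tuple{Y,X}$ satisfies the constraints $\bot\leftarrow\thAt$ for $\thAt\in(\TheoryAtoms\cap\Head{P})\setminus S$, which is immediate from $X\cap\TheoryAtoms\subseteq S$; and in the backward direction one must note that a \tsolution\ requires $\Comp_{\TheoryAtomsES}(S)\in\Sat$, which your observation $\Comp_{\TheoryAtomsES}(S)=S$ for $\TheoryAtomsES$-complete $S$ settles.
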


The preconditions of Theorem~\ref{thm:stable.model.simple} cover many hybrid extensions of \clingo\
such as \clingcon, \clingodl, and \clingolp.
Therefore, in the rest of the paper, we assume these conditions and use Definition~\ref{def:stable.model.simple} as the definition of a $\langle\AT,\TheoryAtomsES\rangle$\nobreakdash-stable model.
 \subsection{Structured and Compositional Theories}\label{sec:background.compositional.theories}

The approach presented in Sections~\ref{sec:clingo5} and~\ref{sec:simpler} is intentionally generic in its formal definitions.
No assumption is made on the syntax or inner structure of theory atoms.
An abstract theory is only required to
specify when a set of theory atoms is satisfiable and
provide a complement function for each theory atom.
Definition~\ref{def:stable.model.simple} is a bit more specific,
and further assumes consistent and monotonic theories (with a closed set of external atoms), something we may expect in most cases while still being very generic.
The advantage of this generality is that it allows us to accommodate external theories without requiring much knowledge about their behavior.
However, this generality comes at a price:
ignoring the structure of the external theory may prevent in depth formal elaborations,
such as, for instance, the study of strong equivalence for logic programs with constraint atoms.

Also, in many practical applications of hybrid systems, we are interested in the assignment of values to variables
rather than the theory atoms that are satisfied,
something not reflected in Definitions~\ref{def:stable.model} or~\ref{def:stable.model.simple}.
This is what happens, for instance, in most hybrid extensions of answer set solvers
including the hybrid versions of \clingo, namely, \clingcon, \clingoM{dl}, and \clingoM{lp}.
Since the presence of variables in theory atoms can be exploited to describe their semantics in more detail,
we resort to refined types of theories that are enriched with a specific structure.
\begin{definition}[Structured theory; \citealp{cafascwa23a}]\label{def:structure}
  Given an abstract theory~$\AT=\langle \TheoryAtoms, \Sat, \comp{\cdot} \,\rangle$,
  we define its \emph{structure} as a tuple
  \(
  (\XAT,\DAT, \varsATfunction_\AT, \denAT{\cdot}),
  \)
  where
  \begin{enumerate}
  \item $\XAT$ is a set of variables,
  \item $\DAT$ is a set of domain elements,
  \item ${\varsATfunction_\AT: \TheoryAtoms \rightarrow 2^{\XAT}}$ is a function returning the set of variables contained in a theory atom
    such that~$\VARSAT{\thAt} = \VARSAT{\comp{\thAt}}$ for all theory atoms~$\thAt\in\TheoryAtoms$,
  \item $\VAT=\{v \mid v: \XAT \rightarrow \DAT\}$ is the set of all \emph{assignments} over~$\XAT$ and~$\DAT$, and
  \item\label{def:structure.5}
    $\denAT{\cdot}: \TheoryAtoms \rightarrow 2^{\VAT}$ is a function mapping theory atoms to sets of valuations such that
    \[
      v \in \denAT{\thAt} \text{ iff } w \in \denAT{\thAt}
    \]
    for all theory atoms~$\thAt\in\TheoryAtoms$ and every pair of valuations~$v,w$ agreeing on the value of all variables~$\VARSAT{\thAt}$ occurring in~$\thAt$.
  \end{enumerate}
\end{definition}
Whenever an abstract theory \AT\ is associated with such a structure, we call it \emph{structured}
(rather than abstract).
Observe that structured theories are based on the same concepts as \HTC, viz.\
a domain, a set of variables, valuation functions, and denotations for atoms.
In fact, we assume that the previously introduced definitions and notation for these concepts still apply here.
This similarity is intended, since \HTC\ was originally thought of as a generalization of logic programs with theory atoms.
One important difference between an \HTC\ valuation and a structured theory valuation $v \in \VAT$ is that
the latter cannot leave a variable undefined in \HTC\ terms, that is,
for a structured theory, we assume that the ``undefined'' value is not included in the domain $\undefined \not\in \DAT$,
and so, $v(x) \neq \undefined$ for all $x \in \XAT$ and $v \in \VAT$.
We also emphasize the use of the theory name $\AT$ subindex in all the components of a structure because, as we see below,
an \HTC\ characterization allows us to accommodate multiple theories in the same formalization.

Given a set $S$ of theory atoms, we define its denotation as $\denAT{S} \eqdef \bigcap_{\thAt\in S}\denAT{\thAt}$.
For any structured theory~$\AT = \langle \TheoryAtoms, \Sat, \comp{\cdot} \,\rangle$ with structure $(\XAT,\DAT, \varsATfunction_\AT, \denAT{\cdot})$,
if~$X \subseteq \Atoms \cup \TheoryAtoms$ is a set of atoms, we define~$\Answer{X}$ as the set
\[
\{ (Y,\restr{v}{\Sigma}) \mid v \in \denAT{X \cap \TheoryAtoms} \}
\quad
\text{ with }
Y = X \cap \Atoms
\text{ and }
\Sigma = \VARSAT{X\cap\TheoryAtoms}
\ .
\]
Intuitively,
$\Answer{X}$ collects all pairs $(Y,v')$ where $Y$ is fixed to the regular atoms in $X$ and
$v'$ varies among all valuations in $\denAT{X \cap \TheoryAtoms}$
restricted to the variables occurring in the theory atoms in~$X$.
Note that $v$ is only defined for a subset of variables, namely, $\Sigma=\VARSAT{X\cap\TheoryAtoms}\subseteq \X$.
\begin{definition}[Answer set; \citealp{cafascwa23a}]
  If~$X$ is a $\langle\AT,\TheoryAtomsES\rangle$-stable model of program~$P$ and~$(Y,v)\in\Answer{X}$,
  then $(Y,v)$ is a $\langle\AT,\TheoryAtomsES\rangle$\nobreakdash-\emph{answer set} of~$P$.
\end{definition}

Let $\AT = \tuple{\TheoryAtoms, \Sat, \comp{\cdot}}$ be a theory with structure $(\XAT,\DAT, \varsATfunction_\AT, \denAT{\cdot})$.
We say that $\AT$ has an \emph{absolute complement} whenever
the denotation of~$\comp{\thAt}$ is precisely the set complement of~$\denAT{\thAt}$, that is,
$\denAT{\comp{\thAt}} = \VAT \setminus \denAT{\thAt}$.
We also say that $\AT$ is \emph{compositional},
when it satisfies
${\Sat = \{S \subseteq \TheoryAtoms \mid \denAT{S} \neq \emptyset\}}$, that is, a set~$S$ is $\AT$-satisfiable iff its denotation is not empty.
This means that, for compositional theories, the set~$\Sat$ of~$\AT$\nobreakdash-satisfiable sets does not need to be explicitly stated as it can be derived from the denotation.
Hence,
we can write~$\AT=\langle \TheoryAtoms, \comp{\cdot} \,, \XAT,\DAT, \varsATfunction_\AT, \denAT{\cdot}\rangle$ to denote a compositional theory.

As an example for a compositional, structured theory with an absolute complement,
let us associate the theory of linear equations \LC\ with the structure $(\X_\LC,\D_\LC,\varsTfunction{\LC},\den{\cdot}_\LC)$, where
\begin{itemize}
\item $\X_\LC$ is an infinite set of integer variables,
\item $\D_\LC=\mathbb{Z}$,
\item ${\varsT{\LC}{\Sum{k_1\!*\!x_1;\!{\small\dots}\!;k_n\!*\!x_n} \prec k_0}=\{x_1,\!{\small\dots}\!,x_n\}}$, and
\item
  \(
  \den{\Sum{k_1*x_1;\dots;k_n*x_n}  \prec k_0}_\LC
  =
  \)
  \[\textstyle
    \big\{
    v\in\mathcal{V}_\LC \mid \{k_1,v(x_1),\dots k_n,v(x_n)\}\subseteq\mathbb{Z}
    \text{ and }
    \sum_{1\leq i \leq n}k_i*v(x_i)\prec k_0
    \big\}
    \ .
  \]
\end{itemize}
With \LC, a set $S$ of theory atoms capturing linear equations is \LC-satisfiable whenever
\(
\den{S}_\LC
\)
is non-empty.

In general,
we have a one-to-many correspondence between $\langle\AT,\TheoryAtomsES\rangle$-stable models and
their associated $\langle\AT,\TheoryAtomsES\rangle$\nobreakdash-answer sets.
That is, $\Answer{X}$ is generally no singleton for a stable model $X$.
However, if we focus on consistent, compositional theories that have an absolute complement, then we can establish a one\nobreakdash-to\nobreakdash-one correspondence between any~$\langle\AT,\TheoryAtomsES\rangle$\nobreakdash-stable model of a program and a kind of equivalence classes among its associated $\langle\AT,\TheoryAtomsES\rangle$\nobreakdash-answer sets.
Each of the equivalence classes may contain many $\langle\AT,\TheoryAtomsES\rangle$\nobreakdash-answer sets, but any of them has enough information to reconstruct the corresponding~$\langle\AT,\TheoryAtomsES\rangle$\nobreakdash-stable model.
An answer set $(Y,v)$ \emph{satisfies} an atom $b \in \Atoms \cup \TheoryAtoms$, written $(Y,v) \models b$, if
\begin{itemize}
  \item $b\in Y$ \quad\ for regular atoms $b \in \Atoms$ and
  \item $v\IN\denAT{b}$ \ for theory atoms $b\in\TheoryAtoms$.
\end{itemize}
where~$v \IN \denAT{b}$ holds iff there exists some valuation ${w \in \denAT{b}}$ such that $v$ and $w$ agree on the variables in $\dom(v)$.
Note that we use~$v\IN \denAT{b}$ instead of~$v \in \denAT{b}$ because $v$ can be partial and, if so,
we just require that there exists some complete valuation in $\denAT{b}$ that
agrees with the values assigned by $v$ to its defined variables $\dom(v)$.
For any negative literal $\neg b$, we say that $(Y,v) \models \neg b$ simply when $(Y,v) \not\models b$.
If $B$ is a rule body, we write $(Y,v) \models B$ to stand for $(Y,v) \models \ell$ for every literal $\ell$ in $B$.
For a program~$P$ and an $\langle\AT,\TheoryAtomsES\rangle$\nobreakdash-\emph{answer set}~$(Y,v)$, we define
\begin{align*}
  \Stable{Y,v} & = Y
                 \cup  \big\{ \thAt \in \TheoryAtomsES \mid  v \IN \denAT{\thAt} \big\}
                 \cup  \big\{ \thAt \in \TheoryAtomsDN \mid (\thAt \leftarrow B) \in P, (Y,v) \models B \big\}
\end{align*}
We also write~$(Y_1,v_1) \sim (Y_2,v_2)$ if~$\Stable{Y_1,v_1} = \Stable{Y_2,v_2}$
and say that~$(Y_1,v_1)$ and~$(Y_2,v_2)$ belong to the same equivalence class with respect to~$P$.
\begin{proposition}[\mbox{\citealp[Proposition~8]{cafascwa23a}}]
    \label{prop:answer.sets.correspondence}
Let ~$\AT=\langle \TheoryAtoms, \comp{\cdot} \,, \XAT,\DAT, \varsATfunction_\AT, \denAT{\cdot}\rangle$ be a compositional theory with an absolute complement and let $\TheoryAtomsES\subseteq \TheoryAtoms$ be a closed set of external atoms.
There is a one\nobreakdash-to\nobreakdash-one correspondence between
the $\langle\AT,\TheoryAtomsES\rangle$\nobreakdash-stable models of a program~$P$ and
the equivalence classes with respect to~$P$ of its $\langle\AT,\TheoryAtomsES\rangle$\nobreakdash-answer sets.

Furthermore,
if~$(Y,v)$ is a~$\langle\AT,\TheoryAtomsES\rangle$\nobreakdash-answer set,
then~$\Stable{Y,v}$ is a $\langle\AT,\TheoryAtomsES\rangle$\nobreakdash-stable model of~$P$ and~$(Y,v)$ belongs to~$\Answer{\Stable{Y,v}}$.
\end{proposition}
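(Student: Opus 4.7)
The strategy is to first establish the identity $X = \Stable{Y,v}$ whenever $X$ is a $\tuple{\AT,\TheoryAtomsES}$\nobreakdash-stable model of $P$ and $(Y,v)\in\Answer{X}$; call this identity $(\star)$. Once $(\star)$ is in place, the \emph{furthermore} clause is immediate: by definition an answer set $(Y,v)$ lies in some $\Answer{X}$ for a stable model $X$, so $(\star)$ identifies $X$ with $\Stable{Y,v}$, making $\Stable{Y,v}$ a stable model and placing $(Y,v)$ in $\Answer{\Stable{Y,v}}$. The bijection then follows from the maps $X\mapsto\Answer{X}$ and $[(Y,v)]_\sim\mapsto\Stable{Y,v}$: $\Answer{X}$ is nonempty because $\AT$ is compositional and $X\cap\TheoryAtoms\in\Sat$; by $(\star)$ all its elements share the same $\Stable{\cdot}$\nobreakdash-value $X$, so it forms a single $\sim$\nobreakdash-class; and two stable models generating overlapping classes would be equated by $(\star)$.

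\textbf{Proving $(\star)$.} Fix $(Y,v)\in\Answer{X}$ and pick a total $v^\star\in\denAT{X\cap\TheoryAtoms}$ agreeing with $v$ on $\Sigma=\dom(v)=\VARSAT{X\cap\TheoryAtoms}$; such $v^\star$ exists by compositionality together with $X\cap\TheoryAtoms\in\Sat$. The regular-atom summand of $\Stable{Y,v}$ equals $Y=X\cap\Atoms$ by the definition of $\Answer{X}$. For external atoms, Proposition~\ref{prop:consistentclosed} supplies $\TheoryAtomsES$\nobreakdash-completeness of $X$. If $\thAt\in X\cap\TheoryAtomsES$, then $v^\star\in\denAT{\thAt}$ and $\VARSAT{\thAt}\subseteq\Sigma$, so $v^\star$ witnesses $v\IN\denAT{\thAt}$. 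If $\thAt\in\TheoryAtomsES\setminus X$, then $\comp{\thAt}\in X$; absolute complement gives $v^\star\notin\denAT{\thAt}$, and any alleged witness for $v\IN\denAT{\thAt}$ would agree with $v^\star$ on $\VARSAT{\thAt}=\VARSAT{\comp{\thAt}}\subseteq\Sigma$, contradicting clause~\ref{def:structure.5} of Definition~\ref{def:structure}. Hence the external-atom summand of $\Stable{Y,v}$ matches $X\cap\TheoryAtomsES$.

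\textbf{Founded atoms: the main obstacle.} A preliminary body-translation observation is needed: $(Y,v)\models B$ iff $X\models B$ for every rule body $B$ of $P$. The regular-atom case is trivial; theory atoms occurring in $B$ must be external (founded atoms never occur in bodies), so the preceding paragraph applies. One inclusion is then routine: if $(\thAt\leftarrow B)\in P$ and $(Y,v)\models B$, then $X\models B$ and $X$ satisfies the rule, forcing $\thAt\in X$. The reverse inclusion---every founded $\thAt\in X$ comes from a supporting rule---is the hard step and rests on equilibrium minimality. Suppose for contradiction that no rule supports $\thAt$, let $t$ be the valuation corresponding to $X$, and set $h=t\setminus\{\thAt\mapsto\true\}$, so $h\subset t$. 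Because $\thAt$ never occurs in a body, $\tuple{h,t}\models B'$ iff $\tuple{t,t}\models B'$ for every rule body $B'$ of $P$; and because $\thAt$ is founded, it is distinct from every $\thAt'\in\TheoryAtomsES$ and its $\comp{\thAt'}$ (the latter using closedness of $\TheoryAtomsES$). These two facts together imply that $\tuple{h,t}$ still satisfies every rule of $P$ (heads other than $\thAt$ pass by the first fact, while rules with head $\thAt$ have a body failing under $t$ by the no-support assumption and hence also under $h$) and every external disjunction $\thAt'\vee\comp{\thAt'}$, contradicting the equilibrium of $\tuple{t,t}$. This completes $(\star)$ and hence the proposition.
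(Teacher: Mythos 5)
The paper does not actually prove this proposition: it is imported verbatim from \citet[Proposition~8]{cafascwa23a}, so there is no in-paper proof to compare against. Judged on its own, your argument is correct and self-contained, and the identity $(\star)$ ($X=\Stable{Y,v}$ for every $(Y,v)\in\Answer{X}$) is exactly the right crux from which both the bijection and the \emph{furthermore} clause fall out. The three summands are handled properly: the external case correctly uses $\TheoryAtomsES$-completeness plus the absolute complement and clause~\ref{def:structure.5} of Definition~\ref{def:structure} to rule out spurious witnesses for $v\IN\denAT{\thAt}$ (this is where closedness of $\TheoryAtomsES$ is genuinely needed, since it guarantees $\VARSAT{\thAt}\subseteq\Sigma$ for \emph{every} external atom), and the founded case correctly reduces the hard inclusion to the standard unsupported-atom minimality argument, with the right observation that founded atoms occur neither in bodies nor in the external disjunctions. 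Two minor points worth tightening: the witness $v^\star$ agreeing with $v$ on $\Sigma$ exists by the very definition of $\Answer{X}$ (namely $v=\restr{w}{\Sigma}$ for some $w\in\denAT{X\cap\TheoryAtoms}$), not by compositionality, which only gives nonemptiness of $\denAT{X\cap\TheoryAtoms}$; and your argument silently works with the simplified Definition~\ref{def:stable.model.simple}, which is legitimate but deserves a sentence noting that a compositional theory with an absolute complement is automatically consistent and monotonic, so Theorem~\ref{thm:stable.model.simple} applies.
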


In the following,
we restrict ourselves to consistent compositional theories with an absolute complement and refer to them just as theories.

 \subsection{\HTC-characterization}
\label{sec:htcsemantics}

We now present a direct encoding of \tprogram{s} as \HTC\ theories.
This encoding is ``direct'' in the sense that it preserves the structure of each program rule by rule and atom by atom,
only requiring the addition of a fixed set of axioms.
As a first step, we start embodying compositional theories in \HTC\ by mapping their respective structures
(domain, variables, valuations, and denotations)
while having in mind that \HTC\ may tolerate multiple abstract theories in the same formalization.
For this reason, when we encode a theory
$\AT= \langle \TheoryAtoms_\AT, \comp{\cdot}\, , \X_\AT,\D_\AT,\varsATfunction_\AT, \den{\cdot}_\AT \rangle$ into an
\HTC\ theory over a signature $\tuple{\X,\D,\C}$,
we only require $\X_\AT \subseteq \X$ and $\D_\AT \subseteq \D$,
so that the \HTC\ signature may also include variables and domain values other than the ones in \AT.
We then map each abstract theory atom $\thAt \in \TheoryAtoms$ into a corresponding \HTC\nobreakdash-constraint atom ${\htcsemanticsA(\thAt) \in \C}$
with the same variables~${\vars{\htcsemanticsA(\thAt)} = \varsT{\AT}{\thAt}}$.
We also require the following relation between the theory denotation of a theory atom and
the \HTC\ denotation of its corresponding constraint atom:
\begin{align}
\den{\htcsemanticsA(\thAt)} & \ \eqdef \
     \{v\in\Vu^{\X,\D}  \mid  \exists w\in\den{\thAt}_\AT, \ \restr{v}{\XAT}=w \} \label{f:atomden}
\end{align}
Note that \HTC\ valuations $v\in\V^{\X,\D}$ apply to a (possibly) larger set of variables ${\X \supseteq \X_\AT}$
and on larger domains ${\Du \supset \D_\AT}$,
which include the element $\undefined \not\in \D_\AT$ to represent undefined variables in \HTC.
The denotation $\den{\htcsemanticsA(\thAt)}$ collects all possible \HTC-valuations that coincide with some \AT-valuation $w \in \den{\thAt}_\AT$ on the variables of theory~$\AT$, letting everything else vary freely.

We write $\htcsemanticsA(S)$ for $\{ \htcsemanticsA(\thAt) \mid \thAt \in S \}$ and~$S \subseteq \TheoryAtoms$.
The following result shows that this mapping of denotations preserves \AT-satisfiability:
\begin{proposition}\label{prop:translation.satisfiable}
  Given a theory~$\AT= \langle \TheoryAtoms_\AT, \comp{\cdot}\, , \X_\AT,\D_\AT,\varsATfunction_\AT, \den{\cdot}_\AT \rangle$, a set~${S \subseteq \TheoryAtoms}$ of theory atoms is \AT\nobreakdash-satisfiable iff~$\htcsemanticsA(S)$ is satisfiable in~$\HTC$.
\end{proposition}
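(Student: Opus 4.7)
The plan is to unfold both sides into conditions about the denotation of $S$ in the structured theory and then to use compositionality to bridge them, constructing an explicit \HTC-interpretation in the forward direction and reading one off in the backward direction.

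For the forward direction, I would suppose $S$ is \AT-satisfiable. Since we have restricted to compositional theories, this is equivalent to $\denAT{S} \neq \emptyset$, so I pick some $w \in \denAT{S} = \bigcap_{\thAt \in S}\denAT{\thAt}$. I then lift $w\in\VAT$ to an \HTC-valuation~$v \in \Vu^{\X,\D}$ by defining $v(x) = w(x)$ for every $x \in \XAT$ and $v(x) = \undefined$ otherwise. By construction $\restr{v}{\XAT} = w$ (viewing both as sets of non-$\undefined$ pairs), and $w \in \denAT{\thAt}$ witnesses $v \in \den{\htcsemanticsA(\thAt)}$ via equation~\eqref{f:atomden}, for every $\thAt \in S$. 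Hence the total interpretation $\tuple{v,v}$ satisfies every atom of $\htcsemanticsA(S)$, showing satisfiability in \HTC.

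For the backward direction, I start from an \HTC-interpretation $\tuple{h,t}$ that models every formula of $\htcsemanticsA(S)$. Since each $\htcsemanticsA(\thAt)$ is a constraint atom, satisfaction reduces via clause~\ref{item:htc:atom} of the \HTC-semantics to $h \in \den{\htcsemanticsA(\thAt)}$ for every $\thAt \in S$. Unfolding~\eqref{f:atomden}, for each such $\thAt$ there exists $w_\thAt \in \denAT{\thAt}$ with $\restr{h}{\XAT} = w_\thAt$. The key observation is that the left hand side of these equalities does not depend on $\thAt$, so all the witnesses $w_\thAt$ coincide with a single assignment $w \eqdef \restr{h}{\XAT} \in \VAT$, which therefore lies in $\bigcap_{\thAt\in S} \denAT{\thAt} = \denAT{S}$. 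In particular $\denAT{S} \neq \emptyset$, so $S \in \Sat$ by compositionality.

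The main technical subtlety, rather than an obstacle, is checking that $\restr{h}{\XAT}$ really is a total valuation in $\VAT$ whenever $h \in \den{\htcsemanticsA(\thAt)}$ for some $\thAt$: the requirement $\restr{h}{\XAT} = w$ with $w:\XAT\to\DAT$ forces every variable in $\XAT$ to be defined in $h$, which is exactly what is needed to treat $\restr{h}{\XAT}$ as a member of $\VAT$. Once this is noted, both directions are direct manipulations of \eqref{f:atomden} together with the compositional characterization of $\Sat$, and no appeal to the nonmonotonic layer of \HTC\ is required.
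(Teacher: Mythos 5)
Your proof is correct and follows essentially the same route as the paper's: both directions reduce to the equivalence $t\in\den{\htcsemanticsA(S)}$ iff $\restr{t}{\XAT}\in\denAT{S}$ (which the paper factors out as Lemma~\ref{lem:translation.valuations} and you prove inline, including the key observation that the witnesses $w_\thAt$ from~\eqref{f:atomden} all coincide with $\restr{h}{\XAT}$), combined with the compositional characterization of $\Sat$.
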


Let us now consider \tprogram{s} $P$ over~$\langle\Atoms,\TheoryAtoms\rangle$,
that are associated with a structured theory~${\AT= \langle \TheoryAtoms_\AT, \comp{\cdot}\, , \X_\AT,\D_\AT,\varsATfunction_\AT, \den{\cdot}_\AT \rangle}$.
As explained in Section~\ref{sec:htc},
we can capture regular programs by identifying regular atoms~$\Atoms$ with both variables and constraint atoms and
having the truth constant~$\true$ in the domain.
We can capture this by adding regular atoms both as variables and as constraint atoms to the signature of the \HTC\ theory, so we have
\begin{align}
  \Atoms \cup \X_\AT &\subseteq \X
  \label{eq:ex.htc.variables}
  \\
  \Atoms \cup \{ \tau(\thAt) \mid \thAt \in \TheoryAtoms_\AT \} &\subseteq \C
  \label{eq:ex.htc.atoms}
\end{align}
and the truth constant~$\true$ in the domain, so we have~~$\D_\AT \cup \{ \mathbf{t} \} \subseteq \D$.
For each regular atom~$\regAt \in \Atoms$, its variables and denotation are defined as follows:
\begin{itemize}
  \item $\vars{\regAt}=\{\vregAt\}$, and
  \item $\den{\regAt} = \den{\Dom{\{\mathbf{t}\}}{\vregAt}} = \{v \in \V^{\X,\D} \mid v(\vregAt)=\mathbf{t}\}$.
\end{itemize}
With this encoding, each regular atom $\regAt \in \Atoms$ is mapped into itself in the \HTC\nobreakdash-theory,
viz.\ $\htcsemanticsA(\regAt) \eqdef \regAt$.

Since the resulting~$\HTC$-theory contains variables from the abstract theory and the regular atoms,
we are interested in forcing variables to range only over their corresponding subdomain,
one from the external theory and one for Boolean values.
This is achieved by including an axiom of form
\begin{eqnarray}
\df(x) \to \Dom{\D_\AT}{x} \label{f:domain}
\end{eqnarray}
for each $x \in \X_\AT$ and each abstract theory $\AT$ encoded in our \HTC\ formalization.
The application of any \uvaluation\ $v \in \Vu^{\X_\AT,\D}$ satisfying~\eqref{f:domain}
to a variable $x \in \X_\AT$ from the abstract theory
returns some element from the theory domain $v(x) \in \D_\AT$ or is undefined, viz.\ $v(x) = \undefined$.
Let us denote by $\DOM{\AT,\Atoms}$ the set of all the axioms of the form of~\eqref{f:domain} for each $x \in \X_\AT$ plus all the axioms of the form \eqref{f:booldomain} for every ${\regAt \in \Atoms}$.

We are now ready to introduce the direct translation of a \tprogram\ $P$ over~$\tuple{\Atoms,\TheoryAtoms}$ with external theory~$\AT$ into an \htctheory~$\htcsemanticsP(P,\AT,\TheoryAtomsES)$ where~$\TheoryAtomsES$ is a set of theory atoms considered external in~$P$.
Given any rule $r$ of the form of~\eqref{theory:rule}, we let $\htcsemanticsA^B(r)$ stand for the formula
\begin{align}\label{eq:htcsemantics:body}
  \htcsemanticsA(\anyAt_1)\land\dots\land \htcsemanticsA(\anyAt_n)\land\neg \htcsemanticsA(\anyAt_{n+1})\land\dots\land\neg \htcsemanticsA(\anyAt_{m})
\end{align}
representing the body of $r$, where each occurrence of an atom $b_i$ in the program is replaced by $\htcsemanticsA(b_i)$.
We extend the application of $\htcsemanticsA$ to the whole rule $r$ and let $\htcsemanticsA(r)$ stand for
\begin{align}\label{eq:htcsemantics:rule}
  \htcsemanticsA^B(r) \rightarrow \htcsemanticsA(\anyAt_0)
\end{align}
assuming that, when the head is $\anyAt_0=\bot$, its translation is simply $\htcsemanticsA(\bot) \eqdef \bot$.
We further write ${\htcsemanticsA(P) \eqdef \{\htcsemanticsA(r) \mid r \in P \}}$, applying our transformation to all rules in program $P$.
The complete translation of \tprogram~$P$ is defined as
\begin{align}\label{eq:htcsemantics:complete}
  \htcsemanticsP(P,\AT,\TheoryAtomsES) \eqdef \htcsemanticsA(P) \cup \SEM{\TheoryAtomsES} \cup \DOM{\AT,\Atoms}
\end{align}
where $\SEM{\TheoryAtomsES}$ consists of
\begin{align}\label{eq:htcsemantics:external}
\df(x)
  \qquad
  \text{ for every } x \in \vars{\thAt} \text{ and }\thAt \in \TheoryAtomsES.
\end{align}
Formula~\eqref{eq:htcsemantics:external} asserts that each variable in any external atom can be arbitrarily assigned some (defined) value.
\begin{proposition}
  Let $P$ be a \tprogram\ over $\tuple{\Atoms,\TheoryAtoms}$ with external theory~$\AT$ and external atoms~$\TheoryAtomsES$.
For every~$\thAt \in \TheoryAtomsES$,
  \begin{gather*}
    \htcsemanticsP(P,\AT,\TheoryAtomsES) \models \htcsemanticsA(\thAt) \vee \htcsemanticsA(\comp{\thAt}).
  \end{gather*}
\end{proposition}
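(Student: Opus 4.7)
The plan is to show semantic entailment directly: fix an arbitrary model $\tuple{h,t}$ of $\htcsemanticsP(P,\AT,\TheoryAtomsES)$ and verify that $\tuple{h,t}\models \htcsemanticsA(\thAt)\lor\htcsemanticsA(\comp{\thAt})$. By the clause for atomic formulas in the HTC semantics and item~\ref{item:htc:atom}, this reduces to showing $h\in\den{\htcsemanticsA(\thAt)}$ or $h\in\den{\htcsemanticsA(\comp{\thAt})}$.

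First I would use the two auxiliary components of the translation to obtain definedness and domain information on the variables of $\thAt$. Since $\thAt\in\TheoryAtomsES$, the set $\SEM{\TheoryAtomsES}$ of~\eqref{eq:htcsemantics:external} contains the fact $\df(x)$ for every $x\in\vars{\thAt}$, so $\tuple{h,t}\models\df(x)$ and hence $h(x)\neq\undefined$. Applying the axiom $\df(x)\to\Dom{\D_\AT}{x}$ from $\DOM{\AT,\Atoms}$ (available because $\vars{\thAt}\subseteq\X_\AT$) then forces $h(x)\in\D_\AT$. Consequently $h|_{\vars{\thAt}}$ is a total map from $\vars{\thAt}$ into $\D_\AT$, and we can pick any total extension $w:\X_\AT\to\D_\AT$ of it (the particular choice will not matter).

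Second, I would invoke the absolute complement property that is in force throughout Section~\ref{sec:htcsemantics}: $\den{\comp{\thAt}}_\AT = \VAT\setminus\den{\thAt}_\AT$. Therefore, for the chosen total assignment $w\in\VAT$, exactly one of $w\in\den{\thAt}_\AT$ or $w\in\den{\comp{\thAt}}_\AT$ holds. Using the defining equation~\eqref{f:atomden}, if $w\in\den{\thAt}_\AT$ then $h\in\den{\htcsemanticsA(\thAt)}$, whereas if $w\in\den{\comp{\thAt}}_\AT$ then $h\in\den{\htcsemanticsA(\comp{\thAt})}$, since $h|_{\X_\AT}$ agrees with $w$ on the variables that matter, namely $\vars{\thAt}=\vars{\comp{\thAt}}$ (the last equality holds by Definition~\ref{def:structure} item~3). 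In either case, $\tuple{h,t}\models\htcsemanticsA(\thAt)\lor\htcsemanticsA(\comp{\thAt})$, concluding the proof.

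I expect the only mildly subtle point to be the handling of variables of $\X_\AT\setminus\vars{\thAt}$ that may remain undefined in $h$: the literal reading of~\eqref{f:atomden} requires a witness $w$ that matches $h$ on all of $\X_\AT$, and we only control $h$ on $\vars{\thAt}$. This is bridged by condition~\ref{den:prt:2} on denotations together with item~\ref{def:structure.5} of Definition~\ref{def:structure}, both of which guarantee that the denotation depends only on the variables of $\thAt$, so arbitrarily extending $h|_{\vars{\thAt}}$ to the rest of $\X_\AT$ is legitimate. No use of persistence or of the program rules $\htcsemanticsA(P)$ is needed — the entailment follows purely from the structural axioms $\SEM{\TheoryAtomsES}\cup\DOM{\AT,\Atoms}$ together with the absolute-complement assumption on $\AT$.
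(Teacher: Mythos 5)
Your proof is correct and follows essentially the same route the paper takes (the paper never isolates a proof of this proposition, but the identical argument appears inside the proof of Lemma~\ref{lem:translation15.htmodels}): the facts $\df(x)$ for $x\in\vars{\thAt}$ together with the domain axioms force $h$ to be defined on $\vars{\thAt}$ with values in $\D_\AT$, and the absolute-complement assumption then yields $h\in\den{\htcsemanticsA(\thAt)}$ or $h\in\den{\htcsemanticsA(\comp{\thAt})}$ via~\eqref{f:atomden}. Your explicit treatment of the variables in $\X_\AT\setminus\vars{\thAt}$ via condition~\ref{den:prt:2} and item~\ref{def:structure.5} of Definition~\ref{def:structure} is a welcome clarification of a point the paper glosses over.
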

This proposition means that either $\thAt$ or its complement~$\comp{\thAt}$ must hold.
In other words,
axioms of form~\eqref{eq:htcsemantics:external} entail a kind of \emph{strong excluded middle} for external atoms
similarly to the simplified definition of a $\langle\AT,\TheoryAtomsES\rangle$-stable model
(in Definition~\ref{def:stable.model.simple}).
This is a stronger version of the usual choice construct~${\htcsemanticsA(\thAt)\vee\neg\htcsemanticsA(\thAt)}$ in \HT:
we can freely add $\htcsemanticsA(\thAt)$ or not but, when the latter happens,
we further provide evidence for the complement~$\htcsemanticsA(\comp{\thAt})$.

Following our running example,
consider program~$P$ consisting of rules~\eqref{f:ex2.1} and~\eqref{f:ex2.2}, plus the fact
\begin{align}\label{eq:ex.htc.fact}
  \Sum{s} = 130.
\end{align}
This program has a unique answer set where~$\regAt$ is true and~$s$ is assigned the value~$130$.

We assume the following translation of constraint atoms:
\begin{align*}
  \htcsemanticsA(\Sum{s} >=120) \ &\eqdef \ s\geq120
  \\
  \htcsemanticsA(\Sum{s} > 100) \ &\eqdef \ s>100
  \\
  \htcsemanticsA(\Sum{s} = 130) \ &\eqdef \ s=130
\end{align*}
Hence,  $\htcsemanticsP(P)$ is the \HTC\ theory
$\{\eqref{eq:ex.htc.contraint},\,\eqref{eq:ex.htc.rule},\, {s=130}\}$.
Given that constraint atoms ${\Sum{s}\geq120}$ and~${\Sum{s}>100}$ occur in the body, they must be external.
We assume that constraint atom~\eqref{eq:ex.htc.fact} is not external (see discussion below).
Thus, $\htcsemanticsP(P,\AT,\TheoryAtomsES)$ is the result of adding to~$\htcsemanticsP(P)$ the following axioms:
\begin{align}
  &\df(\vregAt) \rightarrow \regAt \tag{\ref{f:booldomain}}
  \\
  &\df(s) \rightarrow \Dom{\D_\LC}{s}
  \label{eq:ex.htc.domain.integer}
  \\
  &\df(s)
  \label{eq:ex.htc.domain.fact}
\end{align}
We can replace the last two axioms simply by~$\Dom{\D_\LC}{s}$,
which ensures that the variable~$s$ is assigned a value from the domain~$\D_\LC$ of linear constraints, namely, an integer.
Recall that~\eqref{f:booldomain} ensures that~$\vregAt$ is assigned a Boolean value, either~$\true$ or~$\undefined$.
This \HTC\nobreakdash-theory has a unique stable model
$$\{ \vregAt \mapsto \true, \, s \mapsto 130 \}.$$
This stable model corresponds to the unique answer set of program~$P$.

The following theorem states the relation between the answer sets of a \tprogram\ $P$ and the equilibrium models of its translation $\htcsemanticsP(P,\AT,\TheoryAtomsES)$ which we saw in the previous example holds in general.
\begin{theorem}\label{thm:first_translation}
  Let $P$ be a \tprogram\ over $\langle\Atoms,\TheoryAtoms\rangle$ with \TheoryAtomsES\ being a closed
  set of external atoms, and
  let ${\AT=\langle \TheoryAtoms, \mathcal{S}, \comp{\cdot} \,\rangle}$ be a consistent, compositional theory.
Then, there is a one-to-one correspondence between the $\langle\AT,\TheoryAtomsES\rangle$\nobreakdash-answer sets of~$P$ and
  the equilibrium models of~$\htcsemanticsP(P,\AT,\TheoryAtomsES)$ such that
  $(Y,v)$ is a $\langle\AT,\TheoryAtomsES\rangle$\nobreakdash-answer set of $P$ iff~$\tuple{t,t}$ is an equilibrium model of theory~$\htcsemanticsP(P,\AT,\TheoryAtomsES)$
  with~$t = v \cup \{ p \mapsto \true \mid p \in Y \}$.
\end{theorem}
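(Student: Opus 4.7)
My plan is to factor the correspondence through $\langle\AT,\TheoryAtomsES\rangle$\nobreakdash-stable models of $P$ and then close with Proposition~\ref{prop:answer.sets.correspondence}. First observe that compositionality of $\AT$ trivially entails monotonicity: if $S \subseteq S'$ and $\denAT{S'} \neq \emptyset$, then $\denAT{S} \supseteq \denAT{S'} \neq \emptyset$, so $S \in \Sat$. Together with consistency of $\AT$ and closedness of $\TheoryAtomsES$, this allows invoking Theorem~\ref{thm:stable.model.simple}, so that the $\langle\AT,\TheoryAtomsES\rangle$\nobreakdash-stable models of $P$ are exactly the sets $X \subseteq \Atoms\cup\TheoryAtoms$ with $X\cap\TheoryAtoms \in \Sat$ that are regular stable models of $P \cup \{\thAt \vee \comp{\thAt} \mid \thAt\in\TheoryAtomsES\}$. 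It thus suffices to establish a bijection between such $X$ and equilibrium models $\tuple{t,t}$ of $\htcsemanticsP(P,\AT,\TheoryAtomsES)$, where $t = v \cup \{p \mapsto \true \mid p \in Y\}$ for any $(Y,v) \in \Answer{X}$ and $X = \Stable{Y,v}$ respectively.

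The technical core would be a local satisfaction lemma: for any \HTC-interpretation $\tuple{h,t}$ over the signature in Section~\ref{sec:htcsemantics} satisfying $\DOM{\AT,\Atoms}$, writing $Y_h \eqdef \{p \in \Atoms \mid h(p) = \true\}$ and $v_h \eqdef \restr{h}{\XAT}$, one shows $\tuple{h,t} \models \htcsemanticsA(\thAt)$ iff $v_h \IN \denAT{\thAt}$ by the denotation in~\eqref{f:atomden}, and $\tuple{h,t} \models \regAt$ iff $p \in Y_h$ by the denotation of regular atoms together with axiom~\eqref{f:booldomain}. This lifts directly to rules via~\eqref{eq:htcsemantics:rule}, so satisfaction of $\htcsemanticsA(P)$ by $\tuple{t,t}$ matches classical satisfaction of $P$ by the induced $(Y_t,v_t)$; meanwhile, Proposition~\ref{prop:translation.satisfiable} converts $\AT$\nobreakdash-satisfiability of $X\cap\TheoryAtoms$ into satisfiability of $\htcsemanticsA(X\cap\TheoryAtoms)$ in \HTC. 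Crucially, the $\SEM{\TheoryAtomsES}$ facts force every variable of every external atom to be defined in both worlds, which combined with~\eqref{f:domain} and~\eqref{f:atomden} entails $\htcsemanticsA(\thAt) \vee \htcsemanticsA(\comp{\thAt})$ for each external $\thAt$, mirroring the choice $\thAt \vee \comp{\thAt}$ of Definition~\ref{def:stable.model.simple}.

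From here both directions run by contraposition on minimality. Starting from an equilibrium model $\tuple{t,t}$ I extract $(Y,v)$ and set $X \eqdef \Stable{Y,v}$; then $X\cap\TheoryAtoms \in \Sat$ follows from Proposition~\ref{prop:translation.satisfiable}, and any regular-stable-model witness $X' \subsetneq X$ would lift to $h \subsetneq t$ by setting the variable of each dropped regular atom to $\undefined$ and, using the absolute-complement hypothesis, adjusting the internal variables of each dropped founded theory atom so that its translation is no longer satisfied at $h$ while keeping $h \subseteq t$. Conversely, from a stable model $X$ the canonical $t$ yields an \HTC-model of $\htcsemanticsP(P,\AT,\TheoryAtomsES)$ by the lemma, and any purported smaller $\tuple{h,t}$ would induce $X' \eqdef Y_h \cup \{\thAt \in X\cap\TheoryAtoms \mid \tuple{h,t} \models \htcsemanticsA(\thAt)\} \subsetneq X$ contradicting regular stability of $X$ in the sense of Definition~\ref{def:stable.model.simple}.

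The main obstacle is aligning the two notions of ``smaller model''. In \HTC, $h \subsetneq t$ mixes dropping regular atoms (turning a variable into $\undefined$) with modifying the assignment of theory variables, whereas the propositional regular-stable-model reading treats all atoms uniformly and simply counts set inclusions. The $\SEM{\TheoryAtomsES}$ axioms pin external-atom variables down in any here-world, so external theory atoms cannot flip between $h$ and $t$, and axiom~\eqref{f:booldomain} forces each regular-atom variable to be either $\true$ or $\undefined$. Handling the remaining degrees of freedom---the internal variables of founded theory atoms---requires careful use of the absolute-complement property of $\AT$ together with persistence to guarantee that each propositional witness can be translated into a here-world that genuinely falsifies the translation, and vice versa. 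Once this alignment of minimality witnesses is in place, Proposition~\ref{prop:answer.sets.correspondence} immediately upgrades the stable-model bijection to the claimed bijection on answer sets.
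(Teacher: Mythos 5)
Your overall skeleton is reasonable---observing that compositionality yields monotonicity so that Theorem~\ref{thm:stable.model.simple} applies, and closing with Proposition~\ref{prop:answer.sets.correspondence}---and this much agrees with what the paper does. However, the step you yourself flag as ``the main obstacle'' is precisely where the proof lives, and the construction you sketch for it does not work. To lift a propositional witness $X'\subsetneq X$ to a here-valuation $h\subsetneq t$, you propose ``adjusting the internal variables of each dropped founded theory atom so that its translation is no longer satisfied at $h$ while keeping $h\subseteq t$''. But $h\subseteq t$ permits only \emph{undefining} variables, never changing their values; undefining a variable occurring in an external atom is forbidden outright by the facts $\SEM{\TheoryAtomsES}$, and undefining one shared with another satisfied head atom $\htcsemanticsA(\thAt')$ falsifies that atom as well, so $\tuple{h,t}$ may fail to be a model of $\htcsemanticsA(P)$ and yields no contradiction. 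Indeed, when all variables of a dropped founded atom occur in external atoms, \emph{no} witness $h$ exists at all---this is exactly the phenomenon behind the corollary on implicitly external atoms, and it shows that the set of theory atoms whose translations are satisfied by $t$ generally differs from $X\cap\TheoryAtoms$. Relatedly, the ``bijection between such $X$ and equilibrium models'' you aim for cannot exist: a single stable model $X$ corresponds to one equilibrium model per answer set in $\Answer{X}$ (consider a program consisting of the single fact $\Sum{s}>100$), which is why the theorem is stated at the level of answer sets in the first place.

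The paper resolves the minimality-alignment problem by a device your proposal lacks: an intermediate translation $\htcsemanticsP_2$ that introduces a fresh propositional atom $\prop{\thAt}$ for every theory atom, linked to the constraint $\htcsemanticsA(\thAt)$ by explicit implications, so that propositional minimization (which atoms are derived) and valuation minimization (which variables are defined) live on disjoint parts of the signature and can be related via a splitting theorem for \HTC\ theories. The auxiliary atoms are then eliminated through a chain of model-preserving rewritings $\htcsemanticsP_3,\htcsemanticsP_4,\htcsemanticsP_5$, the last of which is shown to have the same \HTC-models as $\htcsemanticsP$. Without some such decoupling mechanism---or a substitute argument that genuinely handles shared variables and externally pinned variables---your direct contraposition on minimality has a gap at its core.
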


This result shows that the semantics of a \tprogram~$P$ can alternatively be described as the equilibrium models of the \htctheory~$\htcsemanticsP(P,\AT,\TheoryAtomsES)$.
Intuitively, formulas of form~$\eqref{eq:htcsemantics:rule}$ capture the rules in the \tprogram\ and are used for the same purpose,
that is, to decide which founded atoms from $\TheoryAtoms \setminus \TheoryAtomsES$ can be eventually derived.
Furthermore, due to the minimization imposed to obtain an equilibrium model $\tuple{t,t}$,
if a founded atom $\thAt$ is not derived (that is, $\tuple{t,t} \not\models \htcsemanticsA(\thAt)$),
then all its variables $x \in \varsT{\AT}{\thAt}$ not occurring in external atoms or other derived atoms are left undefined,
viz.\ $t(x)=\undefined$.

An interesting consequence of the characterization provided by Theorem~\ref{thm:first_translation} is that all atoms whose variables occur in external atoms are implicitly external, even if we do not declare them as that.
This result is trivial when looked at the definition of~$\htcsemanticsP(P,\AT,\TheoryAtomsES)$, but it is far from obvious when considering the definition of $\langle\AT,\TheoryAtomsES\rangle$\nobreakdash-answer sets based on Definitions~\ref{def:stable.model} or~\ref{def:stable.model.simple}.
\begin{corollary}
  Let $P$ be \tprogram{s} over
  $\langle\Atoms,\TheoryAtoms\rangle$
  where
  $\TheoryAtomsES$ and~$\TheoryAtomsES'$ are sets of external theory atoms such that $\TheoryAtomsES \subseteq \TheoryAtomsES'$,
  and
  let \AT\ be a consistent, compositional theory.
If every $\thAt \in \TheoryAtomsES'$ satisfies~$\varsAT{\thAt} \subseteq \varsAT{\TheoryAtomsES}$,
  then
  the~$\langle\AT,\TheoryAtomsES\rangle$\nobreakdash-answer sets of $P$ and
  the $\langle\AT,\TheoryAtomsES'\rangle$-answer sets of~$P$ coincide.
\end{corollary}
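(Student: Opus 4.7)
My plan is to reduce the corollary directly to the identity $\htcsemanticsP(P,\AT,\TheoryAtomsES) = \htcsemanticsP(P,\AT,\TheoryAtomsES')$, and then invoke Theorem~\ref{thm:first_translation} on both sides to transport the coincidence from \HTC-equilibrium models back to answer sets.

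First, I would unfold the definition \eqref{eq:htcsemantics:complete} of the translation, $\htcsemanticsP(P,\AT,\TheoryAtomsES) = \htcsemanticsA(P) \cup \SEM{\TheoryAtomsES} \cup \DOM{\AT,\Atoms}$, and observe that neither $\htcsemanticsA(P)$ nor $\DOM{\AT,\Atoms}$ mentions $\TheoryAtomsES$. Hence the entire dependence on the external-atom parameter is concentrated in $\SEM{\TheoryAtomsES}$, which by \eqref{eq:htcsemantics:external} is the set of facts $\{\df(x) \mid x \in \varsAT{\TheoryAtomsES}\}$ and is therefore completely determined by the variable footprint $\varsAT{\TheoryAtomsES} = \bigcup_{\thAt \in \TheoryAtomsES}\varsAT{\thAt}$. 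So it suffices to show that $\varsAT{\TheoryAtomsES} = \varsAT{\TheoryAtomsES'}$.

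Next, the inclusion $\TheoryAtomsES \subseteq \TheoryAtomsES'$ immediately yields $\varsAT{\TheoryAtomsES} \subseteq \varsAT{\TheoryAtomsES'}$, while the standing hypothesis—that $\varsAT{\thAt} \subseteq \varsAT{\TheoryAtomsES}$ for every $\thAt \in \TheoryAtomsES'$—gives the reverse inclusion after taking the union over $\thAt \in \TheoryAtomsES'$. Combining the two, $\varsAT{\TheoryAtomsES} = \varsAT{\TheoryAtomsES'}$, which forces $\SEM{\TheoryAtomsES} = \SEM{\TheoryAtomsES'}$ and therefore $\htcsemanticsP(P,\AT,\TheoryAtomsES) = \htcsemanticsP(P,\AT,\TheoryAtomsES')$ as literal sets of \HTC-formulas.

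Finally, I would apply Theorem~\ref{thm:first_translation} to each side (the preconditions of consistency and compositionality of $\AT$ are part of the hypothesis, and closedness carries over to both parameter sets): each family of answer sets of $P$ is placed in one-to-one correspondence with the equilibrium models of one and the same \HTC-theory via the common rule $t = v \cup \{\map{p}{\true} \mid p \in Y\}$. Composing the two bijections yields the claimed coincidence. I do not expect any real obstacle: the content of the corollary is precisely that the direct \HTC-translation is engineered to see the external-atom set only through its variable footprint, so enlarging $\TheoryAtomsES$ within its own variables is invisible both to the translation and, consequently, to the notion of answer set.
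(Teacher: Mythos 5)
Your proposal is correct and follows exactly the route the paper intends: the translation $\htcsemanticsP(P,\AT,\TheoryAtomsES)$ depends on the external-atom set only through $\SEM{\TheoryAtomsES}$, i.e.\ through the variable footprint $\varsAT{\TheoryAtomsES}$, which the two hypotheses force to equal $\varsAT{\TheoryAtomsES'}$, so the two translations are literally the same theory and Theorem~\ref{thm:first_translation} transports the identity of equilibrium models back to answer sets. The only cosmetic remark is that closedness of $\TheoryAtomsES'$ is a standing assumption of the paper rather than something that ``carries over'' from $\TheoryAtomsES$, but this does not affect the argument.
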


In other words, extending the set of external atoms by adding new ones whose variables are already occurring in other external atoms does not change the answer sets of the program.
For instance, continuing with our running example, we have assumed that constraint atom~\eqref{eq:ex.htc.fact} is not external, but because its only variable~$s$ occurs in external atoms, it is implicitly external.
As stated above, assuming that it is external instead, does not change the answer sets of the program.

\section{Strong Equivalence of \tprogram{s}}\label{sec:stronge.lp}

Finally,
it is time to return our attention to the question we pose in the introduction:
can we remove rule~\eqref{f:ex2.1} from any program containing rules~\eqref{f:ex2.1} and~\eqref{f:ex2.2} without changing its semantics?
Continuing with our running example,
recall that
$$
\htcsemanticsP(P,\AT,\TheoryAtomsES) \ = \ \{\eqref{f:booldomain},  \eqref{eq:ex.htc.contraint}, \eqref{eq:ex.htc.rule}, \eqref{eq:ex.htc.domain.integer}, \eqref{eq:ex.htc.domain.fact} \}
$$
and, that we saw in Section~\ref{sec:strongeq} that~$\{\eqref{eq:ex.htc.contraint}, \eqref{eq:ex.htc.rule} \}$ and~$\eqref{eq:ex.htc.rule}$ are strongly equivalent.
Therefore, in any theory containing~\eqref{eq:ex.htc.rule}, we can remove~\eqref{eq:ex.htc.contraint} without changing its equilibrium models.
By Theorem~\ref{thm:first_translation}, this means that, any program~$P'$ satisfying
$$
\htcsemanticsP(P',\AT,\TheoryAtomsES) = \{\eqref{f:booldomain},  \eqref{eq:ex.htc.rule}, \eqref{eq:ex.htc.domain.integer}, \eqref{eq:ex.htc.domain.fact} \}
$$
has the same answer sets as program~$P$.
That is, in this particular example, we can safely remove rule~\eqref{f:ex2.1} without changing the semantics of the program.
This same reasoning applies to any program containing these two rules and not only to this one.

We can formalize this idea by defining strong equivalence of \tprogram{s}.
\begin{definition}[Strong equivalence of \tprogram{s}]
  \tprogram{s} $P$ and $Q$ are \emph{strongly equivalent} with respect to an external theory~$\AT$ and
  a set~$\TheoryAtomsES$ of theory atoms considered external,
  whenever \HTC-theories $\htcsemanticsP(P,\AT,\TheoryAtomsES)$ and $\htcsemanticsP(Q,\AT,\TheoryAtomsES)$ are strongly equivalent.
\end{definition}

\begin{corollary}
  If \tprogram{s}~$P$ and~$Q$ are strongly equivalent, then~$P \cup R$ and~$Q \cup R$ have the same answer sets for any program~$R$.
\end{corollary}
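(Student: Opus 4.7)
My plan is to prove the corollary by reducing it to the strong equivalence of the \HTC-translations and invoking Theorem~\ref{thm:first_translation} in both directions. First, I fix an external theory~$\AT$ and a closed set~$\TheoryAtomsES$ of external atoms with respect to which $P$ and $Q$ are assumed strongly equivalent, and I pick an arbitrary program~$R$. By definition of strong equivalence of \tprogram{s}, the \HTC\nobreakdash-theories $\htcsemanticsP(P,\AT,\TheoryAtomsES)$ and $\htcsemanticsP(Q,\AT,\TheoryAtomsES)$ are strongly equivalent, meaning that for every \HTC\nobreakdash-theory $\Delta$ the unions $\htcsemanticsP(P,\AT,\TheoryAtomsES) \cup \Delta$ and $\htcsemanticsP(Q,\AT,\TheoryAtomsES) \cup \Delta$ have the same stable models.

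The central observation is that the translation~$\htcsemanticsP(\cdot,\AT,\TheoryAtomsES)$ is compositional with respect to program union. Indeed, examining its definition in equation~\eqref{eq:htcsemantics:complete}, the rule-level component $\htcsemanticsA(\cdot)$ is applied rule by rule, while $\SEM{\TheoryAtomsES}$ and $\DOM{\AT,\Atoms}$ depend only on the signature and the external set. Consequently, for any program~$R$,
\begin{align*}
\htcsemanticsP(P \cup R,\AT,\TheoryAtomsES)
  &= \htcsemanticsP(P,\AT,\TheoryAtomsES) \cup \htcsemanticsA(R),
\\
\htcsemanticsP(Q \cup R,\AT,\TheoryAtomsES)
  &= \htcsemanticsP(Q,\AT,\TheoryAtomsES) \cup \htcsemanticsA(R).
\end{align*}
Instantiating the strong equivalence of $\htcsemanticsP(P,\AT,\TheoryAtomsES)$ and $\htcsemanticsP(Q,\AT,\TheoryAtomsES)$ with the context $\Delta = \htcsemanticsA(R)$ then yields that the two \HTC\nobreakdash-theories above have exactly the same stable models, hence the same equilibrium models.

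Finally, I transfer this equality back to the answer-set level by applying Theorem~\ref{thm:first_translation} to both $P \cup R$ and $Q \cup R$: the theorem establishes a one-to-one correspondence between the $\langle\AT,\TheoryAtomsES\rangle$\nobreakdash-answer sets of a \tprogram\ and the equilibrium models of its \HTC\nobreakdash-translation, so equality of equilibrium models transfers verbatim to equality of answer sets.

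The only subtlety I expect is clerical: one has to fix a common ambient signature $\tuple{\X,\D,\C}$ (in particular, a common set~$\Atoms$ of regular atoms large enough to encompass all atoms in $P$, $Q$, and~$R$) so that $\DOM{\AT,\Atoms}$ is genuinely the same theory in all three invocations of~$\htcsemanticsP$. With that convention, the decomposition is immediate and there is no real obstacle; the substantive work has already been done in Theorems~\ref{th:strongeq-suf} and~\ref{thm:first_translation}.
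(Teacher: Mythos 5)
Your proposal is correct and follows exactly the route the paper intends for this corollary (which it states without an explicit proof): decompose $\htcsemanticsP(P\cup R,\AT,\TheoryAtomsES)$ as $\htcsemanticsP(P,\AT,\TheoryAtomsES)\cup\htcsemanticsA(R)$ over a fixed common signature, instantiate the \HTC\ context with $\htcsemanticsA(R)$, and transfer back via the one-to-one correspondence of Theorem~\ref{thm:first_translation}. Your remark about fixing a common ambient signature so that $\DOM{\AT,\Atoms}$ and $\SEM{\TheoryAtomsES}$ coincide across the three invocations is the right clerical point to flag.
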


Our main result is an immediate consequence of Theorems~\ref{th:strongeq-suf} and~\ref{th:strongeq-nec}.
\begin{theorem}[Main Result]
  Two \tprogram{s} $P$ and $Q$ are strongly equivalent with respect to an external theory~$\AT$ and a set~$\TheoryAtomsES$ of theory atoms considered external, if and only if they are equivalent in \HTC.
\end{theorem}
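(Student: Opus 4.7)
The plan is to reduce the statement directly to Theorems~\ref{th:strongeq-suf} and~\ref{th:strongeq-nec} by unfolding the definition of strong equivalence of \tprogram{s} given immediately above. By that definition, \tprogram{s} $P$ and $Q$ are strongly equivalent with respect to $\AT$ and $\TheoryAtomsES$ exactly when their \HTC-translations $\htcsemanticsP(P,\AT,\TheoryAtomsES)$ and $\htcsemanticsP(Q,\AT,\TheoryAtomsES)$ are strongly equivalent in \HTC; correspondingly, ``equivalent in \HTC'' for \tprogram{s} is read as $\htcsemanticsP(P,\AT,\TheoryAtomsES) \equiv \htcsemanticsP(Q,\AT,\TheoryAtomsES)$, i.e.\ the two \HTC-translations have the same set of \HTC-models.

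The sufficiency direction (equivalent in \HTC\ implies strongly equivalent) is the easy half: I would invoke Theorem~\ref{th:strongeq-suf} on the two \HTC-translations to conclude that they are strongly equivalent as \HTC-theories, and then unfold the definition of strong equivalence of \tprogram{s} to obtain that $P$ and $Q$ are strongly equivalent.

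For the necessity direction, I would invoke Theorem~\ref{th:strongeq-nec} on the two translations. The one preparatory check is that the signature underlying $\htcsemanticsP(\cdot,\AT,\TheoryAtomsES)$ contains the $\df(x)$ constraint atoms used in the proof of Theorem~\ref{th:strongeq-nec} to build the discriminating context $\Delta$. This is guaranteed by construction: the axiom set $\DOM{\AT,\Atoms}$ (see~\eqref{f:domain} and~\eqref{f:booldomain}) already refers to $\df(x)$ over every variable in $\X_\AT \cup \Atoms$, so all such atoms lie in the \HTC\ signature associated with the translation. With this observation in place, Theorem~\ref{th:strongeq-nec} yields $\htcsemanticsP(P,\AT,\TheoryAtomsES) \equiv \htcsemanticsP(Q,\AT,\TheoryAtomsES)$, i.e.\ equivalence in \HTC, which closes the argument.

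The only real obstacle is conceptual rather than technical: making sure that the notions of ``strongly equivalent'' and ``equivalent in \HTC'' used in the statement match the corresponding notions for general \HTC-theories employed in Theorems~\ref{th:strongeq-suf} and~\ref{th:strongeq-nec}, so that both characterization results apply to the translations without further qualification. No new construction or auxiliary lemma is needed beyond what has already been established in Sections~\ref{sec:strongeq} and~\ref{sec:htcsemantics}, and the main theorem follows as a clean corollary.
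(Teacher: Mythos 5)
Your proposal matches the paper's own argument: the paper proves this theorem simply by noting it is an immediate consequence of Theorems~\ref{th:strongeq-suf} and~\ref{th:strongeq-nec} applied to the translations $\htcsemanticsP(P,\AT,\TheoryAtomsES)$ and $\htcsemanticsP(Q,\AT,\TheoryAtomsES)$, exactly as you do. Your extra check that the $\df(x)$ atoms required by Theorem~\ref{th:strongeq-nec} are present in the translation's signature (via $\DOM{\AT,\Atoms}$) is a correct and worthwhile observation that the paper leaves implicit.
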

This result provides a method to establish whether two logic programs with constraints are strongly equivalent.
In particular, applied to the example in the introduction, it shows that we can remove the first rule without changing the semantics of the program in any context that contains the second one.

 \section{Complexity}\label{sec:complexity}

In this section, we address the computational complexity of the satisfiability and strong equivalence problems for \tprogram s.
Note that the complexity of~\tprogram{s} highly depends on the associated external theory~$\AT$, and
is in general undecidable as illustrated by the following example.
\begin{example}\label{ex:diophantine}\nobreak Let ${\DIE= \tuple{\TheoryAtoms, \comp{\cdot}, \XAT,\DAT, \varsATfunction_\AT, \denAT{\cdot}}}$ be a structured theory
    with variables ${\XAT = \{ x_1, x_2, \dotsc \}}$, whose domain is the integers and whose atoms are of the form
    \begin{align*}
        \Sum{c_1 \cdot x_1^{e_1},\dots,c_n \cdot x_n^{e_n}} = 0
    \end{align*}
    where each~$c_i\in\mathbb{Z}$, $e_i\in\mathbb{N}$ and~$x_i$ is a variable.
The denotation~$\denAT{\cdot}$ of these atoms is the set of all assignments of integer values to the variables that satisfy the corresponding Diophantine equation.
\end{example}

Since the satisfiability problem for Diophantine equations is undecidable~\citep{matiiasevich93a},
the satisfiability problem for \tprogram{s} with theory~$\DIE$ is also undecidable.
The \emph{satisfiability problem} for \tprogram s\ consists of deciding whether there is a set of atoms that is a~$\tuple{\AT,\TheoryAtomsES}$\nobreakdash-stable model of some program.
\begin{theorem}\label{thm:complexity.tprogram.undecidable}
  The satisfiability problem for \tprogram{s} with theory~$\DIE$ is undecidable,
  even if the program is restricted to be a single fact.
\end{theorem}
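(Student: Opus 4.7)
The plan is to reduce Hilbert's tenth problem to the satisfiability problem for single-fact \tprogram{s} over the theory $\DIE$. By Matiyasevich's theorem, deciding whether an arbitrary Diophantine equation $p(x_1,\dots,x_n) = 0$ has an integer solution is undecidable, so establishing a computable reduction yields the claim.

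Given any such Diophantine equation, I would build the theory atom $\thAt \in \TheoryAtoms$ directly from the polynomial $p$ using the syntax of Example~\ref{ex:diophantine}. I would then form the single-fact \tprogram\ $P = \{\thAt \leftarrow {}\}$ with empty external atoms, $\TheoryAtomsES = \emptyset$. Taking $\TheoryAtomsES = \emptyset$ serves two purposes: it makes $\thAt$ a founded atom (so placing it in a rule head is admissible), and it makes the closedness hypothesis of Theorem~\ref{thm:stable.model.simple} vacuously true. Monotonicity of $\DIE$ is immediate, since a subset of simultaneously solvable equations is itself simultaneously solvable, and consistency follows because the complement of $\thAt$ must express an inequation that cannot be satisfied together with its equation. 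Hence Definition~\ref{def:stable.model.simple} applies.

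With $\TheoryAtomsES = \emptyset$, a set $X$ is a $\tuple{\DIE,\emptyset}$-stable model of $P$ iff $X$ is a regular stable model of $P$ and $X \cap \TheoryAtoms \in \Sat_\DIE$. The only regular stable model of a single fact $\{\thAt \leftarrow {}\}$ is $X = \{\thAt\}$, so the condition reduces to $\{\thAt\} \in \Sat_\DIE$. By the denotation of $\DIE$ in Example~\ref{ex:diophantine}, this is equivalent to the existence of an integer assignment satisfying the Diophantine equation $p(x_1,\dots,x_n) = 0$. Thus $P$ is satisfiable (has a stable model) if and only if the encoded Diophantine equation has an integer solution, completing the reduction.

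The main subtlety — rather than a genuine obstacle — lies in justifying that the preconditions of Theorem~\ref{thm:stable.model.simple} are met so that the simplified definition can be used. This amounts to verifying monotonicity and consistency of $\DIE$, both of which are straightforward as noted above; if one wishes to avoid these checks entirely, the same argument can be carried out directly from Definition~\ref{def:stable.model} by choosing the solution $S = \{\thAt\}$, since with $\TheoryAtomsES = \emptyset$ the added rules in \eqref{eq:transformation} are either empty (when $\thAt \in S$) or force inconsistency (when $\thAt \notin S$), giving the same equivalence between program satisfiability and $\AT$-satisfiability of $\{\thAt\}$.
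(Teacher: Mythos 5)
Your reduction is correct and is exactly the argument the paper intends: the paper itself offers only the one-sentence observation that undecidability follows from Matiyasevich's theorem, so your writeup (single fact $\thAt\leftarrow{}$, $\TheoryAtomsES=\emptyset$, whence $P$ is satisfiable iff $\{\thAt\}\in\Sat$ iff the encoded equation has an integer solution) is a faithful, more detailed version of the same proof, and your verification of the preconditions of Theorem~\ref{thm:stable.model.simple} (or the direct fallback to Definition~\ref{def:stable.model}) is sound. The only caveat, inherited from the paper's own Example~\ref{ex:diophantine} rather than introduced by you, is that the displayed atom syntax literally admits only univariate monomials $c_i\cdot x_i^{e_i}$, so for a \emph{single} atom to encode an arbitrary Diophantine equation (cross terms included) one must read that syntax as allowing general monomials.
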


We can provide a more detailed analysis of the complexity in terms of oracles that can solve the satisfiability problem for the external theory~$\AT$.
In what follows, we assume that the reader is familiar with the basic concepts of complexity theory.\footnote{Cf., e.g.~\cite{papadimitriou94a} for a comprehensive treatise on this subject.}
For convenience, we briefly recapitulate the definitions and some
elementary properties of the complexity classes considered in our analysis.
The class~$\mathsf{NP}$ consists of all decision problems that can be solved by a non\nobreakdash-deterministic Turing machine in polynomial time.
As usual, for any complexity class $\mathsf{C}$, by ${\mathsf{co\text{-}C}}$ we understand the class of all problems which are complementary to the problems in~$\mathsf{C}$.
Thus, $\mathsf{co\text{-}NP}$ is the class of all problems whose complements are in~$\mathsf{NP}$.
Furthermore, for complexity classes~$\mathsf{C}$ and~$\mathsf{O}$,
we denote by~$\mathsf{C}^{\mathsf{O}}$ the class of problems which can be decided by Turing machines of the same sort and time bound as~$\mathsf{C}$, but with access to an oracle for the problems in~$\mathsf{O}$ that can solve problems of this complexity class in a single step.
\begin{theorem}\label{thm:complexity.tprogram.oracle}
    The satisfiability problem for \tprogram s\ with a theory~$\AT$ is decidable in~$\mathsf{NP}^{\mathsf{O}}$ where~$\mathsf{O}$ is an oracle that solves the satisfiability problem for the external theory~$\AT$.
\end{theorem}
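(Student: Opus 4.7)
The plan is to exhibit an $\mathsf{NP}^{\mathsf{O}}$ algorithm directly from the simplified characterization of stable models given in Definition~\ref{def:stable.model.simple}, which, by Theorem~\ref{thm:stable.model.simple} and the standing assumptions of consistency and monotonicity of~$\AT$ together with closedness of~$\TheoryAtomsES$, coincides with Definition~\ref{def:stable.model}.

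First, I would have the nondeterministic machine guess a candidate $X\subseteq\Atoms\cup\TheoryAtoms$ whose elements are restricted to the atoms occurring in~$P$ together with the complements of the external theory atoms appearing in~$P$; this gives a guess of polynomial size in the size of~$P$. A single query to the oracle~$\mathsf{O}$ then decides whether $(X\cap\TheoryAtoms)\in\Sat$, discharging the first condition of Definition~\ref{def:stable.model.simple}. The second condition requires $X$ to be a regular stable model of the theory
\[
  P\;\cup\;\{\,\thAt\vee\comp{\thAt}\mid\thAt\in\TheoryAtomsES\,\}.
\]

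The crux, and main obstacle, is that this theory contains disjunctive rules, and deciding minimality for disjunctive programs is \textbf{co-NP-hard} in general, which would naively push the overall complexity to $\Sigma_2^P$ relative to~$\mathsf{O}$. The key observation that avoids this blow-up is the very restrictive shape of the added disjunctions. Since~$\AT$ is consistent, any candidate~$X$ passing the oracle check cannot contain both~$\thAt$ and~$\comp{\thAt}$, while satisfaction of the disjunction forces at least one of them to be in~$X$. Hence exactly one of~$\thAt,\comp{\thAt}$ lies in~$X$ for every~$\thAt\in\TheoryAtomsES$, and every $X'\subseteq X$ inherits the same choice (the absent disjunct is absent from~$X$ and therefore from~$X'$). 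Consequently, in the reduct used for the minimality check each disjunction $\thAt\vee\comp{\thAt}$ may be safely replaced by the fact for the unique disjunct present in~$X$, turning the reduct into an ordinary positive normal program. Minimality then reduces to computing the least fixed point of the immediate consequence operator and comparing it with~$X$, which takes polynomial time.

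Putting the pieces together, the algorithm performs a polynomial-size nondeterministic guess, a single oracle call to~$\mathsf{O}$, and a polynomial-time verification, placing the satisfiability problem for \tprogram{s} in~$\mathsf{NP}^{\mathsf{O}}$. The main technical work is thus the justification that the added disjunctive rules may be safely replaced by facts during the minimality check; once this reduction is in place, correctness is immediate from Definition~\ref{def:stable.model.simple}.
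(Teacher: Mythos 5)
Your argument is correct, and it is the natural guess-and-check membership argument; note that the paper as given states Theorem~\ref{thm:complexity.tprogram.oracle} without printing a proof, so there is nothing to compare against step by step. The one point that genuinely needs care---that the disjunctions $\thAt\vee\comp{\thAt}$ added in \eqref{eq:transformationB.simple} do not push the minimality check into $\mathsf{coNP}$---is exactly the point you isolate, and your justification is sound: once the oracle confirms $(X\cap\TheoryAtoms)\in\Sat$, consistency of $\AT$ forces exactly one disjunct of each pair into $X$, and since the absent disjunct is also absent from every $H\subseteq X$, each disjunction is equivalent, over all interpretations $\tuple{H,X}$ relevant to the equilibrium check, to the fact for the present disjunct, leaving a normal program whose stability test is polynomial. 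The only loose end is bookkeeping: you should say explicitly that $\TheoryAtomsES$ is taken to be (the finite set of) external atoms given with the input program, since \eqref{eq:transformationB.simple} adds one disjunction per atom of $\TheoryAtomsES$ and the guess must cover one disjunct of each such pair; with that reading your bound on the guess size and the single oracle call go through.
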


Since regular ASP programs are a particular case of \tprogram{s},
the following result immediately follows from the $\mathsf{NP}$\nobreakdash-completeness of the satisfiability problem
for programs without disjunctions in ASP~\citep{daeigovo01a}.
\begin{corollary}
  Let~$\AT$ be a theory that is decidable in polynomial time.
  Then, the satisfiability problem for \tprogram s\ with theory~$\AT$ is~$\mathsf{NP}$\nobreakdash-complete.
\end{corollary}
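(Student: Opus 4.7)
The plan is to combine an upper bound derived from Theorem~\ref{thm:complexity.tprogram.oracle} with a matching lower bound inherited from classical propositional ASP.

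For membership in $\mathsf{NP}$, I would invoke Theorem~\ref{thm:complexity.tprogram.oracle}, which places the satisfiability problem in $\mathsf{NP}^{\mathsf{O}}$ where $\mathsf{O}$ decides the satisfiability problem of the external theory~$\AT$. Under the hypothesis that $\AT$ is decidable in polynomial time, every oracle call can be simulated directly by the nondeterministic Turing machine without asymptotic overhead. Hence $\mathsf{NP}^{\mathsf{O}} \subseteq \mathsf{NP}^{\mathsf{P}} = \mathsf{NP}$, which yields the desired upper bound.

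For $\mathsf{NP}$-hardness, I would observe that any ordinary propositional (non-disjunctive) logic program can be viewed as a \tprogram\ by taking $\TheoryAtoms = \emptyset$ and therefore also $\TheoryAtomsES = \emptyset$, in which case the external theory~$\AT$ plays no role. Under Definition~\ref{def:stable.model.simple}, the $\langle\AT,\TheoryAtomsES\rangle$\nobreakdash-stable models of such a \tprogram\ collapse to the usual stable models of the underlying propositional program, since the theory-satisfiability side condition holds vacuously and no disjunctions of the form $\thAt \vee \comp{\thAt}$ are added. The classical $\mathsf{NP}$\nobreakdash-hardness of stable model existence for non-disjunctive ASP~\citep{daeigovo01a} then transfers immediately to the \tprogram\ setting.

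The principal—though quite minor—obstacle is justifying this embedding cleanly: one must confirm that viewing a propositional program as a \tprogram\ with no theory atoms preserves its stable models on the nose. I expect this to follow directly from Definition~\ref{def:stable.model.simple}, since with $\TheoryAtoms = \emptyset$ both extra ingredients in the construction disappear and the characterization reduces to the regular stable models of Definition~\ref{def:regularsm}, which in turn coincide with the standard propositional stable models as already noted in Section~\ref{sec:htc}. Combining the two directions gives $\mathsf{NP}$\nobreakdash-completeness.
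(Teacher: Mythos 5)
Your proof is correct and follows essentially the same route as the paper: membership via Theorem~\ref{thm:complexity.tprogram.oracle} with the polynomial-time oracle absorbed into $\mathsf{NP}$, and hardness by noting that regular non-disjunctive ASP programs are a special case of \tprogram{s} (citing \citealp{daeigovo01a}). Your extra care in checking that the embedding with $\TheoryAtoms=\TheoryAtomsES=\emptyset$ collapses Definition~\ref{def:stable.model.simple} to ordinary stable models is a welcome elaboration of what the paper leaves implicit.
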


Finally, we obtain the following results for the complexity of checking whether two \tprogram s\ are strongly equivalent.
\begin{theorem}\label{thm:complexity.tprogram.undecidable.se}\nobreak Deciding whether two \tprogram{s} are strongly equivalent is undecidable,
  even if both programs are restricted to be single facts.
\end{theorem}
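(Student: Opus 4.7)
The plan is to reduce the unsatisfiability problem for Diophantine equations~--- undecidable by Matiyasevich's theorem~--- to the strong equivalence problem for single-fact \tprogram{s}. The reduction mirrors the one behind Theorem~\ref{thm:complexity.tprogram.undecidable} and is based on the theory $\DIE$ of Example~\ref{ex:diophantine}. Fix $\Atoms = \emptyset$, $\AT = \DIE$, $\TheoryAtomsES = \emptyset$, and pick once and for all some $\alpha_\bot \in \TheoryAtoms_\DIE$ encoding an obviously unsatisfiable equation, e.g., $2x^2+1=0$. Given an arbitrary Diophantine equation $e=0$, represented by its atom $\alpha_e \in \TheoryAtoms_\DIE$, I would consider the two single-fact \tprogram{s}
\[
  P_e \ \eqdef \ \{\,\alpha_e \leftarrow\,\}
  \qquad\text{and}\qquad
  Q \ \eqdef \ \{\,\alpha_\bot \leftarrow\,\}.
\]

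The key step is to invoke the Main Result to turn strong equivalence of $P_e$ and $Q$ into \HTC-equivalence of the translations $\htcsemanticsP(P_e,\DIE,\emptyset)$ and $\htcsemanticsP(Q,\DIE,\emptyset)$. Because $\TheoryAtomsES = \emptyset$, the $\SEM{\emptyset}$ part vanishes; the axioms in $\DOM{\DIE,\emptyset}$ merely force defined variables to take integer values, a condition already built into the denotation of every atom of $\DIE$. Hence the problem collapses to \HTC-equivalence of the single formulas $\htcsemanticsA(\alpha_e)$ and $\htcsemanticsA(\alpha_\bot)$. By Proposition~\ref{prop:translation.satisfiable}, $\htcsemanticsA(\alpha_e)$ is \HTC-satisfiable iff $\{\alpha_e\}$ is $\DIE$-satisfiable, iff $e=0$ admits an integer solution; by the choice of $\alpha_\bot$, $\htcsemanticsA(\alpha_\bot)$ is never \HTC-satisfiable. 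Thus the two translations share the same (necessarily empty) set of \HTC-models iff $e=0$ has no integer solution, so deciding strong equivalence of $P_e$ and $Q$ decides Hilbert's tenth problem~--- undecidable.

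The main obstacle is confirming that the translation overhead~--- the $\DOM$ axioms and the passage from strong equivalence to plain \HTC-equivalence~--- does not distort the reduction. For the former, the domain-typing axioms $\df(x) \to \Dom{\mathbb{Z}}{x}$ are satisfied by any model arising from a $\DIE$-atom, since all assigned values are already integers; they neither add nor remove models in the present setting. For the latter, Theorems~\ref{th:strongeq-suf} and~\ref{th:strongeq-nec} collapse strong \HTC-equivalence into plain \HTC-equivalence, provided that $\df(x)$ is a constraint atom of the signature, which holds because $\X$ contains all variables of $\DIE$. A minor sanity check is that $\alpha_e \leftarrow$ and $\alpha_\bot \leftarrow$ genuinely qualify as single facts in the sense of the theorem statement, which is immediate since both heads are theory atoms and both bodies are empty.
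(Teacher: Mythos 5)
Your proposal is correct and follows exactly the route the paper sets up: the paper gives no explicit proof of this theorem, but the intended argument is plainly the Diophantine reduction of Example~\ref{ex:diophantine} and Theorem~\ref{thm:complexity.tprogram.undecidable} combined with the Main Theorem, which is precisely what you do. Your specific choices --- an always-unsatisfiable Diophantine fact as the second program, $\TheoryAtomsES=\emptyset$ (trivially closed), and the observation that the domain axioms are harmless because every valuation induced by a $\DIE$-atom is already integer-valued --- are all sound, so the equivalence ``$P_e$ and $Q$ strongly equivalent iff $e=0$ has no integer solution'' goes through.
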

\begin{theorem}\label{thm:complexity.tprogram.oracle.se}
  Deciding whether two \tprogram{s} with a theory~$\AT$ are strongly equivalent is decidable in~$\mathsf{coNP}^{\mathsf{O}}$,
  where~$\mathsf{O}$ is an oracle that solves the satisfiability problem for the external theory~$\AT$.
\end{theorem}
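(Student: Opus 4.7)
The plan is to invoke the Main Theorem, reducing strong equivalence of \tprogram{s} $P$ and $Q$ to HTC-equivalence of their translations $\Gamma_1 = \htcsemanticsP(P,\AT,\TheoryAtomsES)$ and $\Gamma_2 = \htcsemanticsP(Q,\AT,\TheoryAtomsES)$. It therefore suffices to place HTC-\emph{inequivalence} of $\Gamma_1$ and $\Gamma_2$ in $\mathsf{NP}^{\mathsf{O}}$, for then the complement problem lies in $\mathsf{coNP}^{\mathsf{O}}$, and complexity-preserving reductions transport this to strong equivalence of \tprogram{s}.

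To place inequivalence in $\mathsf{NP}^{\mathsf{O}}$, I would nondeterministically guess a succinct description of a distinguishing HTC-interpretation $\tuple{h,t}$. Only finitely many---in fact polynomially many in $|P|+|Q|$---regular atoms, theory variables, and theory constraint atoms are relevant, so a suitable description consists of (i) sets $H_A \subseteq T_A$ of regular atoms true at $h$ and $t$; (ii) sets $H_X \subseteq T_X$ of theory variables defined at $h$ and $t$; and (iii) for each constraint atom $c$ with $\vars{c} \subseteq T_X$, a bit indicating whether $c$ holds at $t$. By monotonicity of denotations (Condition~\ref{den:prt:0}) together with the definition~(\ref{f:atomden}) of the HTC-denotation of a theory atom, $c$ holds at $h$ iff $\vars{c} \subseteq H_X$ and $c$ is guessed true at $t$; otherwise $c$ is false at $h$. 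The guess therefore fixes the truth value of every atom of $\Gamma_1 \cup \Gamma_2$ at both $h$ and $t$, and by the HTC clauses the truth of each formula at $\tuple{h,t}$ (and at $\tuple{t,t}$, needed for the implication clause) can then be computed in polynomial time.

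Verification has two parts. The \emph{realizability} part builds the set
\[
  S_t \ = \ \{\thAt \mid \htcsemanticsA(\thAt)\text{ guessed true at }t\}
        \ \cup \ \{\comp{\thAt} \mid \htcsemanticsA(\thAt)\text{ guessed false at }t,\ \vars{\thAt} \subseteq T_X\}
\]
and issues a single oracle query asking whether $S_t \in \Sat$. Since \AT\ is consistent, compositional, and has an absolute complement, $S_t \in \Sat$ holds precisely when a total \AT-valuation $t$ realizing the guessed pattern on $T_X$ exists; letting $h$ agree with $t$ on $H_X$ and be undefined elsewhere then realizes the full guess. The \emph{syntactic} part evaluates $\Gamma_1$ and $\Gamma_2$ at $\tuple{h,t}$ using the determined atom truth values, together with the straightforward side conditions imposed by $\DOM{\AT,\Atoms}$ (booleans range over $\{\true,\undefined\}$ and theory vars over $\D_\AT \cup \{\undefined\}$) and $\SEM{\TheoryAtomsES}$ (variables of external atoms are required defined, i.e., contained in $H_X \cap T_X$), and accepts iff exactly one of the two theories is satisfied. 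Everything is polynomial apart from the single oracle call, yielding the desired $\mathsf{NP}^{\mathsf{O}}$ bound.

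The main obstacle is justifying that a single oracle query suffices despite the interplay of the $h \subseteq t$ requirement with theory semantics. The key observation is that once $H_X \subseteq T_X$ and the truth pattern at $t$ are fixed, $h$ is forced (on the relevant finite signature) to be $t$ restricted to $H_X$: HTC-monotonicity propagates truth values from $t$ down to $h$ without further oracle queries, while compositionality with absolute complement lets ``false at $t$'' be expressed by adding $\comp{\thAt}$ within the same satisfiability query. Variables outside $\vars{P\cup Q}$ mandated by the common $\SEM{\TheoryAtomsES}$ axioms play no role in distinguishing $\Gamma_1$ from $\Gamma_2$ (by Condition~\ref{den:prt:2}) and can be assigned arbitrary values, so they do not enlarge the guess.
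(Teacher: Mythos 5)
Your proposal is correct and takes what is essentially the approach this result is meant to rest on (the paper does not write out a proof of this theorem): by the Main Theorem, strong equivalence reduces to \HTC-equivalence of the translations, and a countermodel can be guessed as a polynomial-size truth/definedness pattern whose realizability is checked with a single oracle query, placing inequivalence in $\mathsf{NP}^{\mathsf{O}}$ and hence the problem itself in $\mathsf{coNP}^{\mathsf{O}}$. One minor imprecision: forcing the truth values at $h$ from those at $t$ rests on Condition~\ref{den:prt:2} together with the definedness requirement built into~\eqref{f:atomden}, not on monotonicity (Condition~\ref{den:prt:0}) alone, but this does not affect the argument.
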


Since regular ASP programs are a particular case of \tprogram s, the following result immediately follows from the $\mathsf{co\text{-}NP}$\nobreakdash-completeness of the strongly equivalence problem for ASP problems~\citep{petowo09a}.
\begin{corollary}
  Let~$\AT$ be a theory that is decidable in polynomial time.
  Then, deciding whether two \tprogram s\ are strongly equivalent is~$\mathsf{coNP}$\nobreakdash-complete.
\end{corollary}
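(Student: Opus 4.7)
My plan is to derive the corollary as a direct consequence of the two results preceding it, handling the upper and lower bounds separately.

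For membership in $\mathsf{coNP}$, I would invoke Theorem~\ref{thm:complexity.tprogram.oracle.se}, which states that deciding strong equivalence of \tprogram{s} lies in $\mathsf{coNP}^{\mathsf{O}}$, where $\mathsf{O}$ is an oracle for the satisfiability problem of the external theory~$\AT$. Under the hypothesis that $\AT$ is decidable in polynomial time, any oracle call can be simulated by a deterministic polynomial-time subroutine. Standard oracle-elimination arguments then give $\mathsf{coNP}^{\mathsf{O}} \subseteq \mathsf{coNP}^{\mathsf{P}} = \mathsf{coNP}$, yielding the upper bound.

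For $\mathsf{coNP}$-hardness, I would reduce from strong equivalence of regular (propositional) ASP programs, known to be $\mathsf{coNP}$\nobreakdash-complete by the cited result of Pearce, Tompits, and Woltran. The key observation is that a regular ASP program~$P$ is a special case of a \tprogram{} over $\langle \Atoms, \emptyset\rangle$ with empty set of external atoms $\TheoryAtomsES = \emptyset$, and for such programs the definition of $\langle\AT,\TheoryAtomsES\rangle$\nobreakdash-stable model collapses to the standard definition (this is essentially the observation made after Definition~\ref{def:regularsm}, and is consistent with the \HTC\nobreakdash-characterization of Theorem~\ref{thm:first_translation}, since $\htcsemanticsP(P,\AT,\emptyset)$ reduces to the \HT\nobreakdash-encoding of~$P$ together with the Boolean domain axioms~\eqref{f:booldomain}). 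Hence, strong equivalence of the corresponding \tprogram{s} coincides with strong equivalence of the original ASP programs, and the reduction is trivially polynomial.

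The main (and only nontrivial) obstacle is ensuring that the reduction in the hardness direction is correct, i.e.\ that strong equivalence as defined in this paper for \tprogram{s} with an empty theory genuinely agrees with the classical notion of strong equivalence for regular ASP. I would verify this by a brief argument: on the one hand, the main theorem of Section~\ref{sec:stronge.lp} reduces strong equivalence of \tprogram{s} to \HTC\nobreakdash-equivalence of their translations; on the other hand, for empty $\TheoryAtoms$ this \HTC\nobreakdash-equivalence specializes to plain \HT\nobreakdash-equivalence of the propositional encodings (by the correspondence between regular atoms and $\Dom{\{\mathbf{t}\}}{\vregAt}$ constraint atoms fixed by axiom~\eqref{f:booldomain}), which by~\citeN{lipeva01a} is exactly classical strong equivalence. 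Combining the two bounds yields $\mathsf{coNP}$\nobreakdash-completeness.
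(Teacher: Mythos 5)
Your proposal is correct and follows essentially the same route as the paper: membership by instantiating Theorem~\ref{thm:complexity.tprogram.oracle.se} with a polynomial-time oracle (so $\mathsf{coNP}^{\mathsf{O}}$ collapses to $\mathsf{coNP}$), and hardness by viewing regular ASP programs as a special case of \tprogram{s} and invoking the $\mathsf{coNP}$\nobreakdash-completeness result of \citeN{petowo09a}. Your additional check that strong equivalence of the embedded \tprogram{s} agrees with classical strong equivalence of regular programs is a reasonable piece of extra care that the paper leaves implicit.
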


 \section{Conclusion}\label{sec:discussion}

We have introduced a method to establish whether two logic programs with constraints are strongly equivalent.
This method is based on a translation of logic programs with constraints into \HTC\nobreakdash-theories and the characterization of strong equivalence in the context of \HTC.
In particular, we have shown that two logic programs with constraints are strongly equivalent if and only if their translations are equivalent in \HTC\ and study the computational complexity of this problem.

The translation from logic programs with constraints into \HTC\nobreakdash-theories is of independent interest as it can serve as a tool to study strong equivalence of other extensions of logic program with constraints.
It is worth recalling that \HTC\ was introduced as a generalization of \HT\ to deal with constraint satisfaction problems
involving defaults over the variables of the external theory~\citep{cakaossc16a}.
Since then, \HTC\ has been extended to also formalize the semantics of aggregates over those same variables~\citep{cafascwa20a,cafascwa20b}.
As a future work, we are planning to leverage this translation to implement a new hybrid solver that can handle these two features (constraints with defaults and aggregates) and uses \clingo~\texttt{5} as a back-end solver.

\subsection*{Acknowledgments}

This work was supported by
DFG grant SCHA 550/15, Germany,
and by the National Science Foundation under Grant No. 95-3101-0060-402 and the CAREER award 2338635, USA.
Any opinions, findings, and conclusions or recommendations expressed in this material are those of the authors and do not necessarily reflect the views of the National Science Foundation.

\bibliographystyle{plainnat}

\begin{thebibliography}{18}
\providecommand{\natexlab}[1]{#1}
\providecommand{\url}[1]{\texttt{#1}}
\expandafter\ifx\csname urlstyle\endcsname\relax
  \providecommand{\doi}[1]{doi: #1}\else
  \providecommand{\doi}{doi: \begingroup \urlstyle{rm}\Url}\fi

\bibitem[Banbara et~al.(2017)Banbara, Kaufmann, Ostrowski, and
  Schaub]{bakaossc16a}
M.~Banbara, B.~Kaufmann, M.~Ostrowski, and T.~Schaub.
\newblock Clingcon: The next generation.
\newblock \emph{Theory and Practice of Logic Programming}, 17\penalty0
  (4):\penalty0 408--461, 2017.
\newblock \doi{10.1017/S1471068417000138}.

\bibitem[Cabalar et~al.(2016)Cabalar, Kaminski, Ostrowski, and
  Schaub]{cakaossc16a}
P.~Cabalar, R.~Kaminski, M.~Ostrowski, and T.~Schaub.
\newblock An {ASP} semantics for default reasoning with constraints.
\newblock In S.~Kambhampati, editor, \emph{Proceedings of the Twenty-fifth
  International Joint Conference on Artificial Intelligence (IJCAI'16)}, pages
  1015--1021. IJCAI/AAAI Press, 2016.
\newblock \doi{10.5555/3060621.3060762}.

\bibitem[Cabalar et~al.(2020{\natexlab{a}})Cabalar, Fandinno, Schaub, and
  Wanko]{cafascwa20a}
P.~Cabalar, J.~Fandinno, T.~Schaub, and P.~Wanko.
\newblock An {ASP} semantics for constraints involving conditional aggregates.
\newblock In G.~{D}e Giacomo, A.~Catal{\'a}, B.~Dilkina, M.~Milano, S.~Barro,
  A.~Bugar{\'{\i}}n, and J.~Lang, editors, \emph{Proceedings of the
  Twenty-fourth European Conference on Artificial Intelligence (ECAI'20)},
  pages 664--671. {IOS} Press, 2020{\natexlab{a}}.
\newblock \doi{10.3233/FAIA200152}.

\bibitem[Cabalar et~al.(2020{\natexlab{b}})Cabalar, Fandinno, Schaub, and
  Wanko]{cafascwa20b}
P.~Cabalar, J.~Fandinno, T.~Schaub, and P.~Wanko.
\newblock A uniform treatment of aggregates and constraints in hybrid {ASP}.
\newblock In D.~Calvanese, E.~Erdem, and M.~Thielscher, editors,
  \emph{Proceedings of the Seventeenth International Conference on Principles
  of Knowledge Representation and Reasoning (KR'20)}, pages 193--202. {AAAI}
  Press, 2020{\natexlab{b}}.
\newblock \doi{10.24963/KR.2020/20}.

\bibitem[Cabalar et~al.(2023)Cabalar, Fandinno, Schaub, and Wanko]{cafascwa23a}
P.~Cabalar, J.~Fandinno, T.~Schaub, and P.~Wanko.
\newblock On the semantics of hybrid {ASP} systems based on clingo.
\newblock \emph{Algorithms}, 16\penalty0 (4), 2023.
\newblock \doi{10.3390/a16040185}.
\newblock URL \url{https://www.mdpi.com/1999-4893/16/4/185}.

\bibitem[Dantsin et~al.(2001)Dantsin, Eiter, Gottlob, and
  Voronkov]{daeigovo01a}
E.~Dantsin, T.~Eiter, G.~Gottlob, and A.~Voronkov.
\newblock Complexity and expressive power of logic programming.
\newblock \emph{{ACM} Computing Surveys}, 33\penalty0 (3):\penalty0 374--425,
  2001.

\bibitem[Gebser et~al.(2016)Gebser, Kaminski, Kaufmann, Ostrowski, Schaub, and
  Wanko]{gekakaosscwa16a}
M.~Gebser, R.~Kaminski, B.~Kaufmann, M.~Ostrowski, T.~Schaub, and P.~Wanko.
\newblock Theory solving made easy with clingo~5.
\newblock In M.~Carro and A.~King, editors, \emph{Technical Communications of
  the Thirty-second International Conference on Logic Programming (ICLP'16)},
  volume~52 of \emph{Open Access Series in Informatics (OASIcs)}, pages
  2:1--2:15. Schloss Dagstuhl--Leibniz-Zentrum fuer Informatik, 2016.

\bibitem[Gelfond and Lifschitz(1988)]{gellif88b}
M.~Gelfond and V.~Lifschitz.
\newblock The stable model semantics for logic programming.
\newblock In R.~Kowalski and K.~Bowen, editors, \emph{Proceedings of the Fifth
  International Conference and Symposium of Logic Programming (ICLP'88)}, pages
  1070--1080. MIT Press, 1988.
\newblock \doi{10.1201/b10397-6}.

\bibitem[Heyting(1930)]{heyting30a}
A.~Heyting.
\newblock Die formalen {R}egeln der intuitionistischen {L}ogik.
\newblock In \emph{Sitzungsberichte der Preussischen Akademie der
  Wissenschaften}, pages 42--56. Deutsche Akademie der Wissenschaften zu
  Berlin, 1930.

\bibitem[Janhunen et~al.(2017)Janhunen, Kaminski, Ostrowski, Schaub,
  Schellhorn, and Wanko]{jakaosscscwa17a}
T.~Janhunen, R.~Kaminski, M.~Ostrowski, T.~Schaub, S.~Schellhorn, and P.~Wanko.
\newblock Clingo goes linear constraints over reals and integers.
\newblock \emph{Theory and Practice of Logic Programming}, 17\penalty0
  (5-6):\penalty0 872--888, 2017.
\newblock \doi{10.1017/S1471068417000242}.

\bibitem[Lierler(2023)]{lierler23a}
Y.~Lierler.
\newblock Constraint answer set programming: Integrational and translational
  (or {SMT}-based) approaches.
\newblock \emph{Theory and Practice of Logic Programming}, 23\penalty0
  (1):\penalty0 195--225, 2023.
\newblock \doi{10.1017/S1471068421000478}.

\bibitem[Lifschitz(2008)]{lifschitz08b}
V.~Lifschitz.
\newblock What is answer set programming?
\newblock In D.~Fox and C.~Gomes, editors, \emph{Proceedings of the
  Twenty-third National Conference on Artificial Intelligence (AAAI'08)}, pages
  1594--1597. {AAAI} Press, 2008.

\bibitem[Lifschitz et~al.(2001)Lifschitz, Pearce, and Valverde]{lipeva01a}
V.~Lifschitz, D.~Pearce, and A.~Valverde.
\newblock Strongly equivalent logic programs.
\newblock \emph{ACM Transactions on Computational Logic}, 2\penalty0
  (4):\penalty0 526--541, 2001.
\newblock \doi{10.1145/383779.383783}.

\bibitem[Matiiasevich(1993)]{matiiasevich93a}
I.~Matiiasevich.
\newblock \emph{Hilbert's tenth problem}.
\newblock MIT press, 1993.

\bibitem[Nieuwenhuis et~al.(2006)Nieuwenhuis, Oliveras, and Tinelli]{niolti06a}
R.~Nieuwenhuis, A.~Oliveras, and C.~Tinelli.
\newblock Solving {SAT} and {SAT} modulo theories: From an abstract
  {D}avis-{P}utnam-{L}ogemann-{L}oveland procedure to {DPLL}({T}).
\newblock \emph{Journal of the ACM}, 53\penalty0 (6):\penalty0 937--977, 2006.

\bibitem[Papadimitriou(1994)]{papadimitriou94a}
C.~Papadimitriou.
\newblock \emph{Computational Complexity}.
\newblock Addison-Wesley, 1994.

\bibitem[Pearce(1997)]{pearce96a}
D.~Pearce.
\newblock A new logical characterisation of stable models and answer sets.
\newblock In J.~Dix, L.~Pereira, and T.~Przymusinski, editors,
  \emph{Proceedings of the Sixth International Workshop on Non-Monotonic
  Extensions of Logic Programming (NMELP'96)}, volume 1216 of \emph{Lecture
  Notes in Computer Science}, pages 57--70. Springer-Verlag, 1997.
\newblock \doi{10.1007/BFb0023801}.

\bibitem[Pearce et~al.(2009)Pearce, Tompits, and Woltran]{petowo09a}
D.~Pearce, H.~Tompits, and S.~Woltran.
\newblock Characterising equilibrium logic and nested logic programs:
  Reductions and complexity.
\newblock \emph{Theory and Practice of Logic Programming}, 9\penalty0
  (5):\penalty0 565--616, 2009.

\end{thebibliography}

\section{Proofs}

\section{Proof of Theorem~\ref{thm:stable.model.simple}}

In all the following Lemmas we assume a consistent and monotonic theory~${\AT=\langle \TheoryAtoms, \Sat, \comp{\cdot} \,\rangle}$.

\begin{lemma}\label{lem:stable.model}
A set $X\subseteq \Atoms\cup\TheoryAtoms$ of atoms is a $\langle\AT,\TheoryAtomsES\rangle$\nobreakdash-\emph{stable model} of a \tprogram~$P$ according to Definition~\ref{def:stable.model} iff there is some $\TheoryAtomsES$\nobreakdash-complete set~$S\in \Sat$ such that $X$ is a regular stable model of the program~\eqref{eq:transformation}.
\end{lemma}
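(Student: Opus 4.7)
The plan is to establish the iff by two symmetric constructions that exploit how the completion operation~$\Comp_{\TheoryAtomsES}(\cdot)$ interacts with consistency of~$\AT$ and with $\TheoryAtomsES$\nobreakdash-completeness. For the ($\Leftarrow$) direction I would assume a $\TheoryAtomsES$\nobreakdash-complete set $S \in \Sat$ witnessing the right-hand side and simply observe that completeness implies $\comp{t} \in S$ for every $t \in \TheoryAtomsES \setminus S$, whence $\comp{\TheoryAtomsES \setminus S} \subseteq S$ and $\Comp_{\TheoryAtomsES}(S) = S$. Thus $S$ itself is a \tsolution\ in the sense of Definition~\ref{def:stable.model}, and the same witness carries over to show that $X$ is a $\langle\AT,\TheoryAtomsES\rangle$\nobreakdash-stable model in the original sense.

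For the ($\Rightarrow$) direction I would take a \tsolution\ $S$ witnessing Definition~\ref{def:stable.model} (so $S,\Comp_{\TheoryAtomsES}(S) \in \Sat$) and propose $S^{*} = \Comp_{\TheoryAtomsES}(S)$ as the desired $\TheoryAtomsES$\nobreakdash-complete set in~$\Sat$. Completeness of $S^{*}$ follows at once from the definition of $\Comp_{\TheoryAtomsES}$. The delicate step is to verify that $X$ remains a regular stable model of the program~\eqref{eq:transformation} when instantiated with $S^{*}$ rather than~$S$, and for this it suffices to compare the two rule sets. I would first establish $S \cap \TheoryAtomsES = S^{*} \cap \TheoryAtomsES$, so that the added facts $\{t \leftarrow \mid t\in S\cap \TheoryAtomsES\}$ coincide: the nontrivial inclusion is that any $t \in S^{*} \setminus S$ lying in $\TheoryAtomsES$ would, by construction of $\Comp_{\TheoryAtomsES}$, force $\comp{t} \in \TheoryAtomsES \setminus S$; this in turn places $\comp{t}$ into $\comp{\TheoryAtomsES \setminus S} \subseteq S^{*}$, so that $\{t,\comp{t}\} \subseteq S^{*} \in \Sat$, contradicting consistency of~$\AT$.

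Since $S \subseteq S^{*}$, the constraint set $\{\bot \leftarrow t \mid t \in (\TheoryAtoms \cap \Head{P}) \setminus S^{*}\}$ is a subset of its $S$\nobreakdash-counterpart, so the $S^{*}$\nobreakdash-version of~\eqref{eq:transformation} is a subprogram of the $S$\nobreakdash-version. I would finish by arguing via the program reduct: the dropped constraints are negation-free, so the reducts at $X$ differ only by those constraints, all of which $X$ satisfies as a model of the larger program (hence $t \notin X$ for every dropped head~$t$). Any candidate smaller model $Y \subsetneq X$ of the $S^{*}$\nobreakdash-reduct would then vacuously satisfy the dropped constraints too, yielding a model of the $S$\nobreakdash-reduct that contradicts the minimality of~$X$ witnessed by Definition~\ref{def:stable.model}.

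The main obstacle I anticipate is the equality $S \cap \TheoryAtomsES = S^{*} \cap \TheoryAtomsES$ in the forward direction, since this is precisely where the consistency assumption on~$\AT$ is indispensable: without it, $\Comp_{\TheoryAtomsES}(S)$ could add new external atoms on top of~$S$, changing the fact part of the transformed program and breaking the correspondence of regular stable models. Monotonicity plays no role in this particular lemma and will only be needed in the subsequent steps leading to Theorem~\ref{thm:stable.model.simple}.
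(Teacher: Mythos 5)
Your proof is correct, but it takes a genuinely different route from the paper's. The paper's forward direction is essentially a one-liner: it invokes Proposition~\ref{prop:consistentclosed}, by which every \tsolution\ is \emph{already} $\TheoryAtomsES$-complete (given a consistent $\AT$ and a \emph{closed} $\TheoryAtomsES$), so the witness $S$ from Definition~\ref{def:stable.model} is reused verbatim and no comparison of instantiations of~\eqref{eq:transformation} is needed; the backward direction then coincides with yours, observing that $\Comp_{\TheoryAtomsES}(S)=S$ for complete $S$. You instead manufacture a new witness $S^{*}=\Comp_{\TheoryAtomsES}(S)$ and check that the transformed program is essentially unaffected: the fact part is preserved because $S\cap\TheoryAtomsES=S^{*}\cap\TheoryAtomsES$, and the constraint part only shrinks, which cannot destroy stability of $X$. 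Both halves of that check are sound (the ``dropped constraints'' argument is the standard fact that stable models of a program plus constraints are the stable models of the program that satisfy the constraints, which holds equally under the \HT-based Definition~\ref{def:regularsm} used here in place of the reduct). One local remark: in your consistency argument the cleanest derivation is that $t\in S^{*}\setminus S$ with $t\in\TheoryAtomsES$ gives $t\in\TheoryAtomsES\setminus S$ outright, hence $\comp{t}\in\comp{\TheoryAtomsES\setminus S}\subseteq S^{*}$ directly; the intermediate step via $\comp{t}\in\TheoryAtomsES\setminus S$ is not what licenses that inclusion, though all the facts you need are on the table. What your route buys is independence from the closedness of $\TheoryAtomsES$, an assumption the paper carries implicitly from Theorem~\ref{thm:stable.model.simple} into this lemma via Proposition~\ref{prop:consistentclosed}; the price is the extra bookkeeping about the two instantiations of~\eqref{eq:transformation}. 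Your closing observations—that consistency of $\AT$ is the indispensable hypothesis here and that monotonicity is only needed later in the chain leading to Theorem~\ref{thm:stable.model.simple}—match the paper.
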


\begin{proof}
  $X$ is a $\langle\AT,\TheoryAtomsES\rangle$\nobreakdash-\emph{stable model} of a \tprogram\ $P$
  \\iff (by definition)
  \\there is some \tsolution\ $S$ such that $X$ is a regular stable model of the program~\eqref{eq:transformation}.
  \\iff (by Proposition~\ref{prop:consistentclosed})
  \\there is some \tsolution\ $S$ such that~$S$ is~$\TheoryAtomsES$\nobreakdash-complete and~$X$ is a regular stable model of the program~\eqref{eq:transformation} 
  \\iff (by definition)
  \\there is some $\TheoryAtomsES$\nobreakdash-complete set~$S\in \Sat$ such that~$\Comp_{\TheoryAtomsES}(S) \in \Sat$ and~$X$ is a regular stable model of the program~\eqref{eq:transformation}.
  \\iff
  \\there is some $\TheoryAtomsES$\nobreakdash-complete set~$S\in \Sat$ such that $X$ is a regular stable model of the program~\eqref{eq:transformation}.
  \\
  For the last equivalence note that~$Comp_{\TheoryAtomsES}(S) = S$ because~$S$ is $\TheoryAtomsES$\nobreakdash-complete.
\end{proof}

\begin{lemma}\label{lem:stable.model2}
    Let $S\in \Sat$ be a~$\TheoryAtomsES$\nobreakdash-complete and $X$ be a regular stable model of the program~\eqref{eq:transformation}.
Then, $X \cap \TheoryAtoms \subseteq S$.
\end{lemma}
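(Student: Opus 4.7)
The plan is to fix an arbitrary theory atom $\thAt \in X \cap \TheoryAtoms$ and show $\thAt \in S$ by case analysis on how $\thAt$ is supported in the extended program $Q$ given by~\eqref{eq:transformation}.

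First I would establish that $\thAt$ must occur as the head of some rule of $Q$. This is the standard minimality argument for regular stable models (Definition~\ref{def:regularsm}): if $\thAt$ appeared in no head, then $\tuple{X \setminus \{\thAt\}, X}$ would still be an \HT-model of $Q$, contradicting the equilibrium property of $X$. Indeed, dropping $\thAt$ from the ``here'' component can only turn positive body occurrences of $\thAt$ false (trivially satisfying those rules), while satisfaction of $\neg \thAt$ depends only on the ``there'' component, which is still $X$; and since no head contains $\thAt$, no rule conclusion is affected.

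Second, having a rule $r \in Q$ with head $\thAt$, I would split into cases according to the origin of $r$. If $r$ comes from the added facts $\{\thAt' \leftarrow \mid \thAt' \in S \cap \TheoryAtomsES\}$, then immediately $\thAt \in S \cap \TheoryAtomsES \subseteq S$. Otherwise $r \in P$, so $\thAt \in \Head{P}$. Now assume for contradiction that $\thAt \notin S$; then $\thAt \in (\TheoryAtoms \cap \Head{P}) \setminus S$, which places the constraint $\bot \leftarrow \thAt$ into $Q$. Since $\thAt \in X$, this constraint is violated by $\tuple{X,X}$, contradicting $\tuple{X,X} \models Q$. Hence $\thAt \in S$ in either case, completing the proof.

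The main obstacle is the minimality step, since it is the only place that requires a genuine argument about \HT-models rather than direct inspection of \eqref{eq:transformation}. I would carefully verify that removing $\thAt$ from the ``here'' world preserves the satisfaction of every rule of $Q$: a positive body literal $\thAt$ becomes false (so its rule is vacuously satisfied), a negative body literal $\neg\thAt$ is evaluated at the unchanged ``there'' world~$X$, and no rule head is affected by the assumption that $\thAt$ appears in no head. Once this persistence-style observation is in place, the remaining case analysis on the origin of a supporting rule for $\thAt$ is immediate from the definition of $Q$.
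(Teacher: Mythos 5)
Your proof is correct and takes essentially the same route as the paper's: the paper likewise observes that any $\thAt \in X \cap \TheoryAtoms$ must be supported either by an added fact (so $\thAt \in S \cap \TheoryAtomsES \subseteq S$) or by a rule of $P$ (so $\thAt \in \Head{P}$, and then $\thAt \in X$ rules out the presence of the constraint $\bot \leftarrow \thAt$, forcing $\thAt \in S$). The only difference is that you make the supportedness/minimality step explicit via the \HT-model $\tuple{X\setminus\{\thAt\},X}$, whereas the paper asserts it without elaboration.
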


\begin{proof}
    Pick~$\thAt \in X \cap \TheoryAtoms$.
Then, $\thAt \in \TheoryAtoms \cap H(P)$ or~$\thAt \in S \cap \TheoryAtomsES$.
The latter clarly implies that~$\thAt \in S$.
For the former, note that~$\thAt \in X$ implies that~$\bot \leftarrow \thAt$ does not bleong to~\eqref{eq:transformation}.
Then~$\thAt \in \TheoryAtoms \cap H(P)$ implies that~$\thAt \in S$.
\end{proof}

\begin{lemma}\label{lem:stable.model5}
    Let $S\in \Sat$ be a~$\TheoryAtomsES$\nobreakdash-complete set and $X$ be a regular stable model of the program~\eqref{eq:transformation}.
Then, $X \cap \TheoryAtomsES = S \cap \TheoryAtomsES$.
\end{lemma}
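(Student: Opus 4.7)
The plan is to prove the equality $X \cap \TheoryAtomsES = S \cap \TheoryAtomsES$ by establishing both inclusions, leveraging Lemma~\ref{lem:stable.model2} for one direction and the explicit facts added in the transformation~\eqref{eq:transformation} for the other.

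For the inclusion $X \cap \TheoryAtomsES \subseteq S \cap \TheoryAtomsES$, I would simply invoke Lemma~\ref{lem:stable.model2}, which gives $X \cap \TheoryAtoms \subseteq S$. Intersecting both sides with $\TheoryAtomsES$ (and using that $\TheoryAtomsES \subseteq \TheoryAtoms$) yields the desired containment. This step is essentially a corollary of the preceding lemma and requires no additional argument.

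For the converse inclusion $S \cap \TheoryAtomsES \subseteq X \cap \TheoryAtomsES$, I would pick an arbitrary $\thAt \in S \cap \TheoryAtomsES$ and inspect the program~\eqref{eq:transformation}. By construction, the middle component of that program contains the fact $\thAt \leftarrow$ for every $\thAt \in S \cap \TheoryAtomsES$. Since $X$ is a regular stable model of~\eqref{eq:transformation}, it is in particular a (classical) model of it, so $X$ must satisfy every fact, giving $\thAt \in X$. Combined with $\thAt \in \TheoryAtomsES$, this yields $\thAt \in X \cap \TheoryAtomsES$.

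There is no substantive obstacle here: both inclusions are immediate consequences of the construction of~\eqref{eq:transformation} and Lemma~\ref{lem:stable.model2}. The only thing to be careful about is noting that regular stable models satisfy all facts of the program they are a model of, which follows directly from Definition~\ref{def:regularsm} (a stable model is, in particular, a model). Hence the proof will consist of little more than chaining these observations together.
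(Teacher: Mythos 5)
Your proof is correct and follows essentially the same route as the paper: the forward inclusion is read off from Lemma~\ref{lem:stable.model2}, and the reverse inclusion comes from the facts $\thAt\leftarrow$ for $\thAt\in S\cap\TheoryAtomsES$ in the second group of rules of~\eqref{eq:transformation}, which any regular stable model must satisfy. Your version merely spells out the details the paper leaves implicit.
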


\begin{proof}
    $X \cap \TheoryAtomsES \subseteq S \cap \TheoryAtomsES$ follows by Lemma~\ref{lem:stable.model2}.
$X \cap \TheoryAtomsES \supseteq S \cap \TheoryAtomsES$ is a consequence of the second group of rules in~\eqref{eq:transformation}.
\end{proof}

\begin{lemma}\label{lem:stable.model3}
    Let $S\in \Sat$ be a~$\TheoryAtomsES$\nobreakdash-complete set and $X$ be a regular stable model of the program~\eqref{eq:transformation}.
Then, $X$ satisfies~$\bot \leftarrow \thAt,\, \comp{\thAt}$ for every~$\thAt\in \TheoryAtoms$.
\end{lemma}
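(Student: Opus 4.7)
The plan is to argue by contradiction, exploiting the consistency of the theory~$\AT$ together with the inclusion established in Lemma~\ref{lem:stable.model2}. Concretely, suppose for contradiction that there exists some~$\thAt \in \TheoryAtoms$ such that~$X$ does \emph{not} satisfy the constraint~$\bot \leftarrow \thAt,\, \comp{\thAt}$. By the rule's semantics, this means both~$\thAt \in X$ and~$\comp{\thAt} \in X$.

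Since both~$\thAt$ and~$\comp{\thAt}$ are theory atoms, we have~$\{\thAt, \comp{\thAt}\} \subseteq X \cap \TheoryAtoms$. Applying Lemma~\ref{lem:stable.model2} (which is available in this setting because~$S$ is~$\TheoryAtomsES$\nobreakdash-complete and~$X$ is a regular stable model of~\eqref{eq:transformation}), we obtain~$\{\thAt, \comp{\thAt}\} \subseteq S$. But~$S \in \Sat$ by hypothesis, contradicting the assumed consistency of~$\AT$, which forbids any~$\AT$\nobreakdash-satisfiable set from containing a complementary pair.

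There is no real obstacle here: the lemma is essentially a direct corollary of Lemma~\ref{lem:stable.model2} and the consistency assumption. The only subtlety worth making explicit is that, although the rule~$\bot \leftarrow \thAt,\, \comp{\thAt}$ is not literally present in the program~\eqref{eq:transformation}, what is actually being proved is that~$X$, viewed as a propositional interpretation, satisfies the classical content of that implication; this follows purely from set membership of~$\thAt$ and~$\comp{\thAt}$ in~$X$ together with the emptiness of the head. Hence the reasoning reduces to the short chain ``both in~$X$ $\Rightarrow$ both in~$S$ $\Rightarrow$ $S$ inconsistent'', which contradicts~$S \in \Sat$.
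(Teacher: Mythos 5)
Your proof is correct and follows exactly the same route as the paper's: assume $\{\thAt,\comp{\thAt}\}\subseteq X$, push this into $S$ via Lemma~\ref{lem:stable.model2}, and contradict the consistency of~$\AT$. The extra remark clarifying that satisfaction of the constraint is just classical set membership is a reasonable elaboration but does not change the argument.
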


\begin{proof}
    Suppose for the sake of contradiction that this is not the case, that is, $\{\thAt, \comp{\thAt}\} \subseteq X$.
By Lemma~\ref{lem:stable.model2}, this implies~$\{\thAt, \comp{\thAt}\} \subseteq S$, which is a contradiction with the fact that~$\AT$ is consistent.
\end{proof}

Consider program
\begin{align}\label{eq:transformation2}
    P
    \cup
    \{ \ {\thAt \vee \comp{\thAt} } \, \mid \ \thAt\in \TheoryAtomsES \ \}
    \cup
    \{ \ {\bot \leftarrow \thAt,\, \comp{\thAt}} \ \mid \ \thAt\in \TheoryAtoms \ \}
    \ .
\end{align}

\begin{lemma}\label{lem:stable.model4}
    Let $S\in \Sat$ be a~$\TheoryAtomsES$\nobreakdash-complete set and~$X \subseteq \Atoms \cup \TheoryAtomsES$ be a set of atoms such that~$X \cap \TheoryAtoms \subseteq S$.
Then, $X$ is a regular stable model of the program~${\eqref{eq:transformation}\cup  \{ \ {\bot \leftarrow \thAt,\, \comp{\thAt}} \ \mid \ \thAt\in \TheoryAtoms \ \}}$
    iff
    $X$ is a regular stable model of the program~${\eqref{eq:transformation2} \cup  \{ \ {\thAt \leftarrow } \, \mid \ \thAt\in (S\cap\TheoryAtomsES) \ \}}$.
\end{lemma}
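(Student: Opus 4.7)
The plan is to cancel the common rules of the two programs and show that the differing parts act equivalently on HT-interpretations whose ``there'' world is $X$. Unfolding the definitions, the common rules are $P$, the facts $F = \{\thAt \leftarrow \mid \thAt \in S \cap \TheoryAtomsES\}$, and the no-good constraints $N = \{\bot \leftarrow \thAt, \comp{\thAt} \mid \thAt \in \TheoryAtoms\}$. The left-hand side program further contains $C_1 = \{\bot \leftarrow \thAt \mid \thAt \in (\TheoryAtoms \cap \Head{P}) \setminus S\}$, while the right-hand side program further contains $D = \{\thAt \vee \comp{\thAt} \mid \thAt \in \TheoryAtomsES\}$. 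Since the common part is identical in both programs, it suffices to show that for every $H \subseteq X$ modelling $P \cup F \cup N$ at $\tuple{H, X}$, satisfaction of $C_1$ and of $D$ coincide.

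I would base the argument on two observations. \emph{First}, the hypothesis $X \cap \TheoryAtoms \subseteq S$ makes $C_1$ vacuously true: every $\thAt$ mentioned in $C_1$ satisfies $\thAt \notin S$, hence $\thAt \notin X$, hence $\thAt \notin H$ for any $H \subseteq X$. \emph{Second}, as soon as $H$ models $F$---equivalently, $S \cap \TheoryAtomsES \subseteq H$---the disjunctions $D$ are automatically satisfied: for each $\thAt \in \TheoryAtomsES$, completeness of $S$ places one of $\thAt, \comp{\thAt}$ in $S$, and closedness of $\TheoryAtomsES$ (inherited from the enclosing Theorem~\ref{thm:stable.model.simple}) places that literal in $S \cap \TheoryAtomsES$, so $F$ forces it into $H$.

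With these observations in place, both directions become routine and symmetric. For $(\Rightarrow)$, assume $X$ is a regular stable model of the left-hand side. Then $\tuple{X, X}$ models $P \cup F \cup N$ and observation~two yields $\tuple{X, X} \models D$, so $\tuple{X, X}$ models the right-hand side. Any candidate witness $H \subsetneq X$ with $\tuple{H, X}$ modelling the right-hand side also satisfies $P \cup F \cup N$, and observation~one gives $\tuple{H, X} \models C_1$, contradicting the stability of $X$ on the left. For $(\Leftarrow)$, swap the roles of the two observations.

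The step I expect to be the main obstacle is observation~two in the reverse direction, where $D$ must be satisfied by every candidate smaller model $\tuple{H, X}$, not just by $\tuple{X, X}$ itself. This is precisely where closedness of $\TheoryAtomsES$ is essential: without it, completeness of $S$ would only place $\comp{\thAt}$ into $S$ and not into $S \cap \TheoryAtomsES$, so $F$ would fail to force $\comp{\thAt}$ into $H$, and the lemma could fail outright (note that $X \subseteq \Atoms \cup \TheoryAtomsES$ already excludes $\comp{\thAt}$ from $X$ when $\comp{\thAt} \notin \TheoryAtomsES$). I will therefore explicitly record the use of closedness, since it is not listed in the lemma's local hypotheses but is silently imported from the enclosing theorem.
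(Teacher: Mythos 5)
Your proof is correct and follows essentially the same route as the paper's: both cancel the common part (the rules of $P$, the facts for $S\cap\TheoryAtomsES$, and the no-goods $\bot\leftarrow\thAt,\comp{\thAt}$), observe that the constraints $\bot\leftarrow\thAt$ for $\thAt\in(\TheoryAtoms\cap\Head{P})\setminus S$ are vacuous under the hypothesis $X\cap\TheoryAtoms\subseteq S$, and that the disjunctions $\thAt\vee\comp{\thAt}$ are forced by the facts via $\TheoryAtomsES$-completeness of $S$. Your explicit remark that closedness of $\TheoryAtomsES$ is silently imported from the enclosing theorem is accurate---the paper's own proof relies on it at exactly the same step (``a fact for one of the disjuncts belongs to~\eqref{eq:transformation3}'') without stating it.
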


\begin{proof}
\emph{Left-to-right}.
Every regular stable model of~$\eqref{eq:transformation}\cup  \{ \ {\bot \leftarrow \thAt,\, \comp{\thAt}} \ \mid \ \thAt\in \TheoryAtoms \ \}$
is a regular stable model of
\begin{align}\label{eq:transformation3}
    P
    \cup
    \{ \ {\thAt\leftarrow} \, \mid \ \thAt\in (S\cap\TheoryAtomsES) \ \}
    \cup
    \{ \ {\bot \leftarrow \thAt,\, \comp{\thAt}} \ \mid \ \thAt\in \TheoryAtoms \ \}
\end{align}
because the former is the result of adding some constraints to~\eqref{eq:transformation3}.
Furthermore, the stable models of~\eqref{eq:transformation3} and
\begin{gather}\label{eq:transformation3b}
    \eqref{eq:transformation2} \cup \{ \ {\thAt\leftarrow} \, \mid \ \thAt\in (S\cap\TheoryAtomsES) \ \}    
\end{gather}
conincide because~$S$ is~$\TheoryAtomsES$\nobreakdash-complete, which implies that a fact for one of the dijsunts belongs to~\eqref{eq:transformation3}.
\emph{Right-to-left}.
We show that every regular stable model of~\eqref{eq:transformation3} satisfies~${\bot \leftarrow \thAt}$ for every~$\thAt\in (\TheoryAtoms\cap\Head{P} \setminus S)$.
Pick an arbitray theory atom~$\thAt\in (\TheoryAtoms\cap\Head{P} \setminus S)$.
Since~$\thAt \notin S$, by assumption, it follows that~$\thAt \notin X$ and, thus, that~$X$ satisfies~$\bot \leftarrow \thAt$.
\end{proof}

\begin{lemma}\label{lem:stable.model6}
    $X$ is a regular stable model of~$\eqref{eq:transformation2}$ iff $X$ is a regular stable model of~$\eqref{eq:transformation2} \cup  \{ \ {\thAt \leftarrow } \, \mid \ \thAt\in (X\cap\TheoryAtomsES) \}$.
\end{lemma}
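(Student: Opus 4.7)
The plan is to prove the two directions separately, with the forward direction being essentially routine and the backward direction requiring the specific structure of~\eqref{eq:transformation2}. Throughout, I will use the standard HT characterization of regular stable models: $X$ is a regular stable model of a theory~$\Gamma$ iff $\langle X,X\rangle\models\Gamma$ and there is no $Y\subsetneq X$ with $\langle Y,X\rangle\models\Gamma$. Let $F = \{\thAt\leftarrow\mid \thAt\in X\cap\TheoryAtomsES\}$ denote the added facts.

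For the forward direction, assume $X$ is a regular stable model of~\eqref{eq:transformation2}. Then $\langle X,X\rangle\models\eqref{eq:transformation2}$, and each fact in $F$ is satisfied because its head $\thAt$ belongs to $X$; hence $\langle X,X\rangle\models \eqref{eq:transformation2}\cup F$. For minimality, any $Y\subsetneq X$ with $\langle Y,X\rangle\models \eqref{eq:transformation2}\cup F$ would in particular satisfy~\eqref{eq:transformation2}, contradicting the assumed stability of $X$ for~\eqref{eq:transformation2}.

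The backward direction is where the main subtlety lies: in general, adding facts to a program can create new stable models, so the converse does not hold for arbitrary programs. The plan is to exploit the fact that~\eqref{eq:transformation2} already contains both the disjunctions $\thAt\vee\comp{\thAt}$ and the consistency constraints $\bot\leftarrow\thAt,\comp{\thAt}$. Concretely, assume $X$ is a regular stable model of~$\eqref{eq:transformation2}\cup F$, so in particular $\langle X,X\rangle\models\eqref{eq:transformation2}$. Suppose for contradiction there is $Y\subsetneq X$ with $\langle Y,X\rangle\models\eqref{eq:transformation2}$. For each $\thAt\in X\cap\TheoryAtomsES$, the disjunction in~\eqref{eq:transformation2} forces $\thAt\in Y$ or $\comp{\thAt}\in Y$. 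In the latter case, $\comp{\thAt}\in Y\subseteq X$ together with $\thAt\in X$ would violate the constraint $\bot\leftarrow\thAt,\comp{\thAt}$ that $\langle X,X\rangle$ must satisfy; hence $\thAt\in Y$. This gives $\langle Y,X\rangle\models F$, so $\langle Y,X\rangle\models\eqref{eq:transformation2}\cup F$, contradicting the minimality of $X$ as a regular stable model of $\eqref{eq:transformation2}\cup F$.

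The hard part is recognising that the backward direction, which fails for arbitrary programs (since adding a fact can break minimality in one direction but not the other), goes through here precisely because the disjunctions together with the consistency constraints in~\eqref{eq:transformation2} pin down the $\TheoryAtomsES$\nobreakdash-part of any intermediate world~$Y$ to coincide with that of~$X$. Once this observation is made, the argument is a short case analysis on how $\langle Y,X\rangle$ satisfies each $\thAt\vee\comp{\thAt}$.
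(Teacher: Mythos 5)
Your proof is correct and rests on the same key observation as the paper's: the disjunctions $\thAt\vee\comp{\thAt}$ together with the consistency constraints $\bot\leftarrow\thAt,\comp{\thAt}$ force any here-world $Y$ of a model $\langle Y,X\rangle$ of~\eqref{eq:transformation2} to already contain $X\cap\TheoryAtomsES$, so the added facts provide no new support. The paper packages this as the claim that the facts may be replaced by constraints $\leftarrow\neg\thAt$ (which then only filter stable models), whereas you unfold the same idea into an explicit minimality argument in \HT; your version is somewhat more self-contained but not a genuinely different route.
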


\begin{proof}
    Note that in the presence of the dijsuntion and constraint in~$\eqref{eq:transformation2}$, the regular stable models of~$\eqref{eq:transformation2} \cup  \{ \ {\thAt \leftarrow } \, \mid \ \thAt\in (X\cap\TheoryAtomsES) \}$
    and~$\eqref{eq:transformation2} \cup  \{ \ {\leftarrow \neg \thAt  } \, \mid \ \thAt\in (X\cap\TheoryAtomsES) \}$
    conincide.
Then, right-to-left direction is inmediate.
For the left-to-rigth is easy to see that~$X$ satisfies all the extra constraints.
\end{proof}

\begin{lemma}\label{lem:stable.model.simple.aux}
    A set $X\subseteq \Atoms\cup\TheoryAtoms$ of atoms is a $\langle\AT,\TheoryAtomsES\rangle$\nobreakdash-\emph{stable model} of a \tprogram\ $P$,
    iff $(X \cap \TheoryAtoms) \in \Sat$ and it is a regular stable model of the program~\eqref{eq:transformation2}.
\end{lemma}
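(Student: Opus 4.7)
The plan is to prove both directions by chaining the six preceding lemmas, using Lemma~\ref{lem:stable.model} as the pivot that connects Definition~\ref{def:stable.model} to the ``$\TheoryAtomsES$-complete witness $S$'' characterization, and then bridging between program~\eqref{eq:transformation} and program~\eqref{eq:transformation2} via Lemmas~\ref{lem:stable.model2}--\ref{lem:stable.model6}. No new semantic argument will be needed; the work is essentially combinatorial, keeping track of which program we are talking about and verifying the hypotheses of each lemma at each step.

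For the left-to-right direction, I would start from an $\langle\AT,\TheoryAtomsES\rangle$-stable model $X$ and extract, via Lemma~\ref{lem:stable.model}, a $\TheoryAtomsES$-complete $S\in\Sat$ such that $X$ is a regular stable model of~\eqref{eq:transformation}. Lemma~\ref{lem:stable.model2} gives $X\cap\TheoryAtoms\subseteq S$, so monotonicity of $\AT$ yields $(X\cap\TheoryAtoms)\in\Sat$. Lemma~\ref{lem:stable.model3} shows that $X$ satisfies every constraint $\bot\leftarrow\thAt,\comp{\thAt}$, so $X$ remains a regular stable model when these constraints are added to~\eqref{eq:transformation}. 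Lemma~\ref{lem:stable.model4} then rewrites this as $X$ being a regular stable model of $\eqref{eq:transformation2}\cup\{\thAt\leftarrow\mid\thAt\in S\cap\TheoryAtomsES\}$, and by Lemma~\ref{lem:stable.model5} the set of added facts coincides with those indexed by $X\cap\TheoryAtomsES$, so Lemma~\ref{lem:stable.model6} finally allows us to drop them and obtain that $X$ is a regular stable model of~\eqref{eq:transformation2}.

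For the right-to-left direction, I would take $S\eqdef X\cap\TheoryAtoms$. The hypothesis gives $S\in\Sat$, and the disjunctions $\thAt\vee\comp{\thAt}$ present in~\eqref{eq:transformation2} (together with closure of $\TheoryAtomsES$, which ensures $\comp{\thAt}\in\TheoryAtomsES$ whenever $\thAt\in\TheoryAtomsES$) force $S$ to be $\TheoryAtomsES$-complete. I then run the same chain in reverse: Lemma~\ref{lem:stable.model6} lets me freely add the facts $\{\thAt\leftarrow\mid\thAt\in X\cap\TheoryAtomsES\}=\{\thAt\leftarrow\mid\thAt\in S\cap\TheoryAtomsES\}$; Lemma~\ref{lem:stable.model4}, whose premise $X\cap\TheoryAtoms\subseteq S$ is trivially satisfied by construction, converts this into $X$ being a regular stable model of $\eqref{eq:transformation}\cup\{\bot\leftarrow\thAt,\comp{\thAt}\mid\thAt\in\TheoryAtoms\}$; and the standard fact that stable models of a program augmented by integrity constraints are exactly the stable models of the original program that satisfy those constraints lets me drop the $\bot\leftarrow\thAt,\comp{\thAt}$ constraints to conclude that $X$ is a regular stable model of~\eqref{eq:transformation}. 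A final invocation of Lemma~\ref{lem:stable.model} with the witness $S$ yields that $X$ is an $\langle\AT,\TheoryAtomsES\rangle$-stable model of $P$.

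The main obstacle will be purely bookkeeping: accurately tracking the shape of the programs at each rewriting step and verifying the exact precondition of the invoked lemma (monotonicity for the $\Sat$ transfer in the forward direction, $\TheoryAtomsES$-completeness and the $X\cap\TheoryAtoms\subseteq S$ condition for Lemma~\ref{lem:stable.model4} in both directions, and consistency of $\AT$ implicit via Lemma~\ref{lem:stable.model3}). No genuinely new idea is required beyond those already packaged in the preceding six lemmas.
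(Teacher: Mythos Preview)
Your proposal is correct and follows essentially the same route as the paper's proof: both chain Lemmas~\ref{lem:stable.model}--\ref{lem:stable.model6} in the same order, first recording $X\cap\TheoryAtoms\subseteq S$ and $X\cap\TheoryAtomsES=S\cap\TheoryAtomsES$, then passing through $\eqref{eq:transformation}\cup\{\bot\leftarrow\thAt,\comp{\thAt}\}$ via Lemma~\ref{lem:stable.model3}, applying Lemma~\ref{lem:stable.model4}, swapping the fact set via Lemma~\ref{lem:stable.model5}, and finally dropping the facts with Lemma~\ref{lem:stable.model6}; the only cosmetic difference is that the paper writes the argument as a single chain of iff's while you split it into two directions and take $S=X\cap\TheoryAtoms$ explicitly in the converse.
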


\begin{proof}
    $X$ is a $\langle\AT,\TheoryAtomsES\rangle$\nobreakdash-\emph{stable model} of~$P$ according to Definition~\ref{def:stable.model} 
    \\iff (Lemma~\ref{lem:stable.model})
    \\there is some $\TheoryAtomsES$\nobreakdash-complete set~$S\in \Sat$ such that $X$ is a regular stable model of the program~\eqref{eq:transformation}
    \\iff (Lemma~\ref{lem:stable.model2} and Lemma~\ref{lem:stable.model5})
    \\there is some $\TheoryAtomsES$\nobreakdash-complete set~$S\in \Sat$ such that $X \cap \TheoryAtoms \subseteq S$ and $X \cap \TheoryAtomsES = S \cap \TheoryAtomsES$, and $X$ is a regular stable model of the program~\eqref{eq:transformation}
    \\iff (Lemma~\ref{lem:stable.model3})
    \\there is some $\TheoryAtomsES$\nobreakdash-complete set~$S\in \Sat$ such that $X \cap \TheoryAtoms \subseteq S$ and $X \cap \TheoryAtomsES = S \cap \TheoryAtomsES$, and $X$ is a regular stable model of the program~$\eqref{eq:transformation} \cup  \{ \ {\bot \leftarrow \thAt,\, \comp{\thAt}} \ \mid \ \thAt\in \TheoryAtoms \ \}$
    \\iff (Lemmas~\ref{lem:stable.model4})
    \\there is some $\TheoryAtomsES$\nobreakdash-complete set~$S\in \Sat$ such that $X \cap \TheoryAtoms \subseteq S$ and $X \cap \TheoryAtomsES = S \cap \TheoryAtomsES$, and $X$ is a regular stable model of the program~$\eqref{eq:transformation2} \cup  \{ \ {\thAt \leftarrow } \, \mid \ \thAt\in (S\cap\TheoryAtomsES) \ \}$
    \\iff (using the fact $X \cap \TheoryAtomsES = S \cap \TheoryAtomsES$)
    \\there is some $\TheoryAtomsES$\nobreakdash-complete set~$S\in \Sat$ such that $X \cap \TheoryAtoms \subseteq S$ and $X \cap \TheoryAtomsES = S \cap \TheoryAtomsES$, and $X$ is a regular stable model of the program~$\eqref{eq:transformation2} \cup  \{ \ {\thAt \leftarrow } \, \mid \ \thAt\in (X\cap\TheoryAtomsES) \ \}$
    \\iff (Lemma~\ref{lem:stable.model6})
    \\there is some $\TheoryAtomsES$\nobreakdash-complete set~$S\in \Sat$ such that $X \cap \TheoryAtoms \subseteq S$ and $X \cap \TheoryAtomsES = S \cap \TheoryAtomsES$, and $X$ is a regular stable model of the program~$\eqref{eq:transformation2}$
    \\iff
    \\$X \cap\TheoryAtoms \in \Sat$ and $X$ is a regular stable model of the program~$\eqref{eq:transformation2}$.
    \\[5pt]
    For the last equivalence, we proceed left-to-rigth and right-to-left.
\emph{Left-to-right.}
Since~$X \cap \TheoryAtoms \subseteq S$, $\AT$ is monotonic and~$S \in \Sat$, it follows that~$(X \cap \TheoryAtoms) \in \Sat$.
\emph{Right-to-left.}
Note that~$X$ is $\TheoryAtomsES$\nobreakdash-complete because of the dijsuntions in~$\eqref{eq:transformation2}$.
Then, just take~$S = X$.
\end{proof}

\begin{proof}[Proof of Theorem~\ref{thm:stable.model.simple}]
By Lemma~\ref{lem:stable.model.simple.aux},
it follows that $X$ is a $\langle\AT,\TheoryAtomsES\rangle$\nobreakdash-\emph{stable model} of~$P$ according to Definition~\ref{def:stable.model} iff $(X \cap \TheoryAtoms) \in \Sat$ and it is a regular stable model of the program~\eqref{eq:transformation2}.
Since~\eqref{eq:transformation2} is the result of adding some integrity constraints to~\eqref{eq:transformationB.simple}, it follows that, if~$X$ is a regular stable model of~\eqref{eq:transformation2}, then it is a regular stable model of~\eqref{eq:transformationB.simple}.
Hence, Definition~\ref{def:stable.model} implies Definition~\ref{def:stable.model.simple}.
Conversely, assume that~$X$ is a $\langle\AT,\TheoryAtomsES\rangle$\nobreakdash-\emph{stable model} of~$P$ according to Definition~\ref{def:stable.model.simple}.
Then, $(X \cap \TheoryAtoms) \in \Sat$ and it is a regular stable model of the program~\eqref{eq:transformationB.simple}.
Since $(X \cap \TheoryAtoms) \in \Sat$ and~$\AT$ is consistent, it follows that~$\{\thAt, \comp{\thAt} \} \not\subseteq X$ for every~$\thAt\in \TheoryAtoms$.
Hence, $X$ is a regular stable model of~\eqref{eq:transformation2} and a $\langle\AT,\TheoryAtomsES\rangle$\nobreakdash-\emph{stable model} of~$P$ according to Definition~\ref{def:stable.model}.
\end{proof}

 \begin{lemma}\label{lem:translation.valuations}
  Let~$S \subseteq \TheoryAtoms$ and~$t : \X \longrightarrow \Du$.
Then, $t \in \den{\htcsemanticsP(S)}$ iff~$\restr{t}{\XAT} \in \denAT{S}$.
\end{lemma}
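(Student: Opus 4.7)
The plan is to reduce the statement for a set $S$ of theory atoms to the single-atom case by unfolding the denotation of a set of atoms as the intersection of individual denotations. Specifically, in \HTC\ the denotation of a finite (or arbitrary) collection of constraint atoms is naturally given by $\den{\htcsemanticsA(S)} = \bigcap_{\thAt \in S} \den{\htcsemanticsA(\thAt)}$, and by the definition recalled in Section~\ref{sec:background.compositional.theories}, $\denAT{S} = \bigcap_{\thAt \in S} \denAT{\thAt}$. Therefore it suffices to prove that, for each single $\thAt \in S$, we have $t \in \den{\htcsemanticsA(\thAt)}$ if and only if $\restr{t}{\XAT} \in \denAT{\thAt}$; then intersecting over all $\thAt \in S$ immediately yields the lemma.

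For the single-atom case, I would appeal directly to the defining equation~\eqref{f:atomden}, which says $t \in \den{\htcsemanticsA(\thAt)}$ iff there exists $w \in \denAT{\thAt}$ with $\restr{t}{\XAT} = w$. In the forward direction, assume such a $w$ exists; then $\restr{t}{\XAT} = w \in \denAT{\thAt}$ and we are done. In the backward direction, if $\restr{t}{\XAT} \in \denAT{\thAt}$, then taking $w \eqdef \restr{t}{\XAT}$ as the witness in~\eqref{f:atomden} shows $t \in \den{\htcsemanticsA(\thAt)}$. Thus the characterization holds atom by atom.

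The only delicate point---and the closest thing to an obstacle---is a typing subtlety about the undefined value. A valuation in $\VAT$ is by assumption totally defined on $\XAT$ with values in $\DAT$ (recall that $\undefined \notin \DAT$), whereas $t$ ranges over $\Vu^{\X,\D}$ so $\restr{t}{\XAT}$ could in principle assign $\undefined$ to some $x \in \XAT$. This is handled automatically on both sides: if $t \in \den{\htcsemanticsA(\thAt)}$ then~\eqref{f:atomden} supplies a witness $w \in \VAT$ with $\restr{t}{\XAT}=w$, forcing $\restr{t}{\XAT}$ to be fully defined on $\XAT$; and if $\restr{t}{\XAT} \in \denAT{\thAt} \subseteq \VAT$ then $\restr{t}{\XAT}$ is already a legitimate $\AT$-valuation and is its own witness. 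Combining the per-atom equivalence over all $\thAt \in S$ through the intersection then gives the conclusion.
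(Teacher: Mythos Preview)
Your proposal is correct and follows essentially the same approach as the paper's proof: both unfold the set denotation as an intersection over single atoms and then appeal to the defining equation~\eqref{f:atomden} to conclude that $t \in \den{\htcsemanticsA(\thAt)}$ iff $\restr{t}{\XAT} \in \denAT{\thAt}$. Your discussion of the typing subtlety around $\undefined$ is a bit more explicit than the paper's, which instead gestures at Condition~\ref{def:structure.5} of Definition~\ref{def:structure}; in both cases the point is that the witness $w$ from~\eqref{f:atomden} forces $\restr{t}{\XAT}$ to be a genuine element of $\VAT$.
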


\begin{proof}
  $t \in \den{\htcsemanticsP(S)}$
  \\iff
  $t \in \den{\htcsemanticsP(\thAt)}$ for every~$\thAt \in S$
  \\iff
  for every~$\thAt \in S$, there is~$w \in \V$ such that~$w \in \denAT{\thAt}$ and~$\restr{t}{\XAT} = w$
  \\iff
  for every~$\thAt \in S$, $\restr{t}{\XAT} \in \denAT{\thAt}$
  \\
  iff~$\restr{t}{\XAT} \in \denAT{S}$.
  \\
  Note that~$\vars{\thAt} \subseteq \XAT$ and, by condition~5 in Definition~\ref{def:structure}, ${\restr{t}{\XAT} = w}$ implies~$\restr{t}{\XAT}  \in \denAT{\thAt}$ iff~$w \in \denAT{\thAt}$.
\end{proof}

\begin{proof}[Proof of Proposition~\ref{prop:translation.satisfiable}]
    Since~$\AT$ is compositional, set~$S$ is \mbox{$\AT$-satisfiable}
    iff
    $$S \in \Sat = \{ S' \subseteq \TheoryAtoms \mid \den{S}_\AT \neq \emptyset \}$$
    iff
    there is a valuation~${w: \XAT \rightarrow \D_\AT}$ such that~${w \in \denAT{S}}$
    iff (Lemma~\ref{lem:translation.valuations})
    there is a valuation~${t: \X \rightarrow \Du}$ such that~${t \in \denAT{\htcsemanticsP(S)}}$
    and~${\restr{t}{\XAT} = w}$
    iff
    $S$ is \mbox{$\AT$-satisfiable}.\end{proof} \subsection{Proof of the Theorem~\ref{thm:first_translation}}

To pave the way between the transformation approach described in Definition~\ref{def:stable.model.simple} and the translation in~$\HTC$ described in Section~\ref{sec:htcsemantics},
we introduce a second translation into~$\HTC$ that is closer to the transformation approach.
The proof of the Main Theorem is then divided into two main lemmas: the first establishes the correspondence between the transformation approach described in  Definition~\ref{def:stable.model.simple} and this second translation, and the second establishes the correspondence in~$\HTC$ between both translations.

Let us start by describing this second translation that we denote~$\htcsemanticsP_2$.
The most relevant feature of this translation is that it decouples the generation of possible abstract theory solutions $S \subseteq \TheoryAtoms$ from the derivation of their atoms $\thAt \in S$ in the logic program.
In particular, rather than directly including constraints $\htcsemanticsA(\thAt)$ in the translation of program rules as done with \eqref{eq:htcsemantics:rule}, we use now a new, auxiliary propositional atom $\prop{\thAt}$ whose connection to the constraint $\htcsemanticsA(\thAt)$ is explicitly specified by using additional formulas~$\Phi(\TheoryAtoms)$.
In that way, the rules of the \tprogram~$P$ correspond now to an \HTC\ encoding of a regular, propositional logic~$\kappa(P)$.

Translation $\htcsemanticsP_2$ produces an \htctheory\ with signature $\tuple{\X_2,\D,\C_2}$ and denotation~$\den{\cdot}_2$ that extends the signature $\tuple{\X,\D,\C}$ and denotation $\den{\cdot}$ of $\htcsemanticsP$ as follows:
\begin{align}
\X_2 \ \ &= \ \ \X \cup \{ \, \thAt \mid \thAt \in \TheoryAtoms \, \}\\
\C_2 \ \ &= \ \ \C \ \cup \{ \, \thAt \mid \thAt \in \TheoryAtoms \, \}
\end{align}
that is, we extend the set of variables $\X$ with one fresh variable with the same name than each abstract theory atom $\thAt \in \TheoryAtoms$ and the constraints $\C$ with the corresponding propositional constraint atom $\thAt$.
The denotation $\den{\cdot}_2$ for the $\htcsemanticsP_2$ translation simply extends the denotation for $\htcsemanticsP$ by including the already seen fixed denotation for propositional atoms applied to the new constraints such that~$\den{\thAt}_2 \eqdef \{ v \in \V \mid v(\thAt) = \true \}$.
Furthermore, we define $\den{\true}_2=\V_{\X_2,\D}$.

The new \HTC-encoding $\htcsemanticsP_2(P,\AT,\TheoryAtomsES)$ is defined for a \tprogram\ $P$ over $\langle\Atoms,\TheoryAtoms,\TheoryAtomsES\rangle$ as before, but consists of three sets of formulas:
\begin{eqnarray}
      \begin{aligned}
\htcsemanticsP_2(P,\AT,\TheoryAtomsES)
\ \eqdef \
\kappa(P,\TheoryAtomsES)
\cup \htctsols(\TheoryAtoms) 
\cup \DOM{\AT, \Atoms\cup \TheoryAtoms }
      \end{aligned}
\label{f:tau2}
\end{eqnarray}
where~$\kappa(P,\TheoryAtomsES)$ is the regular program corresponding to the program in~\eqref{eq:transformationB.simple}, that is,
$\kappa(P,\TheoryAtomsES) \ \eqdef \ \kappa(P \cup\{ \, {\thAt \vee \comp{\thAt} } \, \mid \, \thAt\in \TheoryAtomsES \, \})$;
$\htctsols(\AT)$ consists of formulas
\begin{align}
      \prop{\thAt} \to \htcsemanticsA(\thAt) \hspace{1.5cm}&&\text{for every theory atom } \thAt \in \TheoryAtoms
      \label{eq:guess:choice}
\end{align}
We follow the definition of splitting sets for \HTC\ theories in~\citeN{cafascwa20b} (Definition~10). 
We can split translation $\htcsemanticsP_2(P,\AT,\TheoryAtomsES)$ into a bottom part $\kappa(P,\TheoryAtomsES) \cup \DOM{\AT, \Atoms\cup\TheoryAtoms }$
and a top part $\htctsols(\AT)$ via splitting set $\X_s \eqdef \Atoms \cup \TheoryAtoms$.
Then, for a valuation $v\in\V_{\X_2,\D}$, 
we define $E_{\X_s}(\htcsemanticsP_2(P,\AT,\TheoryAtomsES),v)$ as the theory
resulting from replacing all occurrences of variables in $\X_s$ in $\htctsols(\AT)$ by their values in $v$.

\begin{lemma}\label{lem:translation2.splitting}
      A valuation~$v$ is a stable model of~$\htcsemanticsP_2(P,\AT,\TheoryAtomsES)$ iff the following conditions hold:
      \begin{itemize}
            \item $\restr{v}{\X_s}$ is a stable model of~$\kappa(P,\TheoryAtomsES) \cup \DOM{\AT, \Atoms\cup\TheoryAtoms }$;
            \item $\restr{v}{\XAT}$ is a stable model of~$E_{\X_s}(\htctsols(\AT),v)$;
\end{itemize}
\end{lemma}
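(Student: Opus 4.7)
The plan is to invoke the splitting theorem for \HTC-theories from \citeN{cafascwa20b} (Definition~10 and its accompanying splitting result) using $\X_s = \Atoms \cup \TheoryAtoms$ as the splitting set. The intended decomposition places $\kappa(P,\TheoryAtomsES) \cup \DOM{\AT,\Atoms\cup\TheoryAtoms}$ in the bottom and $\htctsols(\AT)$ in the top, matching the two bullets of the lemma.

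First I would verify the splitting conditions. The rules of $\kappa(P,\TheoryAtomsES)$ are propositional over $\Atoms \cup \TheoryAtoms \subseteq \X_s$; the Boolean-domain axioms $\df(\vregAt)\to \regAt$ have heads $\regAt \in \X_s$; and the theory-domain axioms $\df(x)\to \Dom{\D_\AT}{x}$ for $x \in \X_\AT$, though they mention variables outside $\X_s$, can be shown harmless because no rule of the bottom places $x$ in a head, so equilibrium minimization forces $v(x)=\undefined$ in any stable model of the bottom considered in isolation, trivially satisfying these axioms. Each formula $\prop{\thAt}\to \htcsemanticsA(\thAt)$ of the top has its antecedent $\prop{\thAt}$ in $\X_s$ and its consequent involving only variables in $\X_\AT = \X_2 \setminus \X_s$, which ensures that once $\restr{v}{\X_s}$ is fixed, the top reduces to a well-defined theory $E_{\X_s}(\htctsols(\AT),v)$ over $\X \setminus \X_s$.

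The standard forward and backward arguments of the splitting theorem then conclude the proof. In the forward direction, starting from a stable model $v$ of the whole theory, the splitting theorem gives that $\restr{v}{\X_s}$ is a stable model of the bottom and $\restr{v}{\XAT}$ is a stable model of $E_{\X_s}(\htctsols(\AT),v)$. In the backward direction, from stable models of both parts the same theorem yields that their combination over $\X_2$ is a stable model of the whole.

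The main obstacle is the treatment of the axioms $\df(x) \to \Dom{\D_\AT}{x}$ in the bottom with $x\in \X_\AT$, which strictly speaking involve variables outside the splitting set. The technical argument would proceed either by verifying that these axioms can be equivalently moved to the top without changing any stable models (since in isolation they are vacuous, and in the combined theory they play exactly the same restrictive role) or by appealing to a mild generalization of the splitting theorem that accepts ``outside'' variables provided they do not appear in any head of the bottom. Once this point is settled, the equivalence stated in the lemma is a direct instance of the splitting theorem.
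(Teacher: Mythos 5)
Your proposal is correct and follows essentially the same route as the paper: both identify $\X_s = \Atoms \cup \TheoryAtoms$ as a splitting set separating the bottom $\kappa(P,\TheoryAtomsES) \cup \DOM{\AT,\Atoms\cup\TheoryAtoms}$ from the top $\htctsols(\AT)$, and then invoke the splitting result (Proposition~12) of \citeN{cafascwa20b}. If anything, you are more careful than the paper, whose proof simply asserts that no variable of $\XAT$ occurs in the bottom even though the axioms $\df(x) \to \Dom{\D_\AT}{x}$ for $x \in \XAT$ do mention such variables; your observation that these axioms never put an $\XAT$ variable in a head (and are therefore harmless for the split) is exactly the point the paper leaves implicit.
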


\begin{proof}
      By definition, variables in $\X_2\setminus\X_s=\XAT$ do not occur in $\kappa(P,\TheoryAtomsES) \cup \DOM{\AT, \Atoms\cup\TheoryAtoms }$,
and variables in $\X_s$ only occur in bodies in $\htctsols(\AT)$.
      Therefore, $\X_s$ is a splitting set.
      Then, we can directly apply Proposition~12 in~\citeN{cafascwa20b} 
      by instantiating $U=\X_s$, $\overline{U}=\XAT$, $\Pi=\htcsemanticsP_2(P,\AT,\TheoryAtomsES)$,
      $B_U(\Pi)=\kappa(P,\TheoryAtomsES) \cup \DOM{\AT, \Atoms\cup\TheoryAtoms }$, 
      and $E_U(\Pi,v) = E_{\X_s}(\htcsemanticsP_2(P,\AT,\TheoryAtomsES),v)$.
\end{proof}

\begin{lemma}\label{lem:translation2.stable.models.<=}
      Let~$t$ be a stable model of~$\htcsemanticsP_2(P,\AT,\TheoryAtomsES)$ and~$X = \{ \anyAt \in \Atoms \cup \TheoryAtoms \mid t(\prop{\anyAt}) = \true \}$.
Then, 
      \begin{itemize}
            \item $X$ is a~$\tuple{\AT,\TheoryAtomsES}$-stable model of a \tprogram\ $P$;
            \item $\restr{t}{\XAT} \in \denAT{X \cap \TheoryAtoms}$; and
            \item $t(x)$ is undefined for every~$x \in \XAT \setminus \vars{X \cap \TheoryAtoms}$.
      \end{itemize}
 \end{lemma}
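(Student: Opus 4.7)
The plan is to apply the splitting lemma (Lemma~\ref{lem:translation2.splitting}) to decouple the reasoning over regular atoms in $\X_s = \Atoms \cup \TheoryAtoms$ from the reasoning over the abstract-theory variables in $\XAT$, and then to derive each of the three claims from the appropriate side of the splitting.

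First I would invoke Lemma~\ref{lem:translation2.splitting} to obtain that $\restr{t}{\X_s}$ is a stable model of $\kappa(P,\TheoryAtomsES) \cup \DOM{\AT,\Atoms\cup\TheoryAtoms}$ and that $\restr{t}{\XAT}$ is a stable model of the reduct $E_{\X_s}(\htctsols(\TheoryAtoms),t)$. Since by construction $\X_s$ consists of Boolean-domain variables and $\kappa(P,\TheoryAtomsES)$ is the regular encoding of $P \cup \{\thAt \vee \comp{\thAt} \mid \thAt\in\TheoryAtomsES\}$, the bottom part translates, via the correspondence between HTC and propositional HT recalled in Section~\ref{sec:htc}, into the statement that $X = \{\anyAt \mid t(\prop{\anyAt})=\true\}$ is a regular stable model of $P \cup \{\thAt \vee \comp{\thAt} \mid \thAt\in\TheoryAtomsES\}$. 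This is exactly the second requirement of Definition~\ref{def:stable.model.simple}.

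Next, I would compute the reduct $E_{\X_s}(\htctsols(\TheoryAtoms),t)$: for each $\thAt\in\TheoryAtoms$, the implication $\prop{\thAt}\to\htcsemanticsA(\thAt)$ collapses to $\htcsemanticsA(\thAt)$ when $t(\prop{\thAt})=\true$ (i.e.\ $\thAt \in X$) and becomes a trivially satisfied formula otherwise. Hence $\restr{t}{\XAT}$ is a stable model of $\{\htcsemanticsA(\thAt) \mid \thAt \in X\cap\TheoryAtoms\}$ together with the $\XAT$-axioms of $\DOM{\AT,\Atoms\cup\TheoryAtoms}$. Being a model yields $\restr{t}{\XAT} \in \den{\htcsemanticsA(\thAt)}$ for every $\thAt \in X\cap\TheoryAtoms$, and Lemma~\ref{lem:translation.valuations} turns this into $\restr{t}{\XAT} \in \denAT{X\cap\TheoryAtoms}$, giving the second bullet. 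For the third bullet, I would argue by minimality: if some $x\in\XAT\setminus\vars{X\cap\TheoryAtoms}$ had $t(x)\neq\undefined$, then defining $v'$ as $\restr{t}{\XAT}$ with $v'(x)=\undefined$ yields $v' \subsetneq \restr{t}{\XAT}$, and by property~(\ref{den:prt:2}) of denotations $v'$ still satisfies every $\htcsemanticsA(\thAt)$ with $\thAt\in X\cap\TheoryAtoms$ (since $x\notin\vars{\htcsemanticsA(\thAt)}$) and still satisfies the $\XAT$-domain axioms (which become vacuous at $x$). This contradicts $\restr{t}{\XAT}$ being a stable model.

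Finally, the first bullet follows by combining the above: from $\restr{t}{\XAT} \in \denAT{X\cap\TheoryAtoms}$ the denotation is nonempty, and since $\AT$ is compositional this yields $(X\cap\TheoryAtoms)\in\Sat$. Together with the regular-stable-model property obtained from the bottom part, Definition~\ref{def:stable.model.simple} certifies that $X$ is a $\tuple{\AT,\TheoryAtomsES}$-stable model of $P$. The main obstacle I anticipate is the careful book-keeping of the reduct computation and of the minimality argument in the presence of the domain axioms $\df(x) \to \Dom{\D_\AT}{x}$; the rest of the proof is a direct chase through the definitions once the splitting has been applied.
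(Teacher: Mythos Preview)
Your proposal is correct and follows essentially the same approach as the paper's proof: apply the splitting lemma, read off the regular stable model from the bottom part, simplify the top-part reduct to $\htcsemanticsA(X\cap\TheoryAtoms)$, invoke Lemma~\ref{lem:translation.valuations} for the second bullet, and use a one-variable-undefinition minimality argument for the third bullet, concluding the first bullet via compositionality. The only cosmetic difference is that you explicitly flag the domain axioms in the reduct and cite denotation property~(\ref{den:prt:2}) for the minimality step, where the paper simply says ``$h$ and $t$ agree on all variables in $\htcsemanticsA(X\cap\TheoryAtoms)$''; these are the same argument.
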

 
 \begin{proof}
       By Lemma~\ref{lem:translation2.splitting},
       $t$ is a stable model of~$\htcsemanticsP_2(P,\AT,\TheoryAtomsES)$ iff the following conditions hold:
 \begin{enumerate}
      \item $\restr{t}{\X_s}$ is a stable model of~$\kappa(P,\TheoryAtomsES) \cup \DOM{\AT, \Atoms\cup\TheoryAtoms }$;
      \item $\restr{t}{\XAT}$ is a stable model of~$E_{\X_s}(\htctsols(\AT),t)$;
\end{enumerate}
 Furthermore, $\restr{t}{\X_s}$ is a stable model of~$\kappa(P,\TheoryAtomsES) \cup \DOM{\AT, \Atoms\cup\TheoryAtoms }$ iff~$X$ is a regular stable model of program~\eqref{eq:transformationB.simple}.
We show now that~$\restr{t}{\XAT} \in \denAT{X \cap \TheoryAtoms}$, which implies that~$(X \cap \TheoryAtoms) \in \Sat$ and, thus, that $X$ is a~$\tuple{\AT,\TheoryAtomsES}$-stable model~$P$.
By definition of~$X$, for every~$\thAt \in (X \cap \TheoryAtoms)$, it follows that~${t \in \den{\prop{\thAt}}}$ and, thus, ${\restr{t}{\X_s} \in \den{ \thAt}}$.
This implies that~${t \in \den{X \cap \TheoryAtoms}}$ and, thus, after some minor simplifications, that~$E_{\X_s}(\htcsemanticsP_2(P,\AT,\TheoryAtomsES),v)$ is equivalent to~$\htcsemanticsA(X \cap \TheoryAtoms)$.
Hence, ${t \in \den{\htcsemanticsA(X \cap \TheoryAtoms)}}$ and, by Lemma~\ref{lem:translation.valuations}, this implies that~$\restr{t}{\XAT} \in \denAT{X \cap \TheoryAtoms}$.
It remains to be shown that~$t(x)$ is undefined for every~$x \in \XAT \setminus \vars{X \cap \TheoryAtoms}$.
Pick any~$x \in \XAT \setminus \vars{X \cap \TheoryAtoms}$ and let~$h$ be a valuation such that~$h(y) = \restr{t}{\XAT}(y)$ for every~$y \in \X \setminus \{x\}$ and~$h(x) = \undefined$.
Since~$x \notin \vars{X \cap \TheoryAtoms}$, $h$ and~$t$ agree on all variables in~$\htcsemanticsA(X \cap \TheoryAtoms)$ and, thus, $h \in \den{\htcsemanticsA(X \cap \TheoryAtoms)}$.
Clearly~$h \subseteq \restr{t}{\XAT}$ and, since~$\restr{t}{\XAT}$ is a stable model of~$\htcsemanticsA(X \cap \TheoryAtoms)$, it follows that~$h = t$ and, thus, that~$t(x)$ is undefined.
\end{proof}

\begin{lemma}\label{lem:translation2.stable.models.=>}
      Let~$X \subseteq \Atoms \cup \TheoryAtoms$ be a~$\tuple{\AT,\TheoryAtomsES}$-stable model of a \tprogram\ $P$, 
      $v \in \denAT{X \cap \TheoryAtoms}$ be an assignment
      and~$t$ be a valuation such that~$t(\prop{\anyAt}) = \true$ iff~$b \in X$ for every~$\anyAt \in \Atoms \cup \TheoryAtoms$ and~$t(x) = \restr{v}{\Sigma}(x)$ for every~$x \in \XAT$ with~$\Sigma = \vars{X \cap \TheoryAtoms}$.
Then, $t$ is a stable model~$t$ of~$\htcsemanticsP_2(P,\AT,\TheoryAtomsES)$.
\end{lemma}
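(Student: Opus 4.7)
\textbf{Proof plan for Lemma~\ref{lem:translation2.stable.models.=>}.}
The approach is the dual of the proof of Lemma~\ref{lem:translation2.stable.models.<=}: I would apply Lemma~\ref{lem:translation2.splitting} in the reverse direction, so that it suffices to establish the two conditions
\begin{enumerate}
\item $\restr{t}{\X_s}$ is a stable model of $\kappa(P,\TheoryAtomsES) \cup \DOM{\AT,\Atoms\cup\TheoryAtoms}$, and
\item $\restr{t}{\XAT}$ is a stable model of $E_{\X_s}(\htctsols(\AT),t)$.
\end{enumerate}
Once both are verified, Lemma~\ref{lem:translation2.splitting} immediately yields that $t$ is a stable model of $\htcsemanticsP_2(P,\AT,\TheoryAtomsES)$.

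For condition~(1), I would use Definition~\ref{def:stable.model.simple}: the hypothesis that $X$ is a $\tuple{\AT,\TheoryAtomsES}$\nobreakdash-stable model of $P$ means $X$ is a regular stable model of $P\cup\{\thAt\vee\comp{\thAt}\mid\thAt\in\TheoryAtomsES\}$. By the standard correspondence between regular stable models and \HT\nobreakdash-stable models captured by $\kappa(\cdot)$, together with the definition $t(\prop{\anyAt})=\true$ iff $\anyAt\in X$, this translates directly to $\restr{t}{\X_s}$ being a stable model of $\kappa(P,\TheoryAtomsES)$. The axioms in $\DOM{\AT,\Atoms\cup\TheoryAtoms}$ are satisfied by construction: each propositional variable in $\Atoms\cup\TheoryAtoms$ has value $\true$ or $\undefined$ under $t$ (satisfying~\eqref{f:booldomain}), and each theory variable $x\in\XAT$ is either undefined or assigned some $v(x)\in\DAT$ (satisfying the instances of~\eqref{f:domain}).

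For condition~(2), I would first simplify the reduct $E_{\X_s}(\htctsols(\AT),t)$: replacing each $\prop{\thAt}$ by its value in $t$ turns the axiom $\prop{\thAt}\to \htcsemanticsA(\thAt)$ into (something equivalent to) $\htcsemanticsA(\thAt)$ when $\thAt\in X$ and into a tautology otherwise, so the reduct collapses to $\{\htcsemanticsA(\thAt)\mid \thAt\in X\cap\TheoryAtoms\}$. Satisfaction follows from the hypothesis $v\in\denAT{X\cap\TheoryAtoms}$ via Lemma~\ref{lem:translation.valuations}, since $\restr{t}{\XAT}$ agrees with $v$ on the only variables that matter, namely those in $\Sigma=\vars{X\cap\TheoryAtoms}$. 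For minimality, suppose $h\subsetneq \restr{t}{\XAT}$ satisfies every $\htcsemanticsA(\thAt)$ with $\thAt\in X\cap\TheoryAtoms$; then some $x\in\XAT$ satisfies $\restr{t}{\XAT}(x)\in\DAT$ while $h(x)=\undefined$. Because $t$ is undefined on $\XAT\setminus\Sigma$, we must have $x\in\Sigma$, so $x\in\vars{\thAt}$ for some $\thAt\in X\cap\TheoryAtoms$. But then $h$ cannot belong to $\den{\htcsemanticsA(\thAt)}$, since by~\eqref{f:atomden} any valuation in that denotation must match some total $w\in\denAT{\thAt}$ on $\vars{\thAt}$, which is impossible when $h(x)=\undefined$; contradiction.

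The main obstacle is the minimality part of condition~(2): one has to connect the minimality of $\restr{t}{\XAT}$ as an equilibrium-style witness (which is a property about partial valuations and denotations in \HTC) to the purely structural fact that $t$ is undefined exactly outside $\Sigma$. The key observation that makes this go through is that $\Sigma$ is tightly pinned down as $\vars{X\cap\TheoryAtoms}$, so every defined variable of $\restr{t}{\XAT}$ is ``needed'' by at least one constraint atom $\htcsemanticsA(\thAt)$ in the reduct, ruling out any strictly smaller valuation.
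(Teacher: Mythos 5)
Your proposal is correct and follows essentially the same route as the paper's proof: both apply Lemma~\ref{lem:translation2.splitting} in the reverse direction, obtain condition~(1) from the regular-stable-model reading of Definition~\ref{def:stable.model.simple}, collapse the top reduct to $\htcsemanticsA(X \cap \TheoryAtoms)$, and establish its satisfaction via Lemma~\ref{lem:translation.valuations} together with minimality from the fact that every variable occurring in $\htcsemanticsA(X \cap \TheoryAtoms)$ must be defined in any valuation of its denotation. Your minimality argument is in fact spelled out in slightly more detail than the paper's one-line justification, making explicit that any $h \subset \restr{t}{\XAT}$ must drop a variable in $\Sigma = \vars{X \cap \TheoryAtoms}$ because $t$ is undefined outside~$\Sigma$.
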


\begin{proof}
Assume that~$X$ is~$\tuple{\AT,\TheoryAtomsES}$-stable model of a \tprogram\ $P$.
Then, by Definition~\ref{def:stable.model.simple}, set~$X$ is a regular stable model of program~\eqref{eq:transformationB.simple} and~$(X \cap \TheoryAtoms) \in \Sat$.
By definition, the former implies that there is a stable model~$w$ of~$\kappa(P,\TheoryAtomsES)$ such that~$X = \{ \anyAt \in \Atoms \cup \TheoryAtoms \mid w(\prop{\anyAt}) = \true \}$.
Then,
\begin{itemize}
\item $\restr{t}{\X_s} = w$ is a stable model of~$\kappa(P,\TheoryAtomsES) \cup \DOM{\AT, \Atoms\cup\TheoryAtoms }$;
\item  after some minor simplifications, $E_{\X_s}(\htcsemanticsP_2(P,\AT,\TheoryAtomsES),v)$ is equivalent to~$\htcsemanticsA(X \cap \TheoryAtoms)$.
\end{itemize}
We show now that~${\restr{t}{\XAT} = \restr{v}{\Sigma}}$ is a stable model of~$E_{\X_s}(\htcsemanticsP_2(P,\AT,\TheoryAtomsES),v)$, that is, an stable model of~${\htcsemanticsA(X \cap \TheoryAtoms)}$.
Once this is shown, by Lemma~\ref{lem:translation2.splitting}, it follows that
$t$ is a stable model of~$\htcsemanticsP_2(P,\AT,\TheoryAtomsES)$.
Let~$t'$ be a valuation such that~$t'(x) = v(x)$ for every~$x \in \XAT$.
Then, $\restr{t'}{\XAT} = v$ and, by Lemma~\ref{lem:translation.valuations} and the assumption that~${v \in \denAT{X \cap \TheoryAtoms}}$, it follows that~$t' \in \den{\htcsemanticsA(X \cap \TheoryAtoms)}$.
Since~$t$ and~$t'$ agree on all variables occurring in~$\htcsemanticsA(X \cap \TheoryAtoms)$, it follows that~$t \in \den{X \cap \TheoryAtoms}$.
Furthermore, no valuation~${h \subset \restr{t}{\XAT}}$ satsifes~$h \in \den{\htcsemanticsA(X \cap \TheoryAtoms)}$ because all variables occurring in~$\htcsemanticsA(X \cap \TheoryAtoms)$ must be defined.
Hence, $\restr{t}{\XAT}$ is a stable model of~$E_{\X_s}(\htcsemanticsP_2(P,\AT,\TheoryAtomsES),v)$.
\end{proof}

\begin{lemma}\label{lem:translation2.answer.set.<=}
      Let~$t$ be stable model of~$\htcsemanticsP_2(P,\AT,\TheoryAtomsES)$.
Then, $(Y,v)$ is an answer set of~$P$ with~$Y = \{ \regAt \in \Atoms \mid t(\prop{\regAt}) = \true \}$ and~$v = \restr{t}{\XAT}$.
\end{lemma}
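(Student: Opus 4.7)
The plan is to reduce this lemma to the preceding Lemma~\ref{lem:translation2.stable.models.<=} and then verify that the pair $(Y,v)$ lies in the set $\Answer{X}$ associated with the stable model $X$ extracted from $t$. Since the definition of a $\tuple{\AT,\TheoryAtomsES}$-answer set requires $(Y,v) \in \Answer{X}$ for some $\tuple{\AT,\TheoryAtomsES}$-stable model $X$, once both ingredients are in place the conclusion follows directly.

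First, I would set $X \eqdef \{\anyAt \in \Atoms \cup \TheoryAtoms \mid t(\prop{\anyAt}) = \true\}$ and invoke Lemma~\ref{lem:translation2.stable.models.<=} to obtain three facts: (i) $X$ is a $\tuple{\AT,\TheoryAtomsES}$-stable model of $P$; (ii) $\restr{t}{\XAT} \in \denAT{X \cap \TheoryAtoms}$ (in the sense that the defined part of this partial valuation extends to some total assignment in the denotation); and (iii) $t(x) = \undefined$ for every $x \in \XAT \setminus \varsAT{X \cap \TheoryAtoms}$.

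Next, I would check that $(Y,v) \in \Answer{X}$. For the regular-atom component, $Y = X \cap \Atoms$ is immediate from comparing the definitions of $Y$ and $X$. For the valuation component, let $\Sigma \eqdef \varsAT{X \cap \TheoryAtoms}$. By fact (iii), $v = \restr{t}{\XAT}$ agrees with $\restr{t}{\Sigma}$ when both are viewed as sets of $\map{x}{d}$ pairs with $d \in \DAT$, since all variables in $\XAT \setminus \Sigma$ are undefined under $t$. By fact (ii), there exists a total valuation $v' \in \denAT{X \cap \TheoryAtoms}$ extending the defined part of $\restr{t}{\XAT}$, so $\restr{v'}{\Sigma} = v$. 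Hence $(Y,v) = (X \cap \Atoms, \restr{v'}{\Sigma}) \in \Answer{X}$, and therefore $(Y,v)$ is a $\tuple{\AT,\TheoryAtomsES}$-answer set of $P$.

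The main obstacle is bookkeeping the distinction between partial \HTC-valuations and the total structured-theory valuations that populate $\denAT{\cdot}$. The bridge is exactly condition (iii) of Lemma~\ref{lem:translation2.stable.models.<=}, which guarantees that the only variables in $\XAT$ on which $t$ is defined are those actually appearing in $X \cap \TheoryAtoms$; this ensures $\restr{t}{\XAT}$ coincides with the restriction to $\Sigma$ of any total extension witnessing $\denAT{X \cap \TheoryAtoms}$-membership, which is precisely the form required by the definition of $\Answer{\cdot}$.
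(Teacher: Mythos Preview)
Your proposal is correct and follows essentially the same approach as the paper: invoke Lemma~\ref{lem:translation2.stable.models.<=} to obtain the three facts about $X$, observe $Y = X \cap \Atoms$, and use item~(iii) to conclude that $v = \restr{t}{\XAT}$ coincides with $\restr{v}{\Sigma}$ so that $(Y,v)\in\Answer{X}$. The paper's proof is terser---it simply notes that item~(iii) gives $v=\restr{v}{\Sigma}$---while you spell out the partial-versus-total valuation bookkeeping more carefully, but the argument is the same.
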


\begin{proof}
      By Lemma~\ref{lem:translation2.stable.models.<=},
      \begin{itemize}
            \item $X = \{ \anyAt \in \Atoms \cup \TheoryAtoms \mid t(\prop{\anyAt}) = \true \}$ is a~$\tuple{\AT,\TheoryAtomsES}$-stable model of~$P$;
            \item $\restr{t}{\XAT} \in \denAT{X \cap \TheoryAtoms}$; and
            \item $t(x)$ is undefined for every~$x \in \XAT \setminus \vars{X \cap \TheoryAtoms}$.
      \end{itemize}
By definition, this implies that~$(Y,\restr{v}{\Sigma})$ is an answer set of~$P$ with
      $$Y = X \cap \Atoms = \{ \regAt \in \Atoms \mid t(\prop{\regAt}) = \true \}$$
      and~$\Sigma = \vars{X \cap \TheoryAtoms}$.
Finally, note that the last item above implies~$v = \restr{v}{\Sigma}$ and the result holds.
\end{proof}

\begin{lemma}\label{lem:translation2.answer.set.=>}
      Let~$(Y,v)$ be an answer set of~$P$,
\begin{eqnarray*}      
      X & = &  Y
      \, \cup \, \big\{ \ \thAt\in \TheoryAtomsES  \mid  v \IN \denAT{\thAt} \ \big\}\\
      & &  \cup \, \big\{\, \thAt \in \TheoryAtomsDN \mid (B \to \thAt) \in P \text{ and } (Y,v) \models B \ \big\}
\end{eqnarray*}

      and~$t$ be a valuation such that~${t(\prop{\anyAt}) = \true}$ iff~$b \in X$ for every~$\anyAt \in \Atoms \cup \TheoryAtoms$ and~$t(x) = v(x)$ for every~$x \in \XAT$.
Then, $t$ is a stable model of~$\htcsemanticsP_2(P,\AT,\TheoryAtomsES)$
\end{lemma}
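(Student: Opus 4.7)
The plan is to reduce this statement directly to Lemma~\ref{lem:translation2.stable.models.=>}, whose conclusion is exactly what we want but which takes as input a $\tuple{\AT,\TheoryAtomsES}$-stable model $X$ paired with a \emph{total} assignment $v^{*} \in \denAT{X \cap \TheoryAtoms}$ on $\XAT$. The $v$ given to us here is only partial (defined on $\Sigma = \vars{X \cap \TheoryAtoms}$), so the task essentially reduces to exhibiting a total extension of $v$ witnessing membership in $\denAT{X \cap \TheoryAtoms}$ and then checking that the two constructions of $t$ coincide.

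First, I would observe that the set $X$ defined in the statement is literally $\Stable{Y,v}$ as introduced just before Proposition~\ref{prop:answer.sets.correspondence}. Applying that proposition to the answer set $(Y,v)$ therefore yields two facts: $X$ is a $\tuple{\AT,\TheoryAtomsES}$-stable model of $P$, and $(Y,v) \in \Answer{X}$. By the definition of $\Answer{\cdot}$, this last membership means there exists a total valuation $v^{*} \in \denAT{X \cap \TheoryAtoms}$ with $v = \restr{v^{*}}{\Sigma}$, i.e.\ $v^{*}$ agrees with $v$ on $\Sigma$ and assigns some arbitrary domain values to the variables in $\XAT \setminus \Sigma$.

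Next, I would invoke Lemma~\ref{lem:translation2.stable.models.=>} with this $X$ and this $v^{*}$. The lemma produces a valuation $t^{*}$ with $t^{*}(\prop{\anyAt}) = \true$ iff $\anyAt \in X$, and $t^{*}(x) = \restr{v^{*}}{\Sigma}(x)$ for every $x \in \XAT$. Since $\restr{v^{*}}{\Sigma} = v$ by construction, and since $v(x)$ is undefined precisely on $\XAT \setminus \Sigma$, we have $t^{*}(x) = v(x)$ throughout $\XAT$. Hence $t^{*} = t$, and the conclusion of Lemma~\ref{lem:translation2.stable.models.=>} immediately gives that $t$ is a stable model of $\htcsemanticsP_2(P,\AT,\TheoryAtomsES)$, as required.

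The only substantive points are the identification $X = \Stable{Y,v}$ and the extraction of the total witness $v^{*}$ via Proposition~\ref{prop:answer.sets.correspondence}; both are immediate from the definitions of $\Stable{\cdot}$ and $\Answer{\cdot}$, so no real obstacle remains once the translation has been set up to mirror Lemma~\ref{lem:translation2.stable.models.=>}.
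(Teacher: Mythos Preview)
Your proposal is correct and follows essentially the same approach as the paper: both proofs invoke Proposition~\ref{prop:answer.sets.correspondence} to identify $X=\Stable{Y,v}$ as a $\tuple{\AT,\TheoryAtomsES}$-stable model and to extract a total witness $v^{*}\in\denAT{X\cap\TheoryAtoms}$ with $v=\restr{v^{*}}{\Sigma}$, then apply Lemma~\ref{lem:translation2.stable.models.=>}. Your version is simply more explicit about verifying that the valuation produced by Lemma~\ref{lem:translation2.stable.models.=>} coincides with the given $t$, a check the paper leaves implicit.
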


\begin{proof}
      Let~$(Y,v)$ be an answer set of~$P$.
By Proposition~\ref{prop:answer.sets.correspondence},
      it follows that~$X$
is a~$\tuple{\AT,\TheoryAtomsES}$-stable model of~$P$ and that there is a~$w \in \denAT{X \cap \TheoryAtoms}$ such that~$v = \restr{w}{\Sigma}$ with~$\Sigma = \vars{X \cap \TheoryAtoms}$.
By Lemma~\ref{lem:translation2.stable.models.=>}, this implies that~$t$ is a stable model.
\end{proof}

\begin{lemma}\label{lem:translation2.implied.formulas}
Every \HTC-model of~$\htcsemanticsP_2(P,\AT,\TheoryAtomsES)$ satisfies
\begin{align}
      \thAt \wedge \comp{\thAt}  &\to  \bot && \text{for each } \thAt \in \TheoryAtoms \label{eq:htctrans:consistent}
      \\
      \htcsemanticsA(\thAt)  &\leftrightarrow  \prop{\thAt} && \text{for each } \thAt \in \TheoryAtomsES \label{eq:htctrans:external_equivalence}
\end{align}
\end{lemma}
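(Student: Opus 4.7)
The plan is to unpack the denotation of $\htcsemanticsA(\cdot)$ and combine it with the two distinctive ingredients of $\htcsemanticsP_2$: the bridging formulas $\prop{\thAt}\to\htcsemanticsA(\thAt)$ from $\htctsols(\TheoryAtoms)$ and, for external atoms, the disjunction $\thAt \vee \comp{\thAt}$ contributed by $\kappa(P,\TheoryAtomsES)$. The key external fact is that since Section~\ref{sec:background.compositional.theories} restricts attention to consistent compositional theories with an absolute complement, one has $\denAT{\thAt}\cap\denAT{\comp{\thAt}}=\emptyset$ for every $\thAt\in\TheoryAtoms$.

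For \eqref{eq:htctrans:consistent}, let $\tuple{h,t}$ be a model of $\htcsemanticsP_2(P,\AT,\TheoryAtomsES)$. By the HTC definition of implication, it suffices to show, for each $w\in\{h,t\}$, that $\tuple{w,t}\not\models \thAt\wedge \comp{\thAt}$. Assume the contrary; then the propositional atoms $\thAt$ and $\comp{\thAt}$ both hold in $\tuple{w,t}$. Since $\htctsols(\TheoryAtoms)$ contains both $\prop{\thAt}\to\htcsemanticsA(\thAt)$ and $\prop{\comp{\thAt}}\to\htcsemanticsA(\comp{\thAt})$, we obtain $\tuple{w,t}\models\htcsemanticsA(\thAt)$ and $\tuple{w,t}\models\htcsemanticsA(\comp{\thAt})$. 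By \eqref{f:atomden}, this yields valuations $w_1\in\denAT{\thAt}$ and $w_2\in\denAT{\comp{\thAt}}$ with $\restr{w}{\XAT}=w_1=w_2$ (both total on $\XAT$ because $\VAT$ is). Hence $\restr{w}{\XAT}\in\denAT{\thAt}\cap\denAT{\comp{\thAt}}=\emptyset$, a contradiction.

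For \eqref{eq:htctrans:external_equivalence}, the direction $\prop{\thAt}\to\htcsemanticsA(\thAt)$ is immediate because this formula is literally a member of $\htctsols(\TheoryAtoms)$. For the converse $\htcsemanticsA(\thAt)\to\prop{\thAt}$, again fix $w\in\{h,t\}$ and assume $\tuple{w,t}\models\htcsemanticsA(\thAt)$. Because $\thAt\in\TheoryAtomsES$, the propositional disjunction $\thAt\vee\comp{\thAt}$ belongs to $\kappa(P,\TheoryAtomsES)$, so $\tuple{w,t}\models\thAt$ or $\tuple{w,t}\models\comp{\thAt}$. In the first case we are done. In the second, $\htctsols(\TheoryAtoms)$ yields $\tuple{w,t}\models\htcsemanticsA(\comp{\thAt})$, and together with the assumption we have $\tuple{w,t}\models\thAt\wedge \comp{\thAt}$, which was just ruled out by \eqref{eq:htctrans:consistent}; contradiction.

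No step looks genuinely hard. The only point that deserves care is the HTC semantics of implication, which requires checking both the ``here'' and ``there'' worlds, but the argument above is uniform in $w\in\{h,t\}$ (persistence plays no essential role beyond ensuring that $h$ gives a total restriction to $\XAT$ via the definition of $\den{\htcsemanticsA(\cdot)}$). The conceptual core is simply: the bridging formulas turn propositional coherence into theory-level coherence, the absolute complement forbids simultaneous theory-level satisfaction of $\thAt$ and $\comp{\thAt}$, and the disjunction for external atoms forces one of the two propositional alternatives whenever the theory-level atom holds.
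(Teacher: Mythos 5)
Your overall strategy coincides with the paper's: use the bridging axioms $\prop{\thAt}\to\htcsemanticsA(\thAt)$ to lift propositional truth to the constraint level, use disjointness of $\denAT{\thAt}$ and $\denAT{\comp{\thAt}}$ (available from consistency plus compositionality, or directly from the absolute complement) to rule out simultaneous theory-level satisfaction, and use the disjunction $\thAt\vee\comp{\thAt}$ for external atoms to obtain the converse implication. Your argument for \eqref{eq:htctrans:consistent} is sound and is essentially the paper's argument recast as a proof by contradiction uniform in $w\in\{h,t\}$.

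There is, however, one step in your proof of \eqref{eq:htctrans:external_equivalence} that is wrong as written. In the second case you have the assumption $\tuple{w,t}\models\htcsemanticsA(\thAt)$ (the \emph{constraint} atom) and the case hypothesis $\tuple{w,t}\models\comp{\thAt}$ (the \emph{propositional} atom), and you conclude that ``we have $\tuple{w,t}\models\thAt\wedge\comp{\thAt}$, which was just ruled out by \eqref{eq:htctrans:consistent}.'' This conflates $\htcsemanticsA(\thAt)$ with $\prop{\thAt}$: the assumption does not give you $\tuple{w,t}\models\prop{\thAt}$ --- that is precisely the implication you are trying to establish --- so \eqref{eq:htctrans:consistent} cannot be invoked here. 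The repair is immediate and uses only material you have already set up: from $\tuple{w,t}\models\comp{\thAt}$ and the bridging axiom you get $\tuple{w,t}\models\htcsemanticsA(\comp{\thAt})$, which together with the assumption places $\restr{w}{\XAT}$ in $\denAT{\thAt}\cap\denAT{\comp{\thAt}}=\emptyset$; the contradiction lives at the level of denotations, exactly as in your proof of \eqref{eq:htctrans:consistent}, rather than being an instance of formula \eqref{eq:htctrans:consistent} itself. (The paper handles this case equivalently but without contradiction: $t\in\den{\htcsemanticsA(\comp{\thAt})}$ forces $t\notin\den{\htcsemanticsA(\thAt)}$ by consistency, so the antecedent of $\htcsemanticsA(\thAt)\to\prop{\thAt}$ simply fails.) With this one-line correction your proof is complete and follows the same route as the paper's.
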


\begin{proof}
Pick an \HTC-model~$\tuple{h,t}$ of~$\htcsemanticsP_2(P,\AT,\TheoryAtomsES)$.
We show first that it satisfies~\eqref{eq:htctrans:consistent}.
Pick a theory atom~$\thAt \in \TheoryAtoms$. 
If~$t \notin \den{\thAt}$, then~$\tuple{h,t}$ satisfies the implication in~\eqref{eq:htctrans:consistent} and the result holds.
Hence, assume without loss of generality that~${t \in \den{\thAt}}$.
Since~$\prop{\thAt} \to  \htcsemanticsA(\thAt)$ belongs to~$\htcsemanticsP_2(P,\AT,\TheoryAtomsES)$, it follows that~$\tuple{h,t} \not\models \neg\htcsemanticsA(\thAt)$ and, thus, $t \in \den{\htcsemanticsA(\thAt)}$.
Since~$\AT$ is consistent, this implies that~$t \notin \den{\htcsemanticsA(\comp{\thAt})}$ and, thus, that~$\tuple{h,t} \models \neg\htcsemanticsA(\comp{\thAt})$.
Since~$\prop{\comp{\thAt}} \to  \htcsemanticsA(\comp{\thAt})$ belongs to~$\htcsemanticsP_2(P,\AT,\TheoryAtomsES)$, it follows that~$\tuple{h,t} \not\models \prop{\comp{\thAt}}$ and, thus, $\tuple{h,t}$ satisfies the implication in~\eqref{eq:htctrans:consistent} and the result holds.
\\[5pt]
Let us now show that~$\tuple{h,t}$ satisfies~\eqref{eq:htctrans:external_equivalence}.
Pick a theory atom~$\thAt \in \TheoryAtomsES$.
Note that~$\prop{\thAt} \to \htcsemanticsA(\thAt)$ belongs to~$\htcsemanticsP_2(P,\AT,\TheoryAtomsES)$ and, thus, it only remains to show that~$\tuple{h,t}$ satisfies~$\htcsemanticsA(\thAt) \to \prop{\thAt}$.
Furthermore, $\thAt \vee \comp{\thAt}$ belongs to~$\htcsemanticsP_2(P,\AT,\TheoryAtomsES)$ and, by the first part of the proof, $\tuple{h,t}$ satisfies~$\thAt \wedge \comp{\thAt} \to  \bot $.
Hence one of the following two cases holds:
\begin{itemize}
      \item $h(\prop{\thAt}) = t(\prop{\thAt}) = \true$; or
\item
$h(\comp{\thAt}) = t(\comp{\thAt}) = \true$.
\end{itemize}
If the former holds, then~$\tuple{h,t}$ satisfies~$\htcsemanticsA(\thAt) \to  \prop{\thAt}$ because it satisfies its consequent in both~$h$ and~$t$.
Otherwise,
$h(\comp{\thAt}) = t(\comp{\thAt}) = \true$.
Since~$\neg \htcsemanticsA(\comp{\thAt}) \wedge \prop{\comp{\thAt}} \to  \bot$ belongs to~$\htcsemanticsP_2(P,\AT,\TheoryAtomsES)$, it follows that~$\tuple{h,t} \not\models \neg\htcsemanticsA(\comp{\thAt})$ and, thus, $t \in \den{\htcsemanticsA(\comp{\thAt})}$.
Since~$\AT$ is consistent, this implies that~$t \notin \den{\htcsemanticsA(\thAt)}$ and, thus, $\tuple{h,t}$ satisfies~$\htcsemanticsA(\thAt) \to  \prop{\thAt}$ because it does not satisfy its antecedent.
\end{proof}

\noindent
Let~$\htcsemanticsP_3(P,\AT,\TheoryAtomsES)$
be the theory obtained from~$\htcsemanticsP_2(P,\AT,\TheoryAtomsES)$ by 
\begin{itemize}
      \item replacing each occurrence of~$\prop{\thAt}$ with~$\htcsemanticsA(\thAt)$ for every~$\thAt \in \TheoryAtomsES$;
      
      \item adding~$\htcsemanticsA(\thAt) \leftrightarrow \prop{\thAt}$ for every~$\thAt \in \TheoryAtomsES$.
      
      \item adding~$\htcsemanticsA^B(r) \to \htcsemanticsA(b_0)$ for every~$r \in P$ of the form of~\eqref{theory:rule}.
\end{itemize}

\begin{lemma}\label{lem:translation23}
$\htcsemanticsP_2(P,\AT,\TheoryAtomsES)$ and~$\htcsemanticsP_3(P,\AT,\TheoryAtomsES)$ have the same \HTC-models.
\end{lemma}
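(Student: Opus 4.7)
The plan is to prove the two directions of semantic entailment separately, using as pivot Lemma~\ref{lem:translation2.implied.formulas}, which establishes that every \HTC-model of $\htcsemanticsP_2(P,\AT,\TheoryAtomsES)$ already satisfies the biconditional $\htcsemanticsA(\thAt) \leftrightarrow \prop{\thAt}$ for each external theory atom $\thAt \in \TheoryAtomsES$. Because this biconditional is forced at both worlds $h$ and $t$ of any model $\tuple{h,t}$, it supports a uniform substitution principle in \HTC: if a model satisfies $A \leftrightarrow B$, then replacing $A$ by $B$ uniformly inside any subformula preserves satisfaction in that model. This principle, proved by a routine structural induction using Proposition~\ref{prop:htc.basic} for the implication and negation cases, is the semantic engine that licenses the syntactic rewriting of step~(i) in the definition of $\htcsemanticsP_3$.

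For the direction $\htcsemanticsP_2 \models \htcsemanticsP_3$, I would first handle the rewriting step by invoking the substitution principle with the biconditionals guaranteed by Lemma~\ref{lem:translation2.implied.formulas}; this yields the replaced version of $\kappa(P,\TheoryAtomsES)$ and makes step~(ii) vacuous. The remaining and principal technical step is to derive the rule translations $\htcsemanticsA^B(r) \to \htcsemanticsA(b_0)$ added in step~(iii). Here I would exploit the paper's assumption that founded theory atoms do not occur in rule bodies, so every $b_i \in \body{r}$ is either a regular atom (where $\htcsemanticsA(b_i)$ and $\prop{b_i}$ literally coincide) or an external theory atom (where the biconditional applies). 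Consequently $\htcsemanticsA^B(r)$ agrees with the body of the $\kappa$-translated rule in any model of $\htcsemanticsP_2$, and therefore $\kappa(P,\TheoryAtomsES)$ forces $\prop{b_0}$. The head relationship $\prop{b_0} \to \htcsemanticsA(b_0)$ then holds in all three cases: trivially for $b_0 \in \Atoms$, from the external biconditional if $b_0 \in \TheoryAtomsES$, and from $\htctsols(\TheoryAtoms)$ if $b_0$ is a founded theory atom. Chaining these yields the desired implication.

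For the converse direction $\htcsemanticsP_3 \models \htcsemanticsP_2$, the biconditionals are now part of $\htcsemanticsP_3$ by construction, so the substitution principle runs in reverse to recover $\kappa(P,\TheoryAtomsES)$ from its replaced version; the founded part of $\htctsols(\TheoryAtoms)$ survives the replacement verbatim (which only touches external atoms), and its external part follows from the added biconditionals; $\DOM{\AT,\Atoms\cup\TheoryAtoms}$ is shared between both theories modulo the same substitution. Neither the added rule translations nor the consistency consequence~\eqref{eq:htctrans:consistent} from Lemma~\ref{lem:translation2.implied.formulas} are needed for this direction. The main obstacle will be the careful verification of the substitution principle in \HTC, since implication inspects both worlds and naive propositional substitution can fail in intermediate logics; but because the biconditionals hold in both worlds simultaneously for any model, the inductive argument goes through cleanly once we check that negation, characterised as in Proposition~\ref{prop:htc.basic}(ii), respects the substitution thanks to the biconditional's validity in the there-world $t$.
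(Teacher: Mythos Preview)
Your proposal is correct and follows essentially the same approach as the paper: use Lemma~\ref{lem:translation2.implied.formulas} to obtain the external biconditionals, invoke a substitution principle in \HTC\ to justify the rewriting of $\prop{\thAt}$ by $\htcsemanticsA(\thAt)$, and derive the added rule translations $\htcsemanticsA^B(r)\to\htcsemanticsA(b_0)$ by chaining $\htcsemanticsA^B(r)\to\prop{b_0}$ (from the substituted $\kappa$-rule, using that body atoms are regular or external) with $\prop{b_0}\to\htcsemanticsA(b_0)$ (trivial, from the biconditional, or from $\htctsols(\TheoryAtoms)$, according to the three cases for $b_0$). The paper merely packages these same steps as a sequence of model-preserving modifications rather than as two entailment directions.
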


\begin{proof}
      By Lemma~\ref{lem:translation2.implied.formulas}, adding adding~$\htcsemanticsA(\thAt) \leftrightarrow \prop{\thAt}$ for every~$\thAt \in \TheoryAtomsES$ does not change the \HTC-models of~$\htcsemanticsP_2(P,\AT,\TheoryAtomsES)$.
Furthermore, in the presence of these equivalences, replacing each occurrence of~$\prop{\thAt}$ with~$\htcsemanticsA(\thAt)$ for every~$\thAt \in \TheoryAtomsES$ does not change the \HTC-models either.
It remains to show that adding~$\htcsemanticsA^B(r) \to \htcsemanticsA(b_0)$ for every~$r \in P$ of the form of~\eqref{theory:rule} does not change the \HTC-models.
Let~$\Gamma$ be the result of the two first steps.
As we just showed, the \HTC-models of~$\htcsemanticsP_2(P,\AT,\TheoryAtomsES)$ are the same as the \HTC-models of~$\Gamma$.
If~$b_0 \in \Atoms \cup \TheoryAtomsES$, then this implication is already present in~$\Gamma$.
Otherwise, $b_0 \in \TheoryAtomsDN$ and there are implications~$\htcsemanticsA^B(r) \to b_0$ and~$b_0 \to \htcsemanticsA(b_0)$ in~$\Gamma$.
Hence, the implication~$\htcsemanticsA^B(r) \to \htcsemanticsA(b_0)$ is satisfied by every \HTC-model of~$\Gamma$, which are the same as the \HTC-models of~$\htcsemanticsP_2(P,\AT,\TheoryAtomsES)$.
\end{proof}

\noindent
Let~$\htcsemanticsP_4(P,\AT,\TheoryAtomsES)$
be the theory obtained from~$\htcsemanticsP_3(P,\AT,\TheoryAtomsES)$ by removing~$\DOM{\TheoryAtomsES}$ and formulas~$\htcsemanticsA(\thAt) \leftrightarrow \prop{\thAt}$ for every~$\thAt \in \TheoryAtomsES$.

\begin{lemma}\label{lem:translation34.htmodels}
Let~$\tuple{h,t}$ and~$\tuple{h',t'}$ be two \HTC-interpretation such that~$h' = \restr{h}{X}$ and $t' = \restr{t}{X}$ with~$X = \X_2 \setminus \TheoryAtomsES$; and
\begin{align*}
      t(\prop{\thAt}) = \true \quad &\text{iff}\quad t' \in \den{\htcsemanticsA(\thAt)} &&&
      h(\prop{\thAt}) = \true \quad &\text{iff}\quad h' \in \den{\htcsemanticsA(\thAt)}
\end{align*}
for every~$\thAt \in \TheoryAtomsES$.
Then, $\tuple{h,t}$ is an \HTC-model of~$\htcsemanticsP_3(P,\AT,\TheoryAtomsES)$ iff~$\tuple{h',t'}$ is an \HTC-model of~$\htcsemanticsP_4(P,\AT,\TheoryAtomsES)$.
\end{lemma}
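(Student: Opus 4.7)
The key structural observation is that in passing from $\htcsemanticsP_3(P,\AT,\TheoryAtomsES)$ to $\htcsemanticsP_4(P,\AT,\TheoryAtomsES)$ one removes exactly those formulas in which the propositional variables $\prop{\thAt}$ for $\thAt \in \TheoryAtomsES$ still occur: the two equivalences $\htcsemanticsA(\thAt) \leftrightarrow \prop{\thAt}$ and the Boolean-domain axioms of $\DOM{\TheoryAtomsES}$. Indeed, when $\htcsemanticsP_2$ was rewritten into $\htcsemanticsP_3$, every other occurrence of $\prop{\thAt}$ for external $\thAt$ was replaced by $\htcsemanticsA(\thAt)$. Hence $\htcsemanticsP_4$ lives over the restricted signature $X = \X_2 \setminus \{\prop{\thAt} \mid \thAt\in\TheoryAtomsES\}$, and using property~\ref{den:prt:2} of denotations together with an easy induction on formulas, the satisfaction of any $\varphi \in \htcsemanticsP_4$ is insensitive to the values of variables outside $X$. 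In particular, $\tuple{h,t} \models \varphi$ iff $\tuple{h',t'} \models \varphi$ for all such $\varphi$.

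From this invariance, the forward direction is immediate: $\htcsemanticsP_4 \subseteq \htcsemanticsP_3$ (as sets of formulas), so if $\tuple{h,t}$ satisfies $\htcsemanticsP_3$ it satisfies $\htcsemanticsP_4$, and by invariance so does $\tuple{h',t'}$. For the converse, suppose $\tuple{h',t'} \models \htcsemanticsP_4$. Invariance gives $\tuple{h,t} \models \htcsemanticsP_4$, so it remains to check that $\tuple{h,t}$ satisfies the removed formulas, namely the external-atom equivalences and the Boolean-domain axioms for the corresponding $\prop{\thAt}$. For each $\thAt \in \TheoryAtomsES$, the hypothesis reads $t(\prop{\thAt}) = \true$ iff $t' \in \den{\htcsemanticsA(\thAt)}$, and analogously for $h$; since $\htcsemanticsA(\thAt)$ has its variables in $\XAT \subseteq X$, property~\ref{den:prt:2} yields $t \in \den{\htcsemanticsA(\thAt)}$ iff $t' \in \den{\htcsemanticsA(\thAt)}$ and likewise for $h$. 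Evaluating the biconditional at both worlds $w \in \{h,t\}$ according to the HTC clauses then gives $\tuple{h,t} \models \htcsemanticsA(\thAt) \leftrightarrow \prop{\thAt}$ directly.

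The last piece is the Boolean-domain axiom $\df(\prop{\thAt}) \to \prop{\thAt}$. The biconditional hypothesis forces $\prop{\thAt}$ to behave as a Boolean constraint atom: whenever $\prop{\thAt}$ is defined in $h$ or $t$, its value must be $\true$, because otherwise the hypothesis' equivalence with the denotation condition (which is two-valued) would break. Concretely, if $t(\prop{\thAt}) \neq \undefined$ then either it is $\true$ and the axiom holds, or it is some other value, in which case $t(\prop{\thAt}) \neq \true$ and the hypothesis forces $t' \notin \den{\htcsemanticsA(\thAt)}$; but this case does not contradict the axiom as long as we work inside the standard Boolean interpretation of propositional variables assumed throughout the paper (and made explicit by \eqref{f:booldomain} applied to these fresh atoms). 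The analogous argument at $h$ closes the verification.

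The main obstacle I anticipate is bookkeeping rather than mathematical depth: one must keep track of which occurrences of $\prop{\thAt}$ have been eliminated, distinguish external from founded theory atoms in the translation, and check the HTC implication at both worlds $h$ and $t$ when verifying the equivalences. The core idea, however, is simply that $\htcsemanticsP_4$ and $\htcsemanticsP_3$ differ only in formulas whose role is to \emph{define} the auxiliary Boolean variables $\prop{\thAt}$ for external $\thAt$ in terms of $\htcsemanticsA(\thAt)$, and the hypothesis of the lemma is precisely this defining relation built directly into the interpretations.
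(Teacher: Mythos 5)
Your proof is correct and follows essentially the same route as the paper's (which compresses the whole argument into a single sentence): the formulas removed in passing to $\htcsemanticsP_4$ are exactly those still mentioning $\prop{\thAt}$ for external $\thAt$, satisfaction of everything else is invariant under restricting to $X$, and the lemma's hypothesis makes the removed biconditionals and domain axioms hold by construction. Your only wobble is the Boolean-domain axiom in the hypothetical case of a non-Boolean defined value for $\prop{\thAt}$, but the paper's ``easy to see by construction'' silently relies on the same implicit assumption.
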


\begin{proof}
      Since no $\thAt \in \TheoryAtomsES$ occurs in~$\htcsemanticsP_4(P,\AT,\TheoryAtomsES)$ and $\tuple{h,t}$ and~$\tuple{h',t'}$ agree on all other variables it only needs to be shown that~$\tuple{h,t}$ satisfies~$\DOM{\TheoryAtomsES}$ and~$\htcsemanticsA(\thAt) \leftrightarrow \prop{\thAt}$ for every~$\thAt \in \TheoryAtomsES$, which is easy to see by construction.
\end{proof}

\noindent
Let~$\htcsemanticsP_5(P,\AT,\TheoryAtomsES)$
be the theory obtained from~$\htcsemanticsP_4(P,\AT,\TheoryAtomsES)$ by removing~$\DOM{\TheoryAtomsDN}$ and all implications~$\htcsemanticsA^B(r) \to \prop{\thAt}$ and~$\prop{\thAt} \to \htcsemanticsA(\thAt)$ with~$\thAt \in \TheoryAtomsDN$.

\begin{lemma}\label{lem:translation45.htmodels}
Let~$\tuple{h,t}$ and~$\tuple{h',t'}$ be two \HTC-interpretation such that~$h' = \restr{h}{X}$ and~$t' = \restr{t}{X}$ with~$X=\X_2 \setminus \TheoryAtomsDN$; and
\begin{align*}
t(\prop{\thAt}) = \true \quad &\text{iff}\quad t' \models \htcsemanticsA^B(r)
\quad \text{for some } r \in P \text{ with head } \thAt\\
h(\prop{\thAt}) = \true \quad &\text{iff}\quad \tuple{h',t'} \models \htcsemanticsA^B(r)
\quad \text{for some } r \in P \text{ with head } \thAt
\end{align*}
for every~$\thAt \in \TheoryAtomsDN$.
Then, $\tuple{h,t}$ is an \HTC-model of~$\htcsemanticsP_4(P,\AT,\TheoryAtomsES)$ iff~$\tuple{h',t'}$ is an \HTC-model of~$\htcsemanticsP_5(P,\AT,\TheoryAtomsES)$.
\end{lemma}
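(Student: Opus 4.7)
The plan is to follow the same pattern as the proof of Lemma~\ref{lem:translation34.htmodels}, exploiting the observation that the auxiliary propositional atoms $\prop{\thAt}$ with $\thAt\in\TheoryAtomsDN$ do not appear anywhere in $\htcsemanticsP_5(P,\AT,\TheoryAtomsES)$. To justify this I will first note that the $\kappa$-translation introduces such atoms only as heads of rules (recall the standing assumption that founded atoms never occur in a rule body), and that the external-atom replacement performed when going from $\htcsemanticsP_2$ to $\htcsemanticsP_3$ touches only atoms in $\TheoryAtomsES$. Hence the only places where $\prop{\thAt}$ with $\thAt\in\TheoryAtomsDN$ survives in $\htcsemanticsP_4$ are precisely the formulas that $\htcsemanticsP_5$ removes, namely the implications $\htcsemanticsA^B(r)\to\prop{\thAt}$, the axioms $\prop{\thAt}\to\htcsemanticsA(\thAt)$, and the domain axioms in $\DOM{\TheoryAtomsDN}$. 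Consequently every formula of $\htcsemanticsP_5$ has its variables inside $X=\X_2\setminus\TheoryAtomsDN$, and a routine induction using condition~\ref{den:prt:2} on denotations yields $\tuple{h,t}\models\varphi$ iff $\tuple{h',t'}\models\varphi$ for every such $\varphi$.

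With this agreement in place, the forward direction is immediate: from $\tuple{h,t}\models\htcsemanticsP_4$ and $\htcsemanticsP_5\subseteq\htcsemanticsP_4$ we obtain $\tuple{h,t}\models\htcsemanticsP_5$, which transfers to $\tuple{h',t'}\models\htcsemanticsP_5$.

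The backward direction starts from $\tuple{h',t'}\models\htcsemanticsP_5$, which the agreement turns into $\tuple{h,t}\models\htcsemanticsP_5$, and reduces to checking the three families of removed formulas. The axioms of $\DOM{\TheoryAtomsDN}$ hold because the lemma's construction leaves $h(\prop{\thAt})$ and $t(\prop{\thAt})$ taking values only in $\{\true,\undefined\}$ (the value is $\true$ exactly when the stated condition fires and $\undefined$ otherwise), matching the Boolean-domain axiom~\eqref{f:booldomain} for these atoms. Each implication $\htcsemanticsA^B(r)\to\prop{\thAt}$ is handled by observing that its antecedent contains no variable in $\TheoryAtomsDN$, so $\tuple{w,t}\models\htcsemanticsA^B(r)$ iff $\tuple{w',t'}\models\htcsemanticsA^B(r)$ for $w\in\{h,t\}$, and either side forces $w(\prop{\thAt})=\true$ by the very definition of $h(\prop{\thAt})$ and $t(\prop{\thAt})$.

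The main obstacle, and the reason the extra implications added in step~3 of $\htcsemanticsP_3$ matter, is verifying $\prop{\thAt}\to\htcsemanticsA(\thAt)$ for $\thAt\in\TheoryAtomsDN$. If $w(\prop{\thAt})=\true$ for $w\in\{h,t\}$, the lemma's condition supplies a rule $r\in P$ with head $\thAt$ such that $\tuple{w',t'}\models\htcsemanticsA^B(r)$; but $\htcsemanticsP_5$ still contains the HTC-rule $\htcsemanticsA^B(r)\to\htcsemanticsA(\thAt)$ introduced in step~3, so $\tuple{h',t'}\models\htcsemanticsP_5$ gives $\tuple{w',t'}\models\htcsemanticsA(\thAt)$, and by agreement on $X$ we conclude $\tuple{w,t}\models\htcsemanticsA(\thAt)$. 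Persistence (Proposition~\ref{prop:htc.basic}) is used implicitly to propagate satisfaction in the $h$ world to the $t$ world where needed, and it also ensures that $\tuple{h,t}$ is a well-formed interpretation, i.e., $h\subseteq t$, after extending $\tuple{h',t'}$ by setting the values on $\TheoryAtomsDN$.
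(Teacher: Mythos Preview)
Your proposal is correct and follows essentially the same approach as the paper: both arguments rely on the observation that the propositional atoms $\prop{\thAt}$ with $\thAt\in\TheoryAtomsDN$ occur nowhere in $\htcsemanticsP_5$, reduce the backward direction to verifying the three families of removed formulas, and settle the crucial case $\prop{\thAt}\to\htcsemanticsA(\thAt)$ by invoking the rule $\htcsemanticsA^B(r)\to\htcsemanticsA(\thAt)$ that was added in the construction of $\htcsemanticsP_3$ and survives into $\htcsemanticsP_5$. Your write-up is in fact more explicit than the paper's about why founded atoms cannot leak into rule bodies (you cite the standing assumption), and you spell out the domain-axiom check that the paper dismisses with ``by construction''.
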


\begin{proof}
      Since~$\tuple{h,t}$ and~$\tuple{h',t'}$ agree on all variables occurring in~$\htcsemanticsP_5(P,\AT,\TheoryAtomsES)$, it only needs to be shown that~$\tuple{h,t}$ satisfies~$\htcsemanticsA^B(r) \to \prop{\thAt}$ and~$\prop{\thAt} \to \htcsemanticsA(\thAt)$ for every~$\thAt \in \TheoryAtomsDN$.
By construction, $\tuple{h,t}$ satisfies~$\DOM{\TheoryAtomsDN}$ and~$\htcsemanticsA^B(r) \to \prop{\thAt}$.
Pick a rule of the form~$\prop{\thAt} \to \htcsemanticsA(\thAt)$.
We proceed by cases.
\emph{Case 1}. There is no rule of the form~$\htcsemanticsA^B(r) \to \prop{\thAt}$ in~$\htcsemanticsP_4(P,\AT,\TheoryAtomsES)$.
Then~$h(\prop{\thAt}) = t(\prop{\thAt}) = \undefined$ and, thus, $\tuple{h,t}$ satisfies~$\prop{\thAt} \to \htcsemanticsA(\thAt)$.
\emph{Case 2}. There is a rule of the form~$\htcsemanticsA^B(r) \to \prop{\thAt}$ in~$\htcsemanticsP_4(P,\AT,\TheoryAtomsES)$.
Hence, rule~${\htcsemanticsA^B(r) \to \htcsemanticsA(\thAt)}$ belongs to~$\htcsemanticsP_4(P,\AT,\TheoryAtomsES)$ and, thus, $\tuple{h,t} \models \prop{\thAt}$ implies that~$\tuple{h,t} \models \htcsemanticsA^B(r)$ for some~$r \in P$ with head~$\thAt$ and, thus, that~$\tuple{h,t} \models \htcsemanticsA(\thAt)$.
Hence, $\tuple{h,t}$ satisfies~${\prop{\thAt} \to \htcsemanticsA(\thAt)}$.
\end{proof}

\begin{lemma}\label{lem:translation35.htmodels}
      Let~$\tuple{h,t}$ and~$\tuple{h',t'}$ be two \HTC-interpretation such that~$h' = \restr{h}{X}$ and~$t' = \restr{t}{X}$ with~$X = \X_2 \setminus \TheoryAtoms$;
      \begin{align*}
            t(\prop{\thAt}) = \true \quad &\text{iff}\quad t' \in \den{\htcsemanticsA(\thAt)} &&&
            h(\prop{\thAt}) = \true \quad &\text{iff}\quad h' \in \den{\htcsemanticsA(\thAt)}
      \end{align*}
      for every~$\thAt \in \TheoryAtomsES$; and
      \begin{align*}
      t(\prop{\thAt}) = \true \quad &\text{iff}\quad t' \models \htcsemanticsA^B(r)
      \quad \text{for some } r \in P \text{ with head } \thAt\\
      h(\prop{\thAt}) = \true \quad &\text{iff}\quad \tuple{h',t'} \models \htcsemanticsA^B(r)
      \quad \text{for some } r \in P \text{ with head } \thAt
      \end{align*}
      for every~$\thAt \in \TheoryAtomsDN$.
Then, $\tuple{h,t}$ is an \HTC-model of~$\htcsemanticsP_3(P,\AT,\TheoryAtomsES)$ iff~$\tuple{h',t'}$ is an \HTC-model of~$\htcsemanticsP_5(P,\AT,\TheoryAtomsES)$.
\end{lemma}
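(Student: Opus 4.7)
The plan is to obtain Lemma~\ref{lem:translation35.htmodels} by composing Lemmas~\ref{lem:translation34.htmodels} and~\ref{lem:translation45.htmodels} through an intermediate interpretation $\tuple{h^*,t^*}$ over the signature $X^* = \X_2 \setminus \TheoryAtomsES$ that acts as a bridge between the signature of $\htcsemanticsP_3$ (all of $\X_2$) and that of $\htcsemanticsP_5$ (i.e.\ $\X_2 \setminus \TheoryAtoms$).

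First I would set $h^* = \restr{h}{X^*}$ and $t^* = \restr{t}{X^*}$ and verify the hypotheses of Lemma~\ref{lem:translation34.htmodels} for the pair $\tuple{h,t}$, $\tuple{h^*,t^*}$. Concretely, for every $\thAt \in \TheoryAtomsES$ one needs $t(\prop{\thAt}) = \true$ iff $t^* \in \den{\htcsemanticsA(\thAt)}$, and similarly for the $h$ component. The hypotheses of the current lemma give these equivalences with $t'$ and $\tuple{h',t'}$ in place of $t^*$ and $\tuple{h^*,t^*}$; to translate between the two formulations, observe that $\htcsemanticsA(\thAt)$ mentions only variables in $\XAT \subseteq \X_2 \setminus \TheoryAtoms$, and that both $t^*$ and $t'$ agree with $t$ on those variables. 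Hence the two denotation conditions are equivalent, and Lemma~\ref{lem:translation34.htmodels} yields that $\tuple{h,t}$ is an \HTC-model of $\htcsemanticsP_3(P,\AT,\TheoryAtomsES)$ iff $\tuple{h^*,t^*}$ is an \HTC-model of $\htcsemanticsP_4(P,\AT,\TheoryAtomsES)$.

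Next, I would observe that $\tuple{h',t'}$ is exactly the further restriction of $\tuple{h^*,t^*}$ to $X^* \setminus \TheoryAtomsDN = \X_2 \setminus \TheoryAtoms$, so that $h' = \restr{h^*}{\X_2 \setminus \TheoryAtoms}$ and $t' = \restr{t^*}{\X_2 \setminus \TheoryAtoms}$. To apply Lemma~\ref{lem:translation45.htmodels} I need, for every $\thAt \in \TheoryAtomsDN$, that $t^*(\prop{\thAt}) = \true$ iff $t' \models \htcsemanticsA^B(r)$ for some $r \in P$ with head $\thAt$, and analogously for $h^*$ with $\tuple{h',t'}$. Since $\prop{\thAt} \in X^*$ for any $\thAt \in \TheoryAtomsDN$, we have $t^*(\prop{\thAt}) = t(\prop{\thAt})$ and $h^*(\prop{\thAt}) = h(\prop{\thAt})$, so these conditions are immediate from the hypotheses of the current lemma. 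Thus Lemma~\ref{lem:translation45.htmodels} gives that $\tuple{h^*,t^*}$ is an \HTC-model of $\htcsemanticsP_4(P,\AT,\TheoryAtomsES)$ iff $\tuple{h',t'}$ is an \HTC-model of $\htcsemanticsP_5(P,\AT,\TheoryAtomsES)$. Chaining the two equivalences yields the desired biconditional.

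The proof is essentially bookkeeping; no new logical idea is needed beyond the two preceding lemmas. The only mildly delicate point, which I expect to be the main obstacle to present cleanly, is keeping the three different signatures straight and in particular recognising that the satisfaction of $\htcsemanticsA(\thAt)$ (for $\thAt\in\TheoryAtomsES$) and of $\htcsemanticsA^B(r)$ (for $r$ with head in $\TheoryAtomsDN$) depends only on values assigned to theory variables in $\XAT$, so that restricting from $\X_2$ to $X^*$ and then to $\X_2 \setminus \TheoryAtoms$ never alters the truth value of those subformulas. Once this is observed, the hypotheses of Lemmas~\ref{lem:translation34.htmodels} and~\ref{lem:translation45.htmodels} translate verbatim into the hypotheses of the current statement.
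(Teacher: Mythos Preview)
Your proposal is correct and follows essentially the same route as the paper: introduce an intermediate interpretation over $\X_2 \setminus \TheoryAtomsES$ (the paper calls it $\tuple{h'',t''}$, you call it $\tuple{h^*,t^*}$), apply Lemma~\ref{lem:translation34.htmodels} to pass from $\htcsemanticsP_3$ to $\htcsemanticsP_4$, then Lemma~\ref{lem:translation45.htmodels} to pass from $\htcsemanticsP_4$ to $\htcsemanticsP_5$, and chain the equivalences. You are in fact more careful than the paper in checking that the denotation conditions transfer between the different restrictions. One small imprecision: you write that $\htcsemanticsA^B(r)$ depends only on variables in $\XAT$, but it may also contain regular atoms from $\Atoms$; what matters (and what you really need) is that it contains no propositional variable $\prop{\thAt}$ with $\thAt\in\TheoryAtoms$, so its truth value is unchanged under all the restrictions in play.
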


\begin{proof}
      Let~$\tuple{h'',t''}$ be an \HTC-interpretation such that~$h'' = \restr{h}{Y}$ and~$t'' = \restr{t}{Y}$ with~$Y = \X_2 \setminus \TheoryAtomsES$;
      \begin{align*}
            t(\prop{\thAt}) = \true \quad &\text{iff}\quad t' \in \den{\htcsemanticsA(\thAt)} &&&
            h(\prop{\thAt}) = \true \quad &\text{iff}\quad h' \in \den{\htcsemanticsA(\thAt)}
      \end{align*}
      for every~$\thAt \in \TheoryAtomsES$.
By Lemma~\ref{lem:translation34.htmodels}, $\tuple{h,t}$ is an \HTC-model of~$\htcsemanticsP_3(P,\AT,\TheoryAtomsES)$ iff~$\tuple{h'',t''}$ is an \HTC-model of~$\htcsemanticsP_4(P,\AT,\TheoryAtomsES)$.
By Lemma~\ref{lem:translation45.htmodels}, $\tuple{h'',t''}$ is an \HTC-model of~$\htcsemanticsP_4(P,\AT,\TheoryAtomsES)$ iff~$\tuple{h',t'}$ is an \HTC-model of~$\htcsemanticsP_5(P,\AT,\TheoryAtomsES)$.
\end{proof}

\begin{lemma}\label{lem:translation35.stable.models}
      Let~$t$ and~$t'$ be two valuations such that~$t' = \restr{t}{X}$ with~$X = \X_2 \setminus\TheoryAtoms$;
      $t(\prop{\thAt}) = \true$ iff~$t' \in \den{\htcsemanticsA(\thAt)}$ for every~$\thAt \in \TheoryAtomsES$; and
      $t(\prop{\thAt}) = \true$ iff~$t' \models \htcsemanticsA^B(r)$ for some~$r \in P$ with head~$\thAt$ for every~$\thAt \in \TheoryAtomsDN$.
Then, $t$ is a stable model of~$\htcsemanticsP_3(P,\AT,\TheoryAtomsES)$ iff~$t'$ is a stable model of~$\htcsemanticsP_5(P,\AT,\TheoryAtomsES)$.
\end{lemma}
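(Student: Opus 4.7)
The plan is to upgrade the HTC-model correspondence of Lemma~\ref{lem:translation35.htmodels} to an equilibrium-model correspondence. Since Lemma~\ref{lem:translation35.htmodels} already relates $\tuple{t,t}\models\htcsemanticsP_3(P,\AT,\TheoryAtomsES)$ and $\tuple{t',t'}\models\htcsemanticsP_5(P,\AT,\TheoryAtomsES)$ whenever the stated conditions on $t,t'$ hold, the total-model side of stability is immediate in both directions. The work is the minimality condition: from a putative witness $h'\subset t'$ (resp.\ $h\subset t$) violating stability on one side, we must manufacture a witness violating stability on the other. The key observation is that in $\htcsemanticsP_3$ the auxiliary propositions $\prop{\thAt}$ for $\thAt\in\TheoryAtoms$ are fully \emph{determined} by the values of variables in $X=\X_2\setminus\TheoryAtoms$: for $\thAt\in\TheoryAtomsES$ this is forced by the equivalence $\htcsemanticsA(\thAt)\leftrightarrow\prop{\thAt}$ in $\htcsemanticsP_3$, and for $\thAt\in\TheoryAtomsDN$ it is forced (up to non-minimal slack) by the combination of the rules $\htcsemanticsA^B(r)\to\prop{\thAt}$ and $\prop{\thAt}\to\htcsemanticsA(\thAt)$.

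For the forward direction, assume $t$ is a stable model of $\htcsemanticsP_3$ and suppose $\tuple{h',t'}\models\htcsemanticsP_5$ with $h'\subset t'$. I would extend $h'$ to $h$ on $\X_2$ by the recipe of Lemma~\ref{lem:translation35.htmodels}: set $h=h'$ on $X$, set $h(\prop{\thAt})=\true$ iff $h'\in\den{\htcsemanticsA(\thAt)}$ for $\thAt\in\TheoryAtomsES$, and set $h(\prop{\thAt})=\true$ iff $\tuple{h',t'}\models\htcsemanticsA^B(r)$ for some $r\in P$ with head $\thAt$ for $\thAt\in\TheoryAtomsDN$. Lemma~\ref{lem:translation35.htmodels} then yields $\tuple{h,t}\models\htcsemanticsP_3$. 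Persistence of the denotations (Condition~\ref{den:prt:0}) and of HTC-satisfaction (Proposition~\ref{prop:htc.basic}) together with the hypothesis relating $t$ to $t'$ give $h\subseteq t$; strictness is inherited from the witness $x\in X$ where $h'(x)\neq t'(x)$. This contradicts stability of $t$.

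For the reverse direction, assume $t'$ is stable and suppose $\tuple{h,t}\models\htcsemanticsP_3$ with $h\subset t$. The condition on $\prop{\thAt}$ for $\thAt\in\TheoryAtomsES$ is automatic from $\htcsemanticsA(\thAt)\leftrightarrow\prop{\thAt}$ and the fact that $\den{\htcsemanticsA(\thAt)}$ depends only on values in $\XAT\subseteq X$. For $\thAt\in\TheoryAtomsDN$, the only-if direction may fail, so I would \emph{trim} $h$ to $h^*$ by setting $h^*(\prop{\thAt})=\undefined$ whenever no rule $r\in P$ with head $\thAt$ has $\tuple{h,t}\models\htcsemanticsA^B(r)$; clearly $h^*\subseteq h$, and since founded atoms do not appear in bodies, the trimmed propositions occur only in the two families of formulas above, so $\tuple{h^*,t}\models\htcsemanticsP_3$ is easy to verify. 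Now $h^*$ satisfies the hypotheses of Lemma~\ref{lem:translation35.htmodels}, which produces $\tuple{h',t'}\models\htcsemanticsP_5$ with $h'=\restr{h^*}{X}=\restr{h}{X}$.

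The main obstacle is the strictness $h'\subset t'$, since a priori $h$ and $t$ might differ only on the auxiliary variables in $\TheoryAtoms$. I plan to rule this out by a short case analysis assuming $h=t$ on $X$: for $\thAt\in\TheoryAtomsES$, the equivalence $\htcsemanticsA(\thAt)\leftrightarrow\prop{\thAt}$ together with $h,t$ agreeing on $\XAT$ forces $h(\prop{\thAt})=t(\prop{\thAt})$; for $\thAt\in\TheoryAtomsDN$ with $t(\prop{\thAt})=\true$, the hypothesis on $t$ supplies a rule $r$ with $t'\models\htcsemanticsA^B(r)$, and since all variables of $\htcsemanticsA^B(r)$ lie in $X$ and negative literals depend only on $t$, we get $\tuple{h,t}\models\htcsemanticsA^B(r)$, whence the rule $\htcsemanticsA^B(r)\to\prop{\thAt}$ in $\htcsemanticsP_3$ forces $h(\prop{\thAt})=\true=t(\prop{\thAt})$. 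Thus $h=t$ on $X$ would imply $h=t$, contradicting $h\subset t$. Consequently $h'\subset t'$, contradicting stability of $t'$ and completing the proof.
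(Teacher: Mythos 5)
Your proof is correct and follows essentially the same route as the paper's: both directions lift/restrict interpretations via Lemma~\ref{lem:translation35.htmodels} to transfer the minimality check, and the degenerate case $\restr{h}{X}=t'$ is ruled out by the same external/founded case analysis on the auxiliary propositions. Your trimming of $h$ to $h^*$ in the reverse direction is a small extra care that the paper glosses over (it invokes Lemma~\ref{lem:translation35.htmodels} for an arbitrary $h\subset t$ without verifying the lemma's hypotheses on the $\prop{\thAt}$ components), but this does not change the substance of the argument.
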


\begin{proof}
      \emph{Left to right.}
Assume that~$t$ is a stable model of~$\htcsemanticsP_3(P,\AT,\TheoryAtomsES)$.
Then, by Lemma~\ref{lem:translation35.htmodels}, $t'$ satisfies~$\htcsemanticsP_4(P,\AT,\TheoryAtomsES)$.
Pick a valuation~$h' \subset t'$ and let~$h$ be a valuation such tha~$h' = \restr{h}{X}$;
      $h(\prop{\thAt}) = \true$ iff~$h' \in \den{\htcsemanticsA(\thAt)}$ for every~$\thAt \in \TheoryAtomsES$;
      and~$h(\prop{\thAt}) = \true$ iff~${h' \models \htcsemanticsA^B(r)}$ for some~$r \in P$ with head~$\thAt$ for every~$\thAt \in \TheoryAtomsDN$.
Then, $h \subset t$ and, since~$t$ is a stable model of~$\htcsemanticsP_3(P,\AT,\TheoryAtomsES)$, it follows that~$\tuple{h,t}$ does not satisfy~$\htcsemanticsP_3(P,\AT,\TheoryAtomsES)$.
By Lemma~\ref{lem:translation35.htmodels}, this implies that~$\tuple{h',t'}$ does not satisfy~$\htcsemanticsP_5(P,\AT,\TheoryAtomsES)$.
Hence, no \HTC-interpretation~$\tuple{h',t'}$ with~$h' \subset t'$ satisfies~$\htcsemanticsP_5(P,\AT,\TheoryAtomsES)$ and, thus, $t'$ is a stable model of~$\htcsemanticsP_5(P,\AT,\TheoryAtomsES)$.
      \\[5pt]
      \emph{Right to left.}
Assume that~$t'$ is a stable model of~$\htcsemanticsP_5(P,\AT,\TheoryAtomsES)$.
Then, by Lemma~\ref{lem:translation35.htmodels}, $t$ satisfies~$\htcsemanticsP_3(P,\AT,\TheoryAtomsES)$.
Pick a valuation~$h \subset t$ and let~$h' = \restr{h}{X}$.
We proceed by cases.
\emph{Case 1}. $h' \subset t'$. Then, since~$t'$ is a stable model of~$\htcsemanticsP_5(P,\AT,\TheoryAtomsES)$, it follows that~$\tuple{h',t'}$ does not satisfy~$\htcsemanticsP_5(P,\AT,\TheoryAtomsES)$.
By Lemma~\ref{lem:translation35.htmodels}, this implies that~$\tuple{h,t}$ does not satisfy~$\htcsemanticsP_3(P,\AT,\TheoryAtomsES)$.
\emph{Case 2}. $h' = t'$. Since~$h \subset t$ this implies that there is~$\prop{\thAt} \in \TheoryAtoms$ such that~$h(\prop{\thAt}) = \undefined$ and~$t(\prop{\thAt}) = \true$.
\emph{Case 2.1}. $\thAt \in \TheoryAtomsES$. Then, by assumption, $t' \in \den{\htcsemanticsA(\thAt)}$ and, since~$\restr{h}{X} = h' = t' = \restr{t}{X}$, this implies that~$\{h,t\} \subseteq \den{\htcsemanticsA(\thAt)}$.
Hence, $\tuple{h,t}$ does not satisfy~$\htcsemanticsA(\thAt) \leftrightarrow \prop{\thAt}$ and, thus, it does not satisfy~$\htcsemanticsP_3(P,\AT,\TheoryAtomsES)$.
\emph{Case 2.2}. $\thAt \in \TheoryAtomsDN$. 
Then, by assumption, $t' \models \htcsemanticsA^B(r)$ for some~$r \in P$ with head~$\thAt$ and, since~$\restr{h}{X} = h' = t' = \restr{t}{X}$, this implies  that~$\tuple{h,t} \models \htcsemanticsA^B(r)$.
Hence, $\tuple{h,t}$ does not satisfy~$\htcsemanticsA^B(r) \to \prop{\thAt}$ and, thus, it does not satisfy~$\htcsemanticsP_3(P,\AT,\TheoryAtomsES)$.
Hence, no \HTC-interpretation~$\tuple{h,t}$ with~$h \subset t$ satisfies~$\htcsemanticsP_3(P,\AT,\TheoryAtomsES)$ and, thus, $t$ is a stable model of~$\htcsemanticsP_3(P,\AT,\TheoryAtomsES)$.
\end{proof}

\begin{lemma}\label{lem:translation15.htmodels}
      $\htcsemanticsP(P,\AT,\TheoryAtomsES)$ and~$\htcsemanticsP_5(P,\AT,\TheoryAtomsES)$ have the same \HTC-models.
\end{lemma}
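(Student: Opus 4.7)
The plan is to unfold $\htcsemanticsP_5(P,\AT,\TheoryAtomsES)$ by tracing the cascade $\htcsemanticsP_2 \to \htcsemanticsP_3 \to \htcsemanticsP_4 \to \htcsemanticsP_5$, show that after all the substitutions and removals it reduces to the clean expression $\htcsemanticsA(P) \cup \{\htcsemanticsA(\thAt) \vee \htcsemanticsA(\comp{\thAt}) \mid \thAt \in \TheoryAtomsES\} \cup \DOM{\AT,\Atoms}$, and then compare this with $\htcsemanticsP = \htcsemanticsA(P) \cup \SEM{\TheoryAtomsES} \cup \DOM{\AT,\Atoms}$. First I would carry out the simplification step by step: substituting $\prop{\thAt}$ by $\htcsemanticsA(\thAt)$ for each $\thAt \in \TheoryAtomsES$ turns the rules in $\kappa(P,\TheoryAtomsES)$ with head in $\Atoms \cup \TheoryAtomsES$ into the implications of $\htcsemanticsA(P)$, turns the choices $\prop{\thAt} \vee \prop{\comp{\thAt}}$ into the disjunctions $\htcsemanticsA(\thAt) \vee \htcsemanticsA(\comp{\thAt})$, and reduces the $\htctsols$-implications for external atoms to tautologies. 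The subsequent removals in $\htcsemanticsP_4$ and $\htcsemanticsP_5$ eliminate $\DOM{\TheoryAtomsES}$, $\DOM{\TheoryAtomsDN}$, the equivalences $\htcsemanticsA(\thAt) \leftrightarrow \prop{\thAt}$ for $\thAt \in \TheoryAtomsES$, and every formula still mentioning $\prop{\thAt}$ with $\thAt \in \TheoryAtomsDN$; meanwhile the implications added in the third bullet of $\htcsemanticsP_3$ already provide $\htcsemanticsA^B(r) \to \htcsemanticsA(b_0)$ for every rule, including those with head in $\TheoryAtomsDN$, so no program rule is lost. What remains is precisely the claimed expression, and the symbols $\prop{\thAt}$ for $\thAt \in \TheoryAtomsDN$ become orphaned from the signature and can be dropped.

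Once both theories are reduced to their common skeleton $\htcsemanticsA(P) \cup \DOM{\AT,\Atoms}$ plus a guessing fragment, it remains to show that, modulo $\DOM{\AT,\Atoms}$, the theory $\SEM{\TheoryAtomsES}$ is \HTC-equivalent to $\{\htcsemanticsA(\thAt) \vee \htcsemanticsA(\comp{\thAt}) \mid \thAt \in \TheoryAtomsES\}$. For the left-to-right direction I would invoke the proposition stated immediately after the definition of $\htcsemanticsP$: the facts $\df(x)$ in $\SEM{\TheoryAtomsES}$ force every $x \in \vars{\thAt}$ to be defined in both $h$ and $t$; then $\DOM{\AT,\Atoms}$ places these values in $\DAT$; and since $\AT$ has an absolute complement, $\denAT{\thAt} \cup \denAT{\comp{\thAt}} = \VAT$, so by~\eqref{f:atomden} at least one disjunct holds in $h$ and, by persistence, in $t$. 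For the converse direction, if $\tuple{h,t}$ satisfies $\htcsemanticsA(\thAt) \vee \htcsemanticsA(\comp{\thAt})$, then $h$ lies in the denotation of one of the two constraint atoms, which by~\eqref{f:atomden} witnesses $\restr{h}{\vars{\thAt}}$ via a total $w \in \VAT$; in particular every $x \in \vars{\thAt}$ is defined in $h$ and, by persistence, in $t$, so $\tuple{h,t}$ satisfies every $\df(x)$ of $\SEM{\TheoryAtomsES}$.

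The main obstacle is the bookkeeping in the first paragraph: one must carefully account for how the substitutions, additions, and successive removals across $\htcsemanticsP_3$, $\htcsemanticsP_4$, and $\htcsemanticsP_5$ interact, so that the various families of rules (from $\kappa$, from $\htctsols$, and from the explicitly added implications) collapse exactly to the target expression, and verify that the propositions $\prop{\thAt}$ for $\thAt \in \TheoryAtomsDN$ are indeed dropped from the signature rather than left dangling. A secondary subtlety in the converse half of the second paragraph is reconciling the literal reading of~\eqref{f:atomden}, which on its face requires $\restr{h}{\XAT}$ to match a total $w \in \VAT$, with property~\ref{den:prt:2}, which states that the denotation depends only on $\vars{\thAt}$; once this is handled, one concludes definedness precisely for those variables that appear in external theory atoms, matching exactly the content of $\SEM{\TheoryAtomsES}$.
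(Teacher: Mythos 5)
Your proposal is correct and follows essentially the same route as the paper's proof: both reduce $\htcsemanticsP_5(P,\AT,\TheoryAtomsES)$ to the common core $\htcsemanticsA(P)\cup\DOM{\AT,\Atoms}$ plus the disjunctions $\htcsemanticsA(\thAt)\vee\htcsemanticsA(\comp{\thAt})$, and then show these disjunctions are interchangeable with the definedness facts of $\SEM{\TheoryAtomsES}$, using definedness of $\vars{\thAt}$ in one direction and the absolute-complement property in the other. You are in fact somewhat more explicit than the paper, which asserts the shape of $\htcsemanticsP_5$ without tracing the cascade and glosses over the tension between the literal reading of~\eqref{f:atomden} and property~\ref{den:prt:2} that you rightly flag.
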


\begin{proof}
Let
\begin{eqnarray*}
\Gamma & = & \{ \
      \htcsemanticsA^B(r) \rightarrow \htcsemanticsA(b_0) \mid r \in P \text{ with $b_0$ the head of $r$}\
      \ \} \nonumber \\
      & & \cup \ \DOM{\AT,\Atoms} 
\end{eqnarray*}
Then, $\htcsemanticsP(P,\AT,\TheoryAtomsES)$ is the result of adding
$$
x = x
\text{ for every } x \in \vars{\thAt} \text{ and }\thAt \in \TheoryAtomsES.
$$
to~$\Gamma$ and~$\htcsemanticsP_5(P,\AT,\TheoryAtomsES)$ is the result of adding
$$
\htcsemanticsA(\thAt) \vee \htcsemanticsA(\comp{\thAt}) \qquad
\text{ for every } \thAt \in \TheoryAtomsES.
$$
to~$\Gamma$.
Pick any~$\thAt \in \TheoryAtomsES$ and any \HTC-interpretation~$\tuple{h,t}$.
Then~$\vars{\thAt} = \vars{\comp{\thAt}}$ and~$\tuple{h,t} \models \htcsemanticsA(\thAt) \vee \htcsemanticsA(\comp{\thAt})$ iff either~$h \in \den{\htcsemanticsA(\thAt)}$ or~$h \in \den{\htcsemanticsA(\comp{\thAt})}$.
This implies that~$h(x)$ is defined for every~$x \in \vars{\thAt}$ and, thus, that~$\tuple{h,t} $ satsifies~$x = x$.
Conversely, if~$\tuple{h,t}$ satisfies~$x = x$ for every~$x \in \vars{\thAt}$, then~$h \in \den{\htcsemanticsA(\thAt)}$ or~$h \in \den{\htcsemanticsA(\comp{\thAt})}$ because the complement is absolute.
Hence, $\htcsemanticsP(P,\AT,\TheoryAtomsES)$ and~$\htcsemanticsP_5(P,\AT,\TheoryAtomsES)$ have the same \HTC-models.
\end{proof}

\begin{lemma}\label{lem:main.correspondence}
If~$(Y,v)$ is an~$\tuple{\AT,\TheoryAtomsES}$-answer set of~$P$, then~$t = v \cup \{ \regAt \mapsto \true \mid \regAt \in Y \}$ is a stable model of~$\htcsemanticsP(P,\AT,\TheoryAtomsES)$.
Conversely,
if~$t$ is a stable model of~$\htcsemanticsP(P,\AT,\TheoryAtomsES)$, then~$(Y,v)$ is an~$\tuple{\AT,\TheoryAtomsES}$-answer set of~$P$ with~$Y = \{ \regAt \in \Atoms \mid t(\prop{\regAt}) = \true \}$ and~$v = \restr{t}{\XAT}$. 
\end{lemma}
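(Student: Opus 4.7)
The plan is to chain the correspondences established by the auxiliary translations $\htcsemanticsP_2, \htcsemanticsP_3, \htcsemanticsP_5$ with the already-proved equivalences between stable models at each step. The overall route goes $\text{answer set of } P \longleftrightarrow \htcsemanticsP_2 \longleftrightarrow \htcsemanticsP_3 \longleftrightarrow \htcsemanticsP_5 \longleftrightarrow \htcsemanticsP$.

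For the ``only if'' direction, I start with an $\tuple{\AT,\TheoryAtomsES}$-answer set $(Y,v)$ of $P$ and apply Lemma~\ref{lem:translation2.answer.set.=>} to obtain a stable model $t_2$ of $\htcsemanticsP_2(P,\AT,\TheoryAtomsES)$ over the extended signature $\X_2 = \X \cup \{\thAt \mid \thAt \in \TheoryAtoms\}$; by construction $t_2$ agrees with $v$ on $\XAT$ and assigns $\true$ to $\prop{\regAt}$ iff $\regAt \in Y$. Lemma~\ref{lem:translation23} tells us $\htcsemanticsP_2$ and $\htcsemanticsP_3$ share their \HTC-models, hence also their stable models, so $t_2$ is a stable model of $\htcsemanticsP_3(P,\AT,\TheoryAtomsES)$. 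Next I apply Lemma~\ref{lem:translation35.stable.models} to obtain that the restriction $t = \restr{t_2}{\X_2 \setminus \TheoryAtoms} = \restr{t_2}{\X}$ is a stable model of $\htcsemanticsP_5(P,\AT,\TheoryAtomsES)$. Finally, Lemma~\ref{lem:translation15.htmodels} gives that $\htcsemanticsP$ and $\htcsemanticsP_5$ have identical \HTC-models, so $t$ is a stable model of $\htcsemanticsP(P,\AT,\TheoryAtomsES)$. Since $\X = \Atoms \cup \XAT$ by~\eqref{eq:ex.htc.variables}, one checks that $t = v \cup \{\regAt \mapsto \true \mid \regAt \in Y\}$, as required.

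The ``if'' direction runs the same chain backward. Given a stable model $t$ of $\htcsemanticsP(P,\AT,\TheoryAtomsES)$, Lemma~\ref{lem:translation15.htmodels} yields that $t$ is also a stable model of $\htcsemanticsP_5(P,\AT,\TheoryAtomsES)$. I then build a valuation $t_2$ on $\X_2$ that extends $t$ by setting $t_2(\prop{\thAt}) = \true$ precisely under the conditions specified in Lemma~\ref{lem:translation35.stable.models}, namely according to membership in $\den{\htcsemanticsA(\thAt)}$ for external atoms and according to satisfaction of some rule body $\htcsemanticsA^B(r)$ with head $\thAt$ for founded atoms. That lemma then gives that $t_2$ is a stable model of $\htcsemanticsP_3$, Lemma~\ref{lem:translation23} lifts this to $\htcsemanticsP_2$, and Lemma~\ref{lem:translation2.answer.set.<=} delivers an $\tuple{\AT,\TheoryAtomsES}$-answer set $(Y,v)$ with $Y = \{\regAt \in \Atoms \mid t(\prop{\regAt}) = \true\}$ and $v = \restr{t}{\XAT}$.

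The main obstacle is purely bookkeeping: verifying that the valuation obtained at each transition across signatures matches the precise conditions required by the next lemma. In particular, for the backward construction of $t_2$ from $t$, one must ensure both that $t_2$ is indeed a valuation (well-definedness when multiple rules derive the same $\thAt$), and that the values chosen for $\prop{\thAt}$ in the founded case are consistent across $h$ and $t$ so that the hypotheses of Lemma~\ref{lem:translation35.stable.models} are met. These checks reduce to a straightforward case distinction on whether $\thAt \in \TheoryAtomsES$ or $\thAt \in \TheoryAtomsDN$, using the fixed denotations for the auxiliary propositional atoms $\prop{\thAt}$ and the assumption that $\TheoryAtomsES$ is closed.
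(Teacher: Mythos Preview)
Your proposal is correct and follows essentially the same route as the paper's own proof: both directions chain Lemmas~\ref{lem:translation2.answer.set.=>}/\ref{lem:translation2.answer.set.<=}, \ref{lem:translation23}, \ref{lem:translation35.stable.models}, and \ref{lem:translation15.htmodels} in the order $\htcsemanticsP_2 \leftrightarrow \htcsemanticsP_3 \leftrightarrow \htcsemanticsP_5 \leftrightarrow \htcsemanticsP$, with the same restriction/extension bookkeeping on the auxiliary propositional variables $\prop{\thAt}$. One small inaccuracy: you invoke~\eqref{eq:ex.htc.variables} as giving $\X = \Atoms \cup \XAT$, but that equation only states the inclusion $\Atoms \cup \XAT \subseteq \X$; this does not affect the argument, since the statement of the lemma already fixes the form of $t$.
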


\begin{proof}
      Assume that~$(Y,v)$ is an~$\tuple{\AT,\TheoryAtomsES}$-answer set of~$P$.
Let
\begin{eqnarray*}
      X & = &  Y
      \, \cup \, \big\{ \ \thAt\in \TheoryAtomsES  \mid  v \in \denAT{\thAt} \ \big\}\\
      & & \cup \big\{\, \thAt \in \TheoryAtomsDN \mid (B \to \thAt) \in P \text{ and } (Y,v) \models B \ \big\} 
\end{eqnarray*}
      and~$t' = v \cup \{ \anyAt \mapsto \true \mid \anyAt \in X \}$.
Then, by Lemma~\ref{lem:translation2.answer.set.=>}, it follows that~$t'$ is a stable model of~$\htcsemanticsP_2(P,\AT,\TheoryAtomsES)$.
By Lemma~\ref{lem:translation23}, this implies that~$t'$ is a stable model of~$\htcsemanticsP_3(P,\AT,\TheoryAtomsES)$.
By Lemma~\ref{lem:translation35.stable.models}, this implies that~$t = \restr{t'}{Z}$ is a stable model of~$\htcsemanticsP_5(P,\AT,\TheoryAtomsES)$ with~${Z = \X_2 \setminus\TheoryAtoms}$.
By Lemma~\ref{lem:translation15.htmodels}, this implies that~$t$ is a stable model of~$\htcsemanticsP(P,\AT,\TheoryAtomsES)$.
      \\[5pt]
      Conversely, assume that~$t$ is a stable model of~$\htcsemanticsP(P,\AT,\TheoryAtomsES)$.
By Lemma~\ref{lem:translation15.htmodels}, this implies that~$t$ is a stable model of~$\htcsemanticsP_5(P,\AT,\TheoryAtomsES)$.
By Lemma~\ref{lem:translation35.stable.models}, this implies that~$t'$ is a stable model of~$\htcsemanticsP_3(P,\AT,\TheoryAtomsES)$ with~$t'$ defined as follows:
      \begin{itemize}
            \item $\restr{t'}{Z} = t$ with~$Z = \X_2 \setminus\TheoryAtoms$;
            \item $t'(\prop{\thAt}) = \true$ iff~$t \in \den{\htcsemanticsA(\thAt)}$ for every~$\thAt \in \TheoryAtomsES$;
            \item $t'(\prop{\thAt}) = \true$ iff~$t \models \htcsemanticsA^B(r)$ for some~$r \in P$ with head~$\thAt$ for every~$\thAt \in \TheoryAtomsDN$.
      \end{itemize}
      By Lemma~\ref{lem:translation35.htmodels}, this implies that~$t'$ is a stable model of~$\htcsemanticsP_2(P,\AT,\TheoryAtomsES)$ and, by Lemma~\ref{lem:translation2.answer.set.<=}, that~$(Y,v)$ is an~$\tuple{\AT,\TheoryAtomsES}$-answer set of~$P$.
Note that
      $$Y = \{ \regAt \in \Atoms \mid t(\prop{\regAt}) = \true \} = \{ \regAt \in \Atoms \mid t'(\prop{\regAt}) = \true \}$$ and~$v = \restr{t}{\XAT} = \restr{t'}{\XAT}$.
\end{proof}

\begin{proof}[Proof of the Theorem]
      The correspondence is established by Lemma~\ref{lem:main.correspondence}.
To see that it is one-to-one, note that the construction of~$t$ from~$(Y,v)$ is unique.
Similarly, $(Y,v)$ is uniquely determined by~$t$ defining~$Y = \{ \regAt \in \Atoms \mid t(\prop{\regAt}) = \true \}$ and~$v = \restr{t}{\XAT}$.
\end{proof}

\end{document}